\documentclass[mnsc,sglanonrev]{preprint}
\renewcommand{\theARTICLETOP}{}
\RRHSecondLine{}
\LRHSecondLine{} 

\usepackage[T1]{fontenc}
\usepackage{lmodern}

\RequirePackage{endnotes}

\OneAndAHalfSpacedXI

\usepackage{algorithm}
\usepackage[
    noEnd=false,
    indLines=true
]{algpseudocodex}
\algrenewcommand\algorithmiccomment[1]{\quad\quad\quad\quad{$\triangleright$ \it \color{gray} #1}}

\usepackage{tikz}
\usepackage{fontawesome5}
\definecolor{mainblue}{RGB}{31,96,135}
\definecolor{accentred}{RGB}{178,20,20}
\definecolor{softgreen}{RGB}{135,190,120}

\usepackage{multirow}
\usepackage{makecell}
\usepackage{subcaption}
\usepackage{tabularx}

\usepackage{natbib}
 \bibpunct[, ]{(}{)}{,}{a}{}{,}\def\bibfont{\small}

\usepackage{amsmath, amssymb, amsfonts}
\usepackage{mathtools}
\DeclarePairedDelimiter{\norm}{\lVert}{\rVert}

\usepackage{bm}

\usepackage{bbm}
\newcommand{\indc}{{\mathbbm{1}}}
\usepackage{tikz}
\usetikzlibrary{shapes.geometric, shapes.arrows, arrows.meta, positioning, calc, shadows}

\usepackage{booktabs}

\usepackage{etoolbox}
\makeatletter
\patchcmd{\th@TH}{\hspace*{1em}}{}{}{}
\patchcmd{\th@TH}{\hspace*{1em}}{}{}{}
\patchcmd{\th@EX}{\hspace*{1em}}{}{}{}
\patchcmd{\th@EX}{\hspace*{1em}}{}{}{}
\patchcmd{\th@THkey}{\hspace*{1em}}{}{}{}
\patchcmd{\th@THkey}{\hspace*{1em}}{}{}{}
\patchcmd{\th@EXkey}{\hspace*{1em}}{}{}{}
\patchcmd{\th@EXkey}{\hspace*{1em}}{}{}{}
\patchcmd{\proof}{\hspace*{1em}}{}{}{}
\makeatother
\usepackage{enumitem}

\usepackage[dvipsnames]{xcolor}
\usepackage{colortbl}
\definecolor{tablebestbg}{HTML}{D7EAF8}
\definecolor{tablesecondbg}{HTML}{FBE5C8}
\newcommand{\bestnum}[1]{\cellcolor{tablebestbg}#1}
\newcommand{\secondnum}[1]{\cellcolor{tablesecondbg}#1}

\makeatletter
\renewcommand\paragraph{\@startsection{paragraph}{4}{\z@}{0pt}{-0.5em}{\normalfont\normalsize\bfseries}}
\makeatother

\definecolor{refblue}{HTML}{003366}
\usepackage[hidelinks]{hyperref}
\DeclareRobustCommand{\ConfRO}{\texorpdfstring{\ifmmode\mathrm{ConfRO}\else\textup{ConfRO}\fi}{ConfRO}}
\DeclareRobustCommand{\ConfRS}{\texorpdfstring{\ifmmode\mathrm{ConfRS}\else\textup{ConfRS}\fi}{ConfRS}}
\hypersetup{
    pdftitle={Conformal Robustness in Prediction-Driven Decision-Making},
    pdfauthor={Lingjie Zhao, Hansheng Jiang, Wei Qi},
    colorlinks,
    breaklinks,
    linkcolor=Mahogany,
    urlcolor=black,
    anchorcolor=black,
    citecolor=refblue
}

\EquationsNumberedThrough    

\TheoremsNumberedThrough     \ECRepeatTheorems

\begin{document}

\RUNAUTHOR{Zhao, Jiang, and Qi}

\RUNTITLE{Conformal Robustness in Prediction-Driven Decision-Making}

\TITLE{Conformal Robustness in Prediction-Driven Decision-Making}

\ARTICLEAUTHORS{
\AUTHOR{Lingjie Zhao}
\AFF{Department of Industrial Engineering, Tsinghua University, Beijing 100084, China \\ \EMAIL{zhaolj25@mails.tsinghua.edu.cn}}

\AUTHOR{Hansheng Jiang}
\AFF{Rotman School of Management, University of Toronto, Toronto, Ontario M5S 3E6, Canada
\\ \EMAIL{hansheng.jiang@rotman.utoronto.ca}}

\AUTHOR{Wei Qi}
\AFF{Department of Industrial Engineering, Tsinghua University, Beijing 100084, China\\ Desautels Faculty of Management, McGill University, Montreal, Quebec H3A 1G5, Canada\\ \EMAIL{qiw@tsinghua.edu.cn}}
\vspace{-1em}

} 

\ABSTRACT{Modern prediction-driven decision systems often rely on black-box predictors, but a point forecast alone does not provide the uncertainty scale required for robust downstream decision-making. We build a score-calibrated robustness framework that converts any fixed point predictor into a decision-relevant uncertainty representation through distribution-free conformal calibration. We use the conformal score, rather than a particular uncertainty set, as the primitive unit of robustness. The same score determines coverage-calibrated uncertainty sets for reliability-based robust optimization and normalizes target violations in a target-oriented formulation, Conformal Robust Satisficing. This formulation induces a conformal fragility measure that quantifies how rapidly performance deteriorates as the realized parameter departs from the forecast on the conformal score scale. We establish decision-efficiency bounds separating the effects of prediction accuracy and score design, and data-driven target-violation certificates for satisficing decisions. For objective-uncertainty problems under standard convexity and duality conditions, we show that the reliability-based and target-oriented formulations parameterize the same score-calibrated robust decision frontier. This equivalence yields a data-driven mapping between reliability levels and acceptable targets and characterizes the marginal cost of robustness. Synthetic experiments validate the theoretical guarantees and illustrate the reliability--target correspondence. A real-data online-grocery case study demonstrates how the interface combines deep-learning demand forecasts with tractable inventory optimization, thereby improving reliability and reducing operational costs. Overall, our work shows that conformal scores endow fixed black-box predictors with an interpretable uncertainty scale for downstream decision-making while enabling reliability guarantees, acceptable-target selection, and fragility analysis within a unified framework.
}

\KEYWORDS{prediction-driven, contextual information, robustness, target-oriented, conformal score}

\maketitle

\section{Introduction}

Abundant contextual data have made prediction-driven decision-making a central paradigm in modern operations. Increasingly, predictions are produced by high-capacity artificial intelligence models that can extract information at scale from large, high-dimensional, and unstructured data sources, including text, images, and videos \citep{jordan2015machine,lecun2015deep}. This predictive power, however, often comes at the cost of interpretability and decision reliability. A downstream decision maker may receive a point forecast with strong empirical predictive performance, but without an interpretable characterization of its forecast error or a tractable robustness representation that can be incorporated into an optimization model \citep{rudin2019stop}.

In many operational settings, prediction-driven decision-making is implemented through a modular pipeline: a forecasting model is placed upstream of an optimization model. Given contextual information $\bm{z}$, a possibly complex predictor $\hat f$ produces a point forecast $\hat{\bm d}=\hat f(\bm z)$ of the uncertain parameter $\bm d$, and the downstream optimization model uses $\hat{\bm d}$ to choose an operational decision. This modular design is attractive because it allows firms to exploit rich product, customer, temporal, and spatial information using modern statistical and machine-learning tools. It also preserves the functional division of labor commonly observed in cross-team collaborations within firms \citep{papier2021effect}. Yet this separation creates a fundamental robustness question: predictors are typically trained to reduce forecast loss, but forecast loss alone does not provide a decision-relevant \emph{scale} for uncertainty. \emph{How can we leverage the power of existing predictors while preserving the reliability levels and performance targets required by decision makers?}

This missing \emph{scale} is central to robust decision-making. In non-contextual and prediction-free settings, robust optimization protects a decision over an uncertainty set whose radius, budget, or geometry encodes the desired level of protection \citep{ben2009robust}. \emph{Robust satisficing} starts from a different managerial primitive: an acceptable target and a fragility measure that quantifies how quickly the target may be violated as uncertainty grows \citep{brown2009satisficing}. In both views, the protection level is meaningful only after uncertainty has been expressed on an appropriate scale. In prediction-driven applications, however, this scale is not identified by the point forecast alone. Forecast errors may be heteroscedastic, asymmetric, and context-specific; moreover, the predictor may be a black box whose internal structure does not yield either an interpretable error model or an optimization-friendly uncertainty set \citep{tyralis2024review}. Thus, the central difficulty is not merely that forecasts may be inaccurate but that point forecasts do not specify how their inaccuracies should be measured, calibrated, and used for robust decision-making.

Existing robust decision frameworks provide important building blocks, but the post-prediction decision challenge posed by arbitrary black-box predictors has yet to be fully resolved. An emerging literature uses conformal prediction, a modern statistical calibration method, to construct uncertainty sets for optimization problems \citep{vovk2012conditional,johnstone2021conformal}, a class of approaches that we collectively frame as Conformal Robust Optimization (\ConfRO{}). These advances show that predictive uncertainty can be translated into statistically calibrated protection for downstream optimization. However, existing uses of conformal calibration remain primarily reliability-based or uncertainty-set-centric: the conformal score is used to size an uncertainty set, but not as a decision primitive that \emph{jointly} governs reliability, target violation, and fragility. In particular, the integration of conformal scores with acceptable targets and fragility measures has not been explored in this stream of literature. As a result, reliability-based robust optimization and target-oriented robust satisficing remain expressed in different coordinates. What is missing is a score-calibrated frontier that translates between these coordinates in prediction-driven decision-making, assigning each reliability level an implied acceptable target, assigning each acceptable target an implied reliability level, and quantifying the marginal cost of moving along the frontier.

We bridge this gap by using the conformal score not merely as a calibration tool but as the common unit of robustness. Given a point predictor $\hat f$, the conformal score $s_{\hat f}(\bm z,\bm d)$ measures the discrepancy between the realized parameter $\bm d$ and its forecast $\hat f(\bm z)$. Standard split conformal calibration computes a quantile $\eta_\alpha$ from the empirical distribution of calibration scores, yielding the context-dependent uncertainty set $\mathcal U_\alpha(\bm z)=\{\bm d:s_{\hat f}(\bm z,\bm d)\le \eta_\alpha\}$, which enjoys finite-sample marginal coverage under exchangeability.
Our key step is to use the same score to normalize target violations in a parameter-uncertainty robust satisficing model. A decision is fragile if a small score-measured deviation from the forecast can generate a large violation of the acceptable target. Thus, the conformal score plays two roles simultaneously: it calibrates the uncertainty set used for reliable optimization and defines the error scale on which target-oriented fragility is assessed.

This perspective encompasses two complementary implementations. Conformal robust optimization takes a reliability level $\alpha$ as input and optimizes worst-case performance over the conformal uncertainty set $\mathcal U_\alpha(\bm z)$. Conformal Robust Satisficing (\ConfRS{}) instead takes an acceptable target $\tau$ as input and minimizes the fragility parameter $k$ required to ensure
\[
    a(\bm x,\bm d)-\tau
    \le
    k\,\mathfrak{s}_{\hat f}(\bm d,\hat{\bm d}),
    \qquad \forall \bm d\in\mathfrak D,
\]
where $a(\bm x,\bm d)$ is the objective value and $\mathfrak{s}_{\hat f}$ denotes the score-induced deviation from the forecast. The optimal value $k^*$ measures the worst-case target violation per unit of score deviation; smaller $k^*$ therefore corresponds to lower fragility. Although \ConfRO{} and \ConfRS{} are motivated by different managerial objectives, we show that they are connected through a common score-calibrated robustness frontier.

Our main contributions are as follows.
\begin{enumerate}
    \item
    \emph{A post-prediction robustness interface based on the conformal score function.}
    Given any fitted contextual point predictor, the conformal score evaluates predictive error through a user-defined conformal score function. We show that the finite-sample coverage guarantee delivered by conformal calibration of these scores can be translated into guarantees for the resulting optimization decisions. This perspective connects to prior work that uses conformal prediction to construct uncertainty sets, while extending the role of conformal scores beyond set construction: they serve as a modular, post-prediction interface through which predictive uncertainty is calibrated and propagated to robustness in decision-making.

    \item
    \emph{A score-calibrated, target-oriented formulation with theoretical guarantees.}
    We introduce Conformal Robust Satisficing (\ConfRS{}), a target-oriented counterpart to conformal robust optimization that can also be built around black-box forecasts. Whereas \ConfRO{} takes a reliability level as input and optimizes over a prediction-centered uncertainty set, \ConfRS{} takes an acceptable target as input and minimizes the worst-case target violation per unit of score deviation. Furthermore, we show that our \ConfRS{} formulation yields a valid conformal \emph{fragility measure} in the prediction-driven decision-making paradigm. For convex \ConfRO{} programs satisfying the stated metric-score and outer-duality conditions and allowing uncertainty in the objective and constraints, we derive a decision-efficiency bound that separates the calibrated radius from objective and dual-weighted constraint sensitivities. We also establish data-driven target guarantees for \ConfRS{} and complement the framework with a score-selection procedure that preserves finite-sample validity.

    \item
    \emph{Reliability--target translation and marginal cost of reliability.}
    For problems in which uncertainty enters only the objective function and standard regularity conditions hold, we establish matched decision correspondences between \ConfRO{} and \ConfRS{}, identify when their optimal decision sets coincide, and derive a mapping between the coverage level and the satisficing target through the score distribution and the relevant value-function sensitivity. We also derive an explicit characterization of the marginal cost of score-radius robustness and reliability level. We show that the optimal \ConfRS{} fragility is the marginal cost of enlarging the score radius in the matched \ConfRO{} problem, and that the marginal cost of increasing reliability is the matched fragility divided by the local density of the conformal score distribution.

    \item
    \emph{Numerical studies with an application to a real-world multiperiod inventory problem in online grocery.}
    We validate the framework on a fractional knapsack benchmark, confirming the theoretical guarantees and realized-utility gains of 23.2\%--97.7\% over $k$-nearest-neighbor robust optimization (KNN-RO) and $k$-means robust optimization (KMeans-RO) baselines while nearly matching predict-then-optimize (PTO) performance under robust protection. The benchmark also illustrates an interesting step-like pattern through an explicit parameter mapping between \ConfRO{} and \ConfRS{}.
    We further conduct a case study on multiperiod inventory management in online grocery using large-scale real-world data and  industrial deep-learning demand forecasting models. The case study benchmarks \ConfRO{} across score geometries against baselines and implements \ConfRS{} by leveraging the additive period-cost structure to allocate a target across the planning horizon. Numerical studies also yield several implementation guidelines: strong base forecasts should be prioritized, flexible score geometries can reduce unnecessary conservatism, and residual scaling can be counterproductive when secondary error modeling adds noise.

\end{enumerate}

The remainder of the paper is organized as follows. Section~\ref{sec:literature} reviews the related literature. Section~\ref{sec:cus_framework} introduces the score-based calibration layer and establishes its basic properties. Section~\ref{sec:cro_crs_framework} presents \ConfRO{} and \ConfRS{}, and Section~\ref{sec:ConfRO_ConfRS_equivalence} studies their decision-level correspondence and the conditions for equivalence, characterizes marginal robustness, and develops the score-selection procedure. Section~\ref{sec:experiments} presents the numerical studies, and Section~\ref{sec:conclusion} concludes.

\section{Related Literature}
\label{sec:literature}

The interaction between prediction and optimization is central to contextual decision-making. The classical predict-then-optimize pipeline first estimates unknown problem parameters and then plugs the estimates into a deterministic optimization problem. Although simple and modular, this approach can amplify prediction error through the optimization step, a phenomenon closely related to the optimizer's curse \citep{smith2006optimizer}. Prediction-driven decision-making and post-prediction decision performance have been studied in various settings, including logistics, healthcare, and pricing \citep{liu2021time,hu2025prediction,albert2025post}. Smart predict-then-optimize and decision-focused learning address such prediction-optimization mismatch by training predictors with losses that reflect downstream decision quality \citep{elmachtoub2022smart,mandi2022decision}.
Instead of modifying the predictor's training objective, a complementary prediction-to-decision perspective arises in the learning-augmented algorithms literature \citep{mitzenmacher2022algorithms}, where predictions are used to improve algorithm performance even when the predictions are inaccurate. For example, \citet{jin2022online} study the consistency-robustness tradeoff in online matching with predictions. Our approach is closer in spirit to this latter perspective: we take the predictor as given and fixed; however, rather than exploiting problem-specific algorithmic structure, our goal is to provide a general-purpose robustness interface for a broad class of downstream decision-making problems.

Toward robustness in decision-making, robust optimization provides a principled approach by optimizing against the worst case over a prescribed uncertainty set. Foundational work on robust optimization designs uncertainty sets with computationally tractable geometries, including polyhedral, ellipsoidal, cardinality-constrained, and norm-constrained sets \citep{ben2009robust}. Recent advances further incorporate contextual information into robust optimization, often by encoding context as covariate vectors that enter the predictor, the uncertainty model, or the downstream decision model. Our perspective is instead explicitly \emph{post-prediction}: contextual information is absorbed by an upstream point predictor, which may be black-box and may take unstructured inputs, but the downstream decision model need not access that context directly. In this setting, the predictor's uncertainty and accuracy, though vital to decision-making, are not known a priori. Conformal prediction is therefore particularly well suited to our post-prediction perspective, as it provides a simple, predictor-agnostic procedure that wraps around any predictor while retaining finite-sample validity \citep{angelopoulos2023conformal}. Accordingly, the \ConfRO{} presented in this paper is aligned with an emerging literature that uses conformal prediction to construct uncertainty sets for optimization (\citealp{sun2023predict,patel2024conformal, chenreddy2024end,cai2025out}). What distinguishes this work is that the conformal score is treated as the central primitive: the same score determines uncertainty-set geometry, coverage calibration, and decision fragility. This score-calibrated view extends the conformal perspective beyond uncertainty set construction to a target-oriented satisficing formulation, \ConfRS{}, and identifies when reliability-parameterized and target-parameterized decisions lie on the same frontier.

The notion of \emph{satisficing}, dating back to \cite{simon1956rational}, describes a decision rule that selects an alternative meeting an aspiration level rather than exhaustively searching for an optimum. Building on this idea, robust satisficing asks whether a decision can meet a prescribed acceptable target with low fragility \citep{brown2009satisficing}.  This target-oriented perspective has recently been developed \emph{primarily for distributional ambiguity} \citep{long2023robust}, with applications in several operational problems \citep{zhou2022advance,cui2023target,sim2024analytics,fu2025location,ding2026robust}. The idea of satisficing is also studied in the context of the exploration–exploitation trade-off in online learning \citep{feng2024satisficing}. \citet{sim2024analytics} develop a residual-based distributionally robust satisficing model based on a parametric prediction model and introduce an estimation-fortification step to address uncertainty in the prediction coefficients. Our \ConfRS{} component instead treats an arbitrary fitted point predictor as fixed, measures realized-parameter deviations through a context-indexed score, and uses conformal calibration to provide finite-sample target certificates and a reliability--target interpretation; see Section~\ref{subsubsec:crs_vs_drs} for a further comparison.

Robust satisficing under \emph{parameter uncertainty} has received comparatively limited attention relative to its distributional counterpart. Related parameter-uncertainty perspectives include info-gap decision theory \citep{benhaim2006info} and joint estimation and robustness optimization (JERO) \citep{zhu2022joint}. These approaches rely on either a prescribed uncertainty horizon or an explicit estimation model, whereas our framework operates on the prediction-error scale of an arbitrary fixed predictor. Under suitable conditions, distributionally robust optimization (DRO) and distributionally robust satisficing can share the same solution family after translating between ambiguity radii and acceptable targets \citep{wang2025equivalence}. Our correspondence result in Section~\ref{sec:ConfRO_ConfRS_equivalence} aligns in spirit with this result in the distributional setting, but it differs in two ways: it quantifies uncertainty through the conformal score and incorporates contexts explicitly with arbitrary black-box predictors. Furthermore, we build on this correspondence to show that the score-induced fragility function is exactly the marginal cost of robustness and to characterize how the score distribution affects the value function's sensitivity to changes in the reliability level. Table~\ref{tab:closest_positioning} in Appendix~\ref{app:supplementary_discussion} synthesizes these comparisons across predictor structure, robustness object, and decision-level guarantees.

Conformal prediction, also known as conformal inference, is a simple yet principled statistical calibration method that provides finite-sample, distribution-free uncertainty quantification for essentially any point predictor under an exchangeability assumption \citep{shafer2008tutorial}. It has recently attracted increasing attention with the rise of deep learning and has found applications in high-stakes settings such as medical imaging diagnosis \citep{angelopoulos2024conformal}. Its predictor-agnostic nature makes it especially attractive for robustifying contextual decision-making, where the upstream predictor may be statistically powerful but analytically opaque. Recent works in the broader data science community have started exploring conformal prediction in various settings, including predict-then-optimize \citep{patel2024conformal}, risk-sensitive linear programs \citep{sun2023predict}, end-to-end optimization \citep{chenreddy2024end}, out-of-distribution optimization \citep{cai2025out}, inverse optimization \citep{lin2024conformal}, decision optimality assessment \citep{zhou2025conformalized}, and risk-averse agents \citep{kiyani2025decision}. Our work contributes to this emerging literature by showing that conformal calibration can be used not only to construct uncertainty sets, but also to define a common robustness scale. This scale yields coverage-calibrated uncertainty sets for \ConfRO{} and normalized target-violation fragility measures for \ConfRS{}, thereby linking reliability-based and target-oriented robust decision-making with a single score-calibrated interface.

\section{Problem Setup and the Score-Calibrated Robustness Interface}
\label{sec:cus_framework}

Section~\ref{sec:cus_framework} formalizes the post-prediction robustness problem that motivates our framework. A trained predictor converts the observed context into a point forecast, but the downstream optimizer must still decide how forecast errors should be measured, calibrated, and incorporated into the decision model. We first introduce robust prediction-driven decision-making as a problem of converting a fixed predictor into a decision-relevant robustness representation. We then review split conformal calibration and show how the conformal score provides the common unit that supports both reliability-based robust optimization and target-oriented robust satisficing in Section~\ref{sec:cro_crs_framework}.

\subsection{Robust Prediction-Driven Decision-Making Problem}
\label{sec:problem_setting}

We study a prediction-driven decision-making problem indexed by an observed context $\bm z\in\mathcal Z$ and an uncertain parameter $\bm d\in\mathfrak D\subseteq\mathbb R^J$. The context $\bm z$ is observed before the decision is made, whereas $\bm d$ is realized after the decision. Given a realization of $\bm d$, the downstream model evaluates a decision $\bm x\in\mathcal X$ through an objective function $a_0(\bm x,\bm d)$ and constraints $a_i(\bm x,\bm d)\le b_i$, $i\in[I]$. A predictor $\hat f:\mathcal Z\to\mathfrak D$, which has been trained on historical training data before decision making with new context ${\bm z}$, maps each context $\bm z$ to a point forecast $\hat{\bm d}=\hat f(\bm z)$.

The classical PTO method directly plugs the forecast $\hat{\bm d}=\hat f(\bm z)$ into the downstream optimization model to obtain the decision $\bm x^{\mathrm{PTO}}(\bm z)$, i.e.,
\[
    \bm x^{\mathrm{PTO}}(\bm z)\in
    \arg\min_{\bm x\in\mathcal X}
    \left\{
        a_0(\bm x,\hat{\bm d}):
        a_i(\bm x,\hat{\bm d})\le b_i,\ i\in[I]
    \right\}.
\]

This sequential and modular design is compatible with arbitrary predictors, including black-box models trained on rich contextual data. However, it leaves the downstream optimizer without a calibrated scale for forecast error. Consequently, the PTO decision can be unreliable or fragile: the optimizer is anchored at $\hat{\bm d}$ but has no principled information about which deviations from $\hat{\bm d}$ should be protected against, how large those deviations should be, or how they affect the objective and constraints. The issue is therefore not only whether the forecast is accurate, but also how forecast error should be translated into a robustness representation that is meaningful for the downstream decision.

This post-prediction challenge motivates two complementary robustness views. In the reliability-based view, the decision maker specifies a desired reliability level and seeks protection over a calibrated region around the forecast. In the target-oriented view, the decision maker specifies an acceptable target and evaluates how fragile a decision is to deviations from the forecast. Both views require the same missing object: a calibrated, decision-relevant scale for measuring deviations from $\hat{\bm d}$. We formalize this problem below.

\begin{definition}[Robust Prediction-Driven Decision-Making]
\label{def:robust_prediction_driven_decision_making}
Given a fixed predictor $\hat f$ and a downstream decision problem with uncertain parameter $\bm d$, robust prediction-driven decision-making is the task of transforming the point forecast $\hat{\bm d}(\bm z)=\hat f(\bm z)$ into a calibrated robustness representation and using this representation to choose decisions under either reliability-based or target-oriented robustness criteria.
\end{definition}

The following examples illustrate settings in which such a post-prediction robustness representation is needed and in which decision makers pursue reliability-based or target-oriented robustness.

\begin{example}[E-commerce Replenishment]
In e-commerce replenishment, $\bm d$ may represent future demand across products, stores, or fulfillment locations, and $\bm x$ may represent replenishment quantities. A demand forecasting model $\hat{f}$ can use product descriptions, images, promotion signals, review text, and temporal patterns to generate $\hat{\bm d}(\bm z)$ \citep{salinas2019deepar, ansari2025chronos2, qi2025timehf}. The downstream inventory model, however, still needs a calibrated scale for deciding how much protection around this forecast is warranted. A reliability-based manager may specify a service reliability level, such as 95\% calibrated protection against demand deviations, whereas a target-oriented manager may specify an inventory cost or stockout cost target and seek the least fragile replenishment decision relative to that target.
\end{example}

\begin{example}[Spatio-temporal Driver Dispatch]
In ride-hailing dispatch, $\bm d$ may represent future demand-supply imbalance across zones and time periods, and $\bm x$ may represent repositioning or fleet capacity-allocation decisions. A black-box predictor $\hat{f}$ can exploit weather, traffic, transit disruptions, and platform signals \citep{uber2025forecasting,XiKumar2024LyftForecasting}. Robust dispatch nevertheless requires a calibrated scale for deviations from the forecast. The decision maker may request a dispatch plan with a prescribed reliability level, or instead evaluate fragility relative to a waiting-time or fulfillment target.
\end{example}

We next review split conformal prediction because it supplies the statistical calibration ingredient needed for this post-prediction robustness problem. Section~\ref{sec:score_calibrated_interface} then shows how the conformal score becomes a common robustness unit for both decision-making views.

\subsection{Preliminaries: Conformal Score, Calibration, and Coverage Guarantees}
\label{sec:cp_preliminary}

Split conformal prediction uses held-out calibration data to convert a fixed point predictor into an uncertainty region with finite-sample marginal coverage \citep{shafer2008tutorial,angelopoulos2023conformal}.

\begin{definition}[Conformal Score Function]
A conformal score function, also referred to as a nonconformity score function, is a function
\begin{equation}
    s_{\hat f}:\mathcal Z\times\mathfrak D\to\mathbb R_{\ge 0},
\end{equation}
that assigns to each context-realization pair $(\bm z,\bm d)$ a nonnegative measure of discrepancy between the prediction $\hat f(\bm z)$ and the realized outcome $\bm d$. The predictor $\hat f$ and all fitted quantities entering $s_{\hat f}$ are fixed before the calibration stage.
\end{definition}

Larger values of $s_{\hat f}(\bm z,\bm d)$ indicate greater disagreement between the prediction and the realization. Let $\{(\bm Z_i,\bm D_i)\}_{i=1}^n$ be the calibration dataset, and denote the calibration scores by
\begin{equation}
    S_i=s_{\hat f}(\bm Z_i,\bm D_i),\qquad i=1,\ldots,n.
\end{equation}
The calibration scores $S_1, S_2, \dots, S_n$ are sorted in non-decreasing order as $S_{(1)}\le S_{(2)} \le \cdots\le S_{(n)}$ with the convention that $S_{(n+1)}:=+\infty$. The conformal quantile index is $k_\alpha:=\lceil(n+1)\alpha\rceil$, and the corresponding calibrated score threshold is $\eta_\alpha:=S_{(k_\alpha)}$. For a desired coverage level $\alpha\in(0,1)$ and test context $\bm Z_{\mathrm{test}}$, the conformal prediction set is defined as
$
    \mathcal U_\alpha(\bm Z_{\mathrm{test}})
    =
    \{\bm D\in\mathfrak D:s_{\hat f}(\bm Z_{\mathrm{test}},\bm D)\le \eta_\alpha\}.
$
The construction of $\mathcal U_\alpha(\bm Z_{\mathrm{test}})$ separates three roles: \emph{centering} through the forecast $\hat f(\bm z)$, \emph{shaping} through the score $s_{\hat f}$, and \emph{sizing} through the conformal threshold $\eta_\alpha$.

\begin{assumption}
\label{asp:exchangeability}
The calibration data $\{(\bm{Z}_i, \bm{D}_i)\}_{i=1}^n$ and the test point $(\bm{Z}_{\mathrm{test}}, \bm{D}_{\mathrm{test}})$ (also denoted by $(\bm{Z}_{n+1}, \bm{D}_{n+1})$) are exchangeable. That is, for any permutation $\pi$ of $\{1, \dots, n+1\}$, the joint distribution of the data points is invariant under $\pi$, i.e.,
\[
    \bigl((\bm{Z}_1, \bm{D}_1), \dots, (\bm{Z}_{n+1}, \bm{D}_{n+1})\bigr) \stackrel{d}{=}\bigl((\bm{Z}_{\pi(1)}, \bm{D}_{\pi(1)}), \dots, (\bm{Z}_{\pi(n+1)}, \bm{D}_{\pi(n+1)})\bigr).
\]
\end{assumption}

The exchangeability condition in Assumption~\ref{asp:exchangeability} is standard in conformal calibration and milder than i.i.d. because it permits symmetric dependence among observations. This distributional symmetry is the key condition under which the rank of the test score among the calibration scores is controlled nonparametrically.
Under Assumption~\ref{asp:exchangeability}, the foundational result in conformal prediction yields a distribution-free finite-sample marginal coverage guarantee (formal statement in Lemma~\ref{lem:conformal_coverage}),
\begin{equation}
  \mathbb P\bigl(\bm D_{\mathrm{test}}\in \mathcal U_\alpha(\bm Z_{\mathrm{test}})\bigr)\ge \alpha, \quad \text{where } \mathcal U_\alpha(\bm Z_{\mathrm{test}})=\{\bm d\in\mathfrak D:s_{\hat{f}}(\bm Z_{\mathrm{test}},\bm d)\le \eta_\alpha \}.\label{eq:marginal_coverage}
\end{equation}

For decision-making, the important consequence is that the set coverage guarantee in \eqref{eq:marginal_coverage} transfers directly to any decision certificate that is enforced over the conformal uncertainty set.

\begin{proposition}[Calibration-Decision Transfer]
    \label{prop:calibration_transfer}
    Suppose Assumption~\ref{asp:exchangeability} holds. Let $\tilde{\bm x}$ be any decision selected before observing
    $\bm D_{\mathrm{test}}$. Let
    $
        h_m(\tilde{\bm x},\bm d),\ m=1,\ldots,M,
    $
    be performance functions representing either constraints or objective functions. If, almost surely, $h_m(\tilde{\bm x},\bm d)\le 0$ holds for all $m=1,\ldots,M$ whenever $\bm d$ is in the conformal uncertainty set $\mathcal U_\alpha(\bm Z_{\mathrm{test}})$,
    then
    \begin{equation}
        \mathbb P\left\{
            h_m(\tilde{\bm x},\bm D_{\mathrm{test}})\le 0,
            \ m=1,\ldots,M
        \right\}
        \ge \alpha .
        \label{eq:calibration-decision}
    \end{equation}
\end{proposition}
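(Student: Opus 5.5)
The argument is a direct event inclusion followed by the marginal coverage guarantee \eqref{eq:marginal_coverage} (Lemma~\ref{lem:conformal_coverage}). Let $E$ be the coverage event $\{\bm D_{\mathrm{test}}\in\mathcal U_\alpha(\bm Z_{\mathrm{test}})\}$, that is, $\{s_{\hat f}(\bm Z_{\mathrm{test}},\bm D_{\mathrm{test}})\le \eta_\alpha\}$. Let $G$ be the good-performance event $\{h_m(\tilde{\bm x},\bm D_{\mathrm{test}})\le 0,\ m=1,\ldots,M\}$. The plan is to show $E\subseteq G$ up to a null set. Monotonicity of probability then gives $\mathbb P(G)\ge\mathbb P(E)\ge\alpha$.

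For the inclusion, fix an outcome $\omega$ outside the exceptional null set on which the hypothesis fails. The hypothesis says that on this outcome, $h_m(\tilde{\bm x}(\omega),\bm d)\le 0$ for every $\bm d\in\mathcal U_\alpha(\bm Z_{\mathrm{test}}(\omega))$ and every $m$. Here $\tilde{\bm x}$ may be random: it may depend on the calibration data, on $\bm Z_{\mathrm{test}}$, or on the solver. If $\omega\in E$, then $\bm D_{\mathrm{test}}(\omega)$ is one admissible choice of $\bm d$ in that set. So $h_m(\tilde{\bm x}(\omega),\bm D_{\mathrm{test}}(\omega))\le 0$ for all $m$, and hence $\omega\in G$. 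The inclusion is pointwise, so no independence between $\tilde{\bm x}$ and $\bm D_{\mathrm{test}}$ is needed. All the probabilistic content sits in the coverage lemma, which uses only Assumption~\ref{asp:exchangeability} and the fact that $\hat f$ and $s_{\hat f}$ are fixed before calibration.

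The step that needs the most care is stating exactly what the hypothesis covers, because that is what makes the pointwise substitution legitimate. The hypothesis is that the implication holds for all $\bm d$ in the data-dependent set $\mathcal U_\alpha(\bm Z_{\mathrm{test}})$, evaluated at the realized decision. This is the right reading because $\tilde{\bm x}$ is chosen before $\bm D_{\mathrm{test}}$ is observed: the robust certificate must then hold uniformly over the set and cannot be tailored to the realized outcome. Under this reading, the substitution $\bm d=\bm D_{\mathrm{test}}$ is valid as written. I would add a brief remark on measurability: $G$ should be an event, which holds if each $h_m$ is jointly measurable. Alternatively, the conclusion can be read as a lower bound on inner probability, and the same argument goes through.
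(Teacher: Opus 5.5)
Your proof is correct and matches the paper's argument. The paper likewise defines the coverage event $E_\alpha=\{\bm D_{\mathrm{test}}\in\mathcal U_\alpha(\bm Z_{\mathrm{test}})\}$, uses the pathwise certificate to get $h_m(\tilde{\bm x},\bm D_{\mathrm{test}})\le 0$ almost surely on $E_\alpha$, and concludes by monotonicity together with Lemma~\ref{lem:conformal_coverage}; the only difference is that the paper also includes a short rank-based proof of that coverage lemma (with randomized tie-breaking), which you take as given.
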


Proposition~\ref{prop:calibration_transfer} is stated in a general form so that $h_m({\bm x},\bm d)$ can represent either objective or constraint performance. Section~\ref{sec:cro_crs_framework} applies this transfer principle to the two decision-making strategies.

The sample complexity of split conformal calibration depends on the size of the calibration data. Beyond this finite-sample transfer guarantee, the calibration sample size also governs how stable the achieved coverage is across calibration realizations. Let $\mathcal D_n:=\{(\bm Z_i,\bm D_i)\}_{i=1}^n$ denote the calibration set and define the calibration-conditional coverage
\begin{equation}
\label{eq:conditional_coverage_def}
    p(\mathcal D_n):=
    \mathbb P\!\left(\bm D_{\mathrm{test}}\in \mathcal U_\alpha(\bm Z_{\mathrm{test}})\mid \mathcal D_n\right).
\end{equation}

\begin{assumption}
    \label{asp:iid_stability}
    The calibration pairs $(\bm{Z}_i,\bm{D}_i)_{i=1}^n$ and the test pair $(\bm{Z}_{\mathrm{test}},\bm{D}_{\mathrm{test}})$ are i.i.d.\ draws
    from a common distribution.
\end{assumption}

The marginal coverage guarantee in \eqref{eq:marginal_coverage} implies $\mathbb E[p(\mathcal D_n)]\ge\alpha$ under exchangeability. Under the common i.i.d. calibration regime formalized in Assumption~\ref{asp:iid_stability}, this marginal statement can be sharpened to non-asymptotic control of the lower tail of $p(\mathcal D_n)$ \citep{vovk2012conditional,duchi2025few}.
In particular, if the fixed score $S=s_{\hat f}(\bm Z,\bm D)$ has a continuous cumulative distribution function (CDF), then, as the calibration sample size grows, the lower tail of $p(\mathcal D_n)$ concentrates at the canonical nonparametric rate: with high probability, the achieved coverage is at least $\alpha-\mathcal O(n^{-1/2})$, and the probability of any fixed undercoverage gap decreases exponentially with $n$.

Beyond marginal coverage and coverage conditional on the calibration sample, one may seek the stronger pointwise guarantee
\[
    \mathbb P\!\left(\bm D_{\mathrm{test}}\in\mathcal U_\alpha(\bm z)\mid \bm Z_{\mathrm{test}}=\bm z\right)\ge \alpha
    \qquad\text{for every }\bm z.
\]
Fundamental impossibility results for conformal prediction rule out this guarantee for nontrivial distribution-free procedures: any useful conditional method must either impose additional structure on the data-generating process or weaken the guarantee to an asymptotic statement \citep{foygel2021limits}. Under such structure, localization can provide asymptotic context-conditional coverage at a fixed interior context.

Appendix~\ref{sec:conditional_cro} develops one such localized extension for smooth, low-dimensional contexts. With $n$ calibration observations and a rate-balancing bandwidth, kernel-weighted calibration yields both conditional quantile estimation error and realized context-conditional coverage error of order $\mathcal O_p\!\left((\log n/n)^{s/(2s+d_z)}\right)$, where $d_z$ denotes the context dimension. We retain marginal calibration as the main framework because it provides finite-sample, distribution-free validity and accommodates high-dimensional or unstructured contexts.

\subsection{The Score-Calibrated Robustness Interface}
\label{sec:score_calibrated_interface}

The key idea is to treat the conformal score $s_{\hat f}$, rather than any single calibrated sublevel set $\mathcal{U}_\alpha(\bm z)=\{\bm d:s_{\hat{f}}(\bm z,\bm d)\le\eta_\alpha\}$, as the robustness primitive. A sublevel set of the score supplies the uncertainty region for reliability-based robust optimization, while the same score supplies the deviation unit against which target violations are normalized in robust satisficing. This score-centered view leads to our new \ConfRS{} model and its conformal fragility measure, and it places \ConfRO{} and \ConfRS{} on a common robustness scale. Figure~\ref{fig:cro_pipeline} illustrates the resulting interface.

The interface follows a three-step Predict-Calibrate-Solve pipeline. First, given a context $\bm z$, the predictor outputs the point forecast $\hat{\bm d}(\bm z)=\hat f(\bm z)$. Second, the calibrator module evaluates held-out scores $S_i=s_{\hat f}(\bm z_i,\bm d_i)$, forms the empirical score distribution $\hat F_n(t)=n^{-1}\sum_{i=1}^n\indc\{S_i\le t\}$, and obtains calibrated thresholds such as $\eta_\alpha$ for desired reliability levels. Third, the solver uses the same score in one of two ways: \ConfRO{} takes a reliability level $\alpha$ and optimizes over the calibrated sublevel set $\mathcal U_\alpha(\bm z)$, whereas \ConfRS{} takes an acceptable target $\tau$ and defines the conformal fragility of a decision as the smallest coefficient $k$ such that target violations are bounded by $k\cdot s_{\hat f}(\bm z,\bm d)$ for all $\bm d\in\mathfrak D$.

\begin{figure}[htb]
\vspace{-1.2em}
  \caption{Predict-Calibrate-Solve as a score-calibrated robustness interface.}
  \label{fig:cro_pipeline}
  \centering
  \vspace{0.8em}
  \definecolor{contextSlate}{RGB}{55, 72, 86}
\definecolor{coverageBlue}{RGB}{41, 86, 145}
\definecolor{targetCopper}{RGB}{172, 96, 37}

\def\croCoverageTagMinHeight{0.50cm}
\def\croTargetTagMinHeight{0.50cm}

\def\croCoverageTagTextHeight{2.0ex}
\def\croCoverageTagTextDepth{0.8ex}
\def\croTargetTagTextHeight{2.0ex}
\def\croTargetTagTextDepth{0.8ex}

\def\croCoverageTagInnerXSep{4pt}
\def\croCoverageTagInnerYSep{0pt}
\def\croTargetTagInnerXSep{4pt}
\def\croTargetTagInnerYSep{0pt}

\def\croCoverageTagPos{0.47}
\def\croCoverageTagLeft{0.12cm}
\def\croCoverageTagYShift{0pt}
\def\croTargetTagPos{0.47}
\def\croTargetTagRight{0.12cm}
\def\croTargetTagYShift{0pt}

\newcommand{\croInputTagStrut}{\vphantom{\textbf{Coverage $\alpha\tau$}}}

\resizebox{0.7\textwidth}{!}{\begin{tikzpicture}[
    node distance=1.8cm and 3.4cm,
    block/.style={
        rectangle, rounded corners, very thick,
        draw=black!60, fill=white,
        text centered, align=center, font=\small,
        minimum height=1.10cm, minimum width=2.4cm, text width=2.4cm
    },
    predictor_block/.style={
        block,
        minimum height=1.28cm, minimum width=3.05cm, text width=3.05cm
    },
    calibrator_block/.style={
        block,
        minimum height=1.28cm, minimum width=3.05cm, text width=3.05cm
    },
    thin_arrow/.style={
        -{Stealth[length=3mm, width=4mm]},
        line width=1.2pt, draw=black!70
    },
    fat_arrow/.style={
        single arrow, minimum height=2.60cm, shape border rotate=-90,
        line width=1.0pt, draw=black!40, fill=black!7
    },
    label_text/.style={
        text centered, align=center, font=\small
    },
    coverage_tag/.style={
        label_text,
        rounded corners=2pt,
        minimum height=\croCoverageTagMinHeight,
        text height=\croCoverageTagTextHeight,
        text depth=\croCoverageTagTextDepth,
        inner xsep=\croCoverageTagInnerXSep,
        inner ysep=\croCoverageTagInnerYSep,
        draw=coverageBlue!75,
        fill=coverageBlue!9
    },
    target_tag/.style={
        label_text,
        rounded corners=2pt,
        minimum height=\croTargetTagMinHeight,
        text height=\croTargetTagTextHeight,
        text depth=\croTargetTagTextDepth,
        inner xsep=\croTargetTagInnerXSep,
        inner ysep=\croTargetTagInnerYSep,
        draw=targetCopper!75,
        fill=targetCopper!9
    }
]

\node[predictor_block] (predictor)
    {\texttt{Predictor}\\[-1pt] $\hat{\bm{d}}=\hat f(\bm{z})$};

\node[calibrator_block, right=of predictor] (calibrator)
    {\texttt{Calibrator}\\[-1pt] {$S_i=s_{\hat f}(\bm{z}_i,\bm{d}_i)$}};

\node[block, right=of calibrator] (solvers)
    {\texttt{Solvers} \\ {}};

\node[
  draw=black!70,
  thick,
  ellipse,
  below=1.85cm of calibrator,
  minimum width=3.2cm,
  text width=3.2cm,
  align=center,
  inner sep=5pt
] (dm) {\faIcon{user}\;Decision Maker};

\node[fat_arrow] (arrow1) at ($(predictor.east)!0.5!(calibrator.west)$) {};
\node[label_text, above=0.32cm of arrow1]
    {Forecasted \\[-2pt] Parameter $\hat{\bm{d}}$};

\node[fat_arrow] (arrow2) at ($(calibrator.east)!0.5!(solvers.west)$) {};
\node[label_text, above=0.12cm of arrow2] {
    \textbf{\color{coverageBlue} Uncertainty} \\[-2pt]
    \textbf{\color{coverageBlue} Set $\mathcal{U}_{\alpha}(\bm{z})$}
};
\node[label_text, below=0.12cm of arrow2] {
    \textbf{\color{targetCopper} Decision} \\[-2pt]
    \textbf{\color{targetCopper} Fragility $\kappa^*(\tau)$}
};

\node[label_text, above=0.72cm of predictor.north, inner sep=1pt] (train_label)
    {Training Data \\[-2pt] Set $\mathcal{D}_{\text{train}}$};
\draw[thin_arrow] ([yshift=-1pt]train_label.south) -- (predictor.north);

\node[label_text, above=0.72cm of calibrator.north, inner sep=1pt] (calib_label)
    {Calibration Data \\[-2pt] Set $\mathcal{D}_{\text{cal}}$};
\draw[thin_arrow] ([yshift=-1pt]calib_label.south) -- (calibrator.north);

\draw[thin_arrow, draw=contextSlate] (dm.north west) -- (predictor.south)
    node[midway, label_text, above=0.05cm, sloped] {
        \textbf{\color{contextSlate} New Context $\bm{z}$}
    };

\draw[thin_arrow] (dm.north) -- (calibrator.south)
    node[pos=\croCoverageTagPos, coverage_tag, left=\croCoverageTagLeft, yshift=\croCoverageTagYShift] {
        \croInputTagStrut\textbf{\color{coverageBlue} Coverage $\alpha$}
    }
    node[pos=\croTargetTagPos, target_tag, right=\croTargetTagRight, yshift=\croTargetTagYShift] {
        \croInputTagStrut\textbf{\color{targetCopper} Target $\tau$}
    };

\coordinate (returncorner) at (solvers.south |- dm.east);
\coordinate (returnmid) at ($(solvers.south)!0.53!(returncorner)$);

\draw[thin_arrow] (solvers.south) -- (returncorner) -- (dm.east);
\node[label_text, left=0.04cm of returnmid]
    {\textbf{\color{contextSlate} Robust} \\[-2pt]
     \textbf{\color{contextSlate} Solution $\bm{x}$}};

\end{tikzpicture}}
  \vspace{-1.2em}
\end{figure}

This pipeline separates the statistical and optimization roles of prediction uncertainty. The predictor can be any fitted model; the conformal calibration step converts its empirical errors into a finite-sample valid score scale; and the downstream solver uses that scale through managerial inputs that are interpretable either as a reliability level $\alpha$ or as an acceptable target $\tau$. Section~\ref{sec:cro_crs_framework} formalizes the two resulting decision-making strategies, and Section~\ref{sec:ConfRO_ConfRS_equivalence} studies their decision-level correspondence and the conditions under which it strengthens to equivalence.

\section{Conformal Robustness: Two Complementary Strategies}
\label{sec:cro_crs_framework}
Section~\ref{sec:cus_framework} has introduced the conformal score as a post-prediction calibration layer: after a black-box predictor produces the context-specific forecast $\hat{\bm d}=\hat f(\bm z)$, the conformal scores $S_i = s_{\hat{f}}(\bm{z}_i, \bm{d}_i)$ on the calibration data are collected to capture the prediction uncertainty. In this section, we discuss how the calibration layer is integrated into decision-making through reliability-based and target-oriented approaches, respectively.

\subsection{Reliability-Based View: \ConfRO{}}
\label{sec:cro_framework}

Recent work has established how conformal uncertainty sets can be incorporated into optimization (e.g., \citealp{patel2024conformal, sun2023predict}). Building on this foundation, we analyze the decision consequences of the conformal score itself. We first organize representative score geometries by their induced uncertainty sets and implications for downstream optimization tractability (Section~\ref{subsubsec:cro_score_geometry}). We then derive a decision-efficiency bound for convex programs allowing uncertainty in the objective and/or constraints (Section~\ref{subsubsec:cro_value_good_prediction}). The bound separates the calibrated forecast-error scale from objective and dual-weighted constraint sensitivities. Together, these results clarify how prediction quality and score geometry determine the price of robustness in \ConfRO{}.

Given a context $\bm z$, let $\hat{\bm d}=\hat f(\bm z)$ be the corresponding point forecast. If the uncertain parameter were known to be $\bm d$, the downstream decision problem would be
\begin{equation}
\begin{aligned}
        V(\bm d)  =  \inf_{\bm{x}} &\;   a_0(\bm{x}, \bm{d}),\\
    \text{s.t.}  &\;  a_i(\bm{x}, \bm{d}) \leq b_i, \quad i \in [I],\\
    &\;  \bm{x}  \in \mathcal{X}.
\end{aligned}
\label{eq:nominal_uncertain_problem}
\end{equation}
A predict-then-optimize method replaces $\bm d$ by $\hat{\bm d}$ in \eqref{eq:nominal_uncertain_problem} and solves the resulting deterministic problem. In contrast, \ConfRO{} incorporates the calibrated conformal set
$
    \mathcal U_\alpha(\bm z)=\{\bm d\in\mathfrak D:s_{\hat f}(\bm z,\bm d)\le \eta_\alpha\},
$
where $\eta_\alpha$ is the split-conformal score quantile in Section~\ref{sec:cp_preliminary}. The robust value $V_{\ConfRO}(\alpha)$ is given by
\begin{equation}
\begin{aligned}
    V_{\ConfRO}(\alpha)=\inf_{\bm{x}} \sup_{\bm{d}\in\mathcal{U}_{\alpha}(\bm{z})} &\;  a_0(\bm{x},\bm{d}),\\
    \text{s.t.}  &\;  a_i(\bm{x}, \bm{d}) \leq b_i, \quad i \in [I],\ \forall \bm{d} \in \mathcal{U}_\alpha(\bm{z}),\\
    &\;  \bm{x} \in \mathcal{X}.
\end{aligned}
\label{eq:ConfRO_primal}
\end{equation}
Because $\mathbb P\{\bm{D}_{\mathrm{test}}\in\mathcal U_\alpha(\bm{Z}_{\mathrm{test}})\}\ge \alpha$ under the exchangeability of Assumption~\ref{asp:exchangeability} and the constraints in \eqref{eq:ConfRO_primal} are enforced over $\mathcal U_\alpha(\bm z)$, the original constraints $a_i(\bm{x},\bm D_{\mathrm{test}})\le b_i$ are satisfied with  probability at least $\alpha$, as shown in Proposition~\ref{prop:calibration_transfer}.

\subsubsection{Score Geometry and Reliability Specification.}
\label{subsubsec:cro_score_geometry}

Implementing \ConfRO{} requires choosing a score function $s(\cdot,\cdot)$ and a reliability level $\alpha$. The score determines the geometry of forecast deviations and may encode residual scale, covariance, sparsity, and component- or direction-specific weights. It need not be a metric: conformal validity permits asymmetry and does not require the triangle inequality, provided the score is fixed before calibration. The level $\alpha$ controls only the calibrated size through $\eta_\alpha$. Thus, score design should prioritize downstream tractability, whereas $\alpha$ remains an interpretable reliability input independent of that design. Table~\ref{tab:cus_designs} summarizes Box, Ellipsoid, and Budget designs for coordinate-wise, correlated, and aggregate deviations, respectively.

\begin{table}[htbp]
    \centering
    \small
    \caption{Correspondence between uncertainty set geometries and conformal score designs.}
    \label{tab:cus_designs}
    \renewcommand{\arraystretch}{1.3}
    \begin{tabular}{p{0.1\textwidth} p{0.3\textwidth} p{0.36\textwidth}}
        \toprule
        {Geometry} & {Conformal score $s(\bm z, \bm d)$} & {Resulting uncertainty set $\mathcal{U}_\alpha(\bm{z})$} \\
        \midrule
        {Box} & $\displaystyle \max_{j\in[J]}\frac{|d_j - \hat{f}_j(\bm{z})|}{\hat{h}_j(\bm{z})}$ & $\displaystyle \left\{ \bm{d} : |d_j - \hat{f}_j(\bm{z})| \le \eta_\alpha \hat{h}_j(\bm{z}), \ j\in[J] \right\}$ \\
        {Ellipsoid} & $\displaystyle \frac{\|\bm{L}(\bm{d} - \hat{f}(\bm{z}))\|_2}{\hat{g}(\bm{z})}, \, \hat{\bm\Sigma}^{-1}=\bm L^\top\bm L$ & $\displaystyle \left\{ \bm{d} : (\bm{d} - \hat{f}(\bm{z}))^\top \hat{\bm{\Sigma}}^{-1} (\bm{d} - \hat{f}(\bm{z})) \le \eta_\alpha^2 \hat{g}^2(\bm{z}) \right\}$ \\
        {Budget} & $\displaystyle \left\| \frac{\bm{d} - \hat{f}(\bm{z})}{\bm{\beta} \odot \hat{\bm{h}}(\bm{z})} \right\|_1$ & $\displaystyle \left\{ \bm{d} : \sum_{j=1}^J \frac{|d_j - \hat{f}_j(\bm{z})|}{\beta_j \hat{h}_j(\bm{z})} \le \eta_\alpha\right\}$ \\
        \bottomrule
    \end{tabular}
    \par\smallskip
    \parbox{\textwidth}{\scriptsize \textit{Note.} The table presents general data-driven forms in which $\hat{\bm h}(\bm z)$ and $\hat g(\bm z)$ are trained a priori, in addition to $\hat{f}$, to estimate residual scales; $\bm\beta\in\mathbb R_{+}^J$ specifies component-wise weights; static score designs are obtained by replacing these estimated quantities with constants. Division and multiplication by $\odot$ are component-wise.}
    \vspace{-1.5em}
\end{table}

The adaptive designs in Table~\ref{tab:cus_designs} use local scale estimators to accommodate heteroscedastic forecast errors. Furthermore, direction-specific weights extend this construction to asymmetric errors. For example, if $\hat{\bm r}(\bm z)\in\mathbb R_{++}^J$ estimates component-wise residual magnitudes and $\bm\omega^+,\bm\omega^-\in\mathbb R_+^J$ are learned or decision-specified directional weights, an adaptive asymmetric weighted $L_1$ score is
\begin{equation}
\label{eq:adaptive_asymmetric_score}
    s_{(\hat f,\hat r)}(\bm z,\bm d)
    = \sum_{j=1}^J \left[
        \omega_j^+ \frac{(d_j-\hat f_j(\bm z))_+}{\hat r_j(\bm z)}
        + \omega_j^- \frac{(\hat f_j(\bm z)-d_j)_+}{\hat r_j(\bm z)}
    \right].
\end{equation}
When $\bm\omega^+\ne\bm\omega^-$, \eqref{eq:adaptive_asymmetric_score} is asymmetric and therefore need not be metric.

\ConfRO{} enjoys an interpretability benefit by using the coverage level $\alpha$ as the primary tuning parameter. This avoids asking the decision-maker to specify a radius, budget, or ambiguity size whose operational meaning is less direct. For example, $\alpha=0.95$ simply implies that the \ConfRO{} decision is obtained when the constraints are satisfied with marginal probability at least 95\%, as shown in Section~\ref{sec:cp_preliminary}. Consequently, larger values of $\alpha$ produce weakly larger sets $\mathcal{U}_\alpha$ and more conservative decisions, while smaller values produce tighter sets and emphasize efficiency.

\subsubsection{Bounding the Price of Robustness.}\label{subsubsec:cro_value_good_prediction}
Conformal calibration provides statistical validity for \ConfRO{}, but coverage alone does not quantify the efficiency cost of robustness. Calibrated uncertainty sets with the same nominal coverage may induce different decisions because their geometries interact differently with the downstream objective and constraints. A decision-efficiency bound is therefore needed to reveal how the calibrated forecast-error scale and downstream optimization sensitivity jointly determine the price of robustness.

For the decision-efficiency bound below, fix a test context $\bm z_{\mathrm{test}}$, and write $\mathcal U:=\mathcal U_\alpha(\bm z_{\mathrm{test}})$ and $\hat{\bm d}_{\mathrm{test}}:=\hat f(\bm z_{\mathrm{test}})$. Suppose that the score has the metric form $s_{\hat f}(\bm z_{\mathrm{test}},\bm d)=\mathfrak s(\bm d,\hat{\bm d}_{\mathrm{test}})$ for $\bm d\in\mathfrak D$, where $\mathfrak s:\mathfrak D\times\mathfrak D\to\mathbb R_+$ is a metric. Thus, $\mathcal U=\{\bm d\in\mathfrak D:\mathfrak s(\bm d,\hat{\bm d}_{\mathrm{test}})\le\eta_\alpha\}$.

For each $i\in\{0\}\cup[I]$, define its score sensitivity on $\mathcal U$ by
\begin{equation}
\label{eq:score_sensitivity}
    \mathcal L_{\mathfrak s,i}(\bm x)
    :=\sup_{\substack{\bm u,\bm v\in\mathcal U\\ \bm u\ne\bm v}}
    \frac{|a_i(\bm x,\bm u)-a_i(\bm x,\bm v)|}
         {\mathfrak s(\bm u,\bm v)}.
\end{equation}
We set $\mathcal L_{\mathfrak s,i}(\bm x):=0$ when $\mathcal U$ is a singleton. For a test realization $\bm d_{\mathrm{test}}$, let $V_{\mathrm{test}}:=V(\bm d_{\mathrm{test}})$, and let $V_{\ConfRO}$ denote the value of \eqref{eq:ConfRO_primal} at $\bm z_{\mathrm{test}}$. Define the decision-efficiency loss as
\begin{equation}
    \label{eq:de_gap_convex}
    \delta_{\mathrm{Conv}}(\bm z_{\mathrm{test}},\bm d_{\mathrm{test}})
    :=V_{\ConfRO}-V_{\mathrm{test}}.
\end{equation}

\begin{proposition}
    \label{prop:convex_bound}
    Fix $\bm d_{\mathrm{test}}\in\mathcal U$. Suppose that $V_{\mathrm{test}}\in\mathbb R$ is attained at a realized optimizer $\bm x^\star_{\mathrm{test}}$, that $\mathcal X$ is convex, and that $a_i(\cdot,\bm d)$ is convex on $\mathcal X$ for every $\bm d\in\mathcal U$ and $i\in\{0\}\cup[I]$. Assume that $\eta_\alpha<\infty$, $V_{\ConfRO}\in\mathbb R$, and $\mathcal L_{\mathfrak s,i}(\bm x^\star_{\mathrm{test}})<\infty$ for all $i\in\{0\}\cup[I]$. Suppose further that the Lagrangian dual of the outer robust program has zero duality gap and attains its optimum at a multiplier vector $\bm\lambda^R=(\lambda_1^R,\ldots,\lambda_I^R)\in\mathbb R_+^I$ associated with the robust constraints $\sup_{\bm d\in\mathcal U}a_i(\bm x,\bm d)\le b_i$, $i\in[I]$. Then,
    \[
        0\le
        \delta_{\mathrm{Conv}}(\bm z_{\mathrm{test}},\bm d_{\mathrm{test}})
        \le
        2\eta_\alpha
        \left[
            \mathcal L_{\mathfrak s,0}(\bm x^\star_{\mathrm{test}})
            +\sum_{i=1}^I\lambda_i^R
            \mathcal L_{\mathfrak s,i}(\bm x^\star_{\mathrm{test}})
        \right].
    \]
\end{proposition}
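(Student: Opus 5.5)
The plan is to sandwich $V_{\ConfRO}$ between $V_{\mathrm{test}}$ and a Lagrangian evaluated at the realized optimizer $\bm x^\star_{\mathrm{test}}$. Fixing the multiplier vector at $\bm\lambda^R$ reduces the robust program to a single penalized objective, and that objective can then be bounded term by term using the score sensitivities \eqref{eq:score_sensitivity} together with the triangle inequality for the metric $\mathfrak s$.

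\emph{Lower bound.} Since $\bm d_{\mathrm{test}}\in\mathcal U$, any $\bm x$ feasible for \eqref{eq:ConfRO_primal} satisfies $a_i(\bm x,\bm d_{\mathrm{test}})\le b_i$ for all $i\in[I]$. Hence $\bm x$ is feasible for the nominal problem \eqref{eq:nominal_uncertain_problem} at $\bm d=\bm d_{\mathrm{test}}$. Moreover, $\sup_{\bm d\in\mathcal U}a_0(\bm x,\bm d)\ge a_0(\bm x,\bm d_{\mathrm{test}})\ge V_{\mathrm{test}}$. Taking the infimum over robust-feasible $\bm x$ gives $V_{\ConfRO}\ge V_{\mathrm{test}}$, that is, $\delta_{\mathrm{Conv}}\ge 0$.

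\emph{Upper bound via duality.} Write the outer program as $\inf_{\bm x\in\mathcal X}\{g_0(\bm x): g_i(\bm x)\le b_i,\ i\in[I]\}$, where $g_i(\bm x):=\sup_{\bm d\in\mathcal U}a_i(\bm x,\bm d)$. Each $g_i$ is convex, being a supremum of convex functions. Zero duality gap and dual attainment at $\bm\lambda^R$ give
\[
V_{\ConfRO}=\inf_{\bm x\in\mathcal X}\Bigl[g_0(\bm x)+\sum_{i=1}^I\lambda^R_i\bigl(g_i(\bm x)-b_i\bigr)\Bigr]\le g_0(\bm x^\star_{\mathrm{test}})+\sum_{i=1}^I\lambda^R_i\bigl(g_i(\bm x^\star_{\mathrm{test}})-b_i\bigr).
\]

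\emph{Bounding each $g_i$ at $\bm x^\star_{\mathrm{test}}$.} For any $\bm d\in\mathcal U$, both $\bm d$ and $\bm d_{\mathrm{test}}$ lie in $\mathcal U$. The definition of $\mathcal L_{\mathfrak s,i}$ therefore gives
\[
a_i(\bm x^\star_{\mathrm{test}},\bm d)\le a_i(\bm x^\star_{\mathrm{test}},\bm d_{\mathrm{test}})+\mathcal L_{\mathfrak s,i}(\bm x^\star_{\mathrm{test}})\,\mathfrak s(\bm d,\bm d_{\mathrm{test}}).
\]
When $\bm d=\bm d_{\mathrm{test}}$ this holds trivially. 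By symmetry and the triangle inequality of the metric,
\[
\mathfrak s(\bm d,\bm d_{\mathrm{test}})\le\mathfrak s(\bm d,\hat{\bm d}_{\mathrm{test}})+\mathfrak s(\hat{\bm d}_{\mathrm{test}},\bm d_{\mathrm{test}})\le 2\eta_\alpha .
\]
Taking the supremum over $\bm d\in\mathcal U$ yields $g_i(\bm x^\star_{\mathrm{test}})\le a_i(\bm x^\star_{\mathrm{test}},\bm d_{\mathrm{test}})+2\eta_\alpha\mathcal L_{\mathfrak s,i}(\bm x^\star_{\mathrm{test}})$. This bound is finite by the assumptions $\eta_\alpha<\infty$ and $\mathcal L_{\mathfrak s,i}(\bm x^\star_{\mathrm{test}})<\infty$.

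\emph{Combining the bounds.} Substitute these bounds into the Lagrangian inequality and use the following facts:
\begin{itemize}
\item $a_0(\bm x^\star_{\mathrm{test}},\bm d_{\mathrm{test}})=V_{\mathrm{test}}$, since $\bm x^\star_{\mathrm{test}}$ attains $V_{\mathrm{test}}$;
\item $a_i(\bm x^\star_{\mathrm{test}},\bm d_{\mathrm{test}})-b_i\le 0$, since $\bm x^\star_{\mathrm{test}}$ is feasible for the nominal problem at $\bm d_{\mathrm{test}}$;
\item $\lambda^R_i\ge 0$.
\end{itemize}
The constraint residuals then drop out, and what remains is
\[
V_{\ConfRO}\le V_{\mathrm{test}}+2\eta_\alpha\Bigl[\mathcal L_{\mathfrak s,0}(\bm x^\star_{\mathrm{test}})+\sum_{i=1}^I\lambda^R_i\mathcal L_{\mathfrak s,i}(\bm x^\star_{\mathrm{test}})\Bigr],
\]
which is the claimed upper bound.

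\emph{Main obstacle.} The delicate step is the duality step: correctly identifying the outer Lagrangian with the robust constraint functions $g_i$, and justifying evaluation at $\bm x^\star_{\mathrm{test}}$, which need not be robust-feasible. This evaluation is legitimate precisely because the dual function is an unconstrained infimum over $\mathcal X$. The finiteness of $g_i(\bm x^\star_{\mathrm{test}})$, obtained above, ensures the Lagrangian value there is well defined. The factor $2$ comes entirely from the metric triangle inequality, which is why the metric-form assumption on the score is needed.
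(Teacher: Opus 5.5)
Your proposal is correct and follows the paper's proof essentially step for step. The lower bound comes from feasible-set containment and objective dominance. For the upper bound, you evaluate the dual function $\inf_{\bm x\in\mathcal X}\bigl[g_0(\bm x)+\sum_{i=1}^I\lambda_i^R(g_i(\bm x)-b_i)\bigr]$ at $\bm x^\star_{\mathrm{test}}$, use realized feasibility and $\bm\lambda^R\ge\bm 0$ to absorb the constraint residuals, and bound each envelope gap by $2\eta_\alpha\mathcal L_{\mathfrak s,i}(\bm x^\star_{\mathrm{test}})$ via symmetry and the triangle inequality. The only differences are a harmless reordering (you bound each $g_i$ before substituting, whereas the paper replaces $b_i$ by $a_i(\bm x^\star_{\mathrm{test}},\bm d_{\mathrm{test}})$ first) and your explicit finiteness check on $g_i(\bm x^\star_{\mathrm{test}})$, which the paper leaves implicit.
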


Proposition~\ref{prop:convex_bound} provides a pathwise bound for every realization contained in the calibrated uncertainty set. The bound separates the calibrated forecast-error scale $\eta_\alpha$ from downstream optimization sensitivity. The term $\mathcal L_{\mathfrak s,0}(\bm x^\star_{\mathrm{test}})$ measures the objective's exposure to score-measured perturbations, while each robust dual multiplier $\lambda_i^R$ converts the corresponding constraint sensitivity $\mathcal L_{\mathfrak s,i}(\bm x^\star_{\mathrm{test}})$ into objective units. Under a fixed score construction, prediction improvements that concentrate calibration scores near zero shrink $\eta_\alpha$, while decision-aligned score geometry can reduce the objective and constraint sensitivities that determine the price of robustness.

This pathwise bound combines directly with conformal coverage to yield a finite-sample probabilistic guarantee. If the metric-score representation and regularity conditions in Proposition~\ref{prop:convex_bound} hold almost surely for the random test instance, then the bound applies on the coverage event $\{\bm D_{\mathrm{test}}\in\mathcal U_\alpha(\bm Z_{\mathrm{test}})\}$. By the marginal coverage guarantee in \eqref{eq:marginal_coverage}, this event has probability at least $\alpha$. Consequently, the decision-efficiency bound holds with probability at least $\alpha$ under the joint distribution of $(\bm Z_{\mathrm{test}},\bm D_{\mathrm{test}})$.

\citet{patel2024conformal} bound decision-efficiency loss in objective-only problems using a global Lipschitz constant and the diameter of the calibrated uncertainty set. Proposition~\ref{prop:convex_bound} instead analyzes convex programs allowing objective and constraint uncertainty and decomposes downstream sensitivity into objective and dual-weighted constraint components.

\begin{remark}[Sharper Bounds for the Linear Case]
\label{rem:linear_case}
Consider the linear special case with objective $a_0(\bm x,\bm d)=\bm c^\top\bm x$, constraint function $a_1(\bm x,\bm d)=\bm d^\top\bm x$, and right-hand side $b_1=b$.
Write $\delta_{\mathrm{Lin}}(\bm z,\bm d):=\delta_{\mathrm{Conv}}(\bm z,\bm d)$, let $\lambda^R$ denote the multiplier of the single robust constraint, and define
\[
    \sigma_{\mathcal U}(\bm x):=\sup_{\bm d\in\mathcal U}\bm d^\top\bm x,
    \qquad
    \mathcal L_{\mathfrak s}(\bm x):=\mathcal L_{\mathfrak s,1}(\bm x).
\]
Because $a_0$ is independent of $\bm d$, $\mathcal L_{\mathfrak s,0}(\bm x)=0$ for every $\bm x$. Proposition~\ref{prop:convex_bound} therefore specializes to the final inequality in the sharper chain
\begin{equation}
\begin{aligned}
    0\le\delta_{\mathrm{Lin}}(\bm z_{\mathrm{test}},\bm d_{\mathrm{test}})
    &\le\lambda^R\bigl[\sigma_{\mathcal U}(\bm x^\star_{\mathrm{test}})-b\bigr]\\
    &\le\lambda^R\bigl[\sigma_{\mathcal U}(\bm x^\star_{\mathrm{test}})
        -\bm d_{\mathrm{test}}^\top\bm x^\star_{\mathrm{test}}\bigr]\\
    &\le2\eta_\alpha\lambda^R
        \mathcal L_{\mathfrak s}(\bm x^\star_{\mathrm{test}}).
\end{aligned}
\label{eq:de_bound_linear}
\end{equation}
The first intermediate bound in \eqref{eq:de_bound_linear} is tighter than the second by $\lambda^R[b-\bm d_{\mathrm{test}}^\top\bm x^\star_{\mathrm{test}}]$, the multiplier-weighted realized constraint slack. Both intermediate bounds retain the exact support function and can therefore be strictly tighter than the final score-sensitivity bound. The final inequality coincides exactly with the linear specialization of Proposition~\ref{prop:convex_bound}, whereas the preceding inequalities exploit the linear structure to retain instance-specific information.
\end{remark}

\subsection{Target-Oriented View: \ConfRS{}}
\label{sec:crs_framework}

\ConfRO{} begins with a reliability level $\alpha$ and optimizes worst-case performance over the corresponding calibrated set $\mathcal{U}_\alpha(\bm z)$. In many applications, however, robustness requirements are naturally expressed through operational targets: a decision maker may care that cost stays below a budget, utility exceeds a benchmark, or service performance remains acceptable, and then ask for the decision that is least fragile to deviations from the forecast around that target. This target-first view motivates \emph{Conformal Robust Satisficing} (\ConfRS{}), which takes an acceptable target $\tau$ as the primitive robustness input.

Existing robust satisficing frameworks primarily define fragility under distributional ambiguity, including residual-based formulations that incorporate side-information predictions \citep{long2023robust,sim2024analytics}. We study a complementary post-prediction setting in which uncertainty concerns the realized parameter around an arbitrary fixed point forecast. Accordingly, \ConfRS{} uses the same conformal score as the deviation scale for target violations. Section~\ref{subsubsec:crs_vs_drs} gives the exact relationship between this formulation and residual-based distributionally robust satisficing (DRS).

For a fixed context $\bm z$, define the point forecast and its score-induced deviation as
\[
    \hat{\bm d}:=\hat f(\bm z),
    \qquad
    \mathfrak s_{\bm z}(\bm d,\hat{\bm d}):=s_{\hat f}(\bm z,\bm d).
\]
Suppressing the context subscript, throughout this subsection we assume $\mathfrak s(\bm d,\hat{\bm d})\ge0$, $\mathfrak s(\hat{\bm d},\hat{\bm d})=0$, and $\mathfrak s(\bm d,\hat{\bm d})>0$ for $\bm d\ne\hat{\bm d}$; metric properties are imposed only when explicitly stated for certain choices of $s$ and $\mathfrak{s}$. We introduce the \ConfRS{} formulation as
\begin{equation*}
\begin{aligned}
    \kappa^*(\tau)=\inf_{\bm x,k} &\; k\\
    \text{s.t.} & \;  a(\bm{x},\bm{d})-\tau \le k\, \mathfrak{s}(\bm{d},\hat{\bm{d}}),\quad \forall \bm{d}\in \mathfrak{D},\\
     &\; \bm{x}\in \mathcal{X},\quad k\ge 0.
\end{aligned}\tag{\ConfRS{}} \label{eq:ConfRS}
\end{equation*}
We use the extended-value convention $\inf\varnothing=+\infty$.
Here $\mathfrak D\subseteq\mathbb R^J$ denotes the support of the uncertain parameter, rather than an uncertainty set to be calibrated. Equivalently, the key constraint in \eqref{eq:ConfRS} can be written as
\[
\sup_{\bm d\in\mathfrak D}\{a(\bm x,\bm d)-k\mathfrak s(\bm d,\hat{\bm d})\}\le \tau.
\]
\begin{remark}
\label{rem:multi_constraints}
For multiple objectives or constraints, one may introduce target levels $\tau_i$ and fragility variables $k_i$ through constraints of the form $a_i(\bm x,\bm d)-\tau_i\le k_i\mathfrak s(\bm d,\hat{\bm d})$, and minimize a weighted aggregate $\bm w^\top\bm k$.
\end{remark}

The optimal value $\kappa^*(\tau)$ in \eqref{eq:ConfRS} characterizes \emph{decision fragility}: small $\kappa^*(\tau)$ means that the target violation $a(\bm{x},\bm{d})-\tau$ scales slowly with respect to the score deviation $\mathfrak{s}(\bm{d}, \hat{\bm{d}})$. Unlike distributionally robust satisficing, which controls worst-case expected performance through Wasserstein distance from a reference distribution, \ConfRS{} controls realized target violations through a post-prediction score deviation $\mathfrak{s}(\bm{d}, \hat{\bm{d}})$.

Denote the boundary target values
\[
    \underline\tau:=\inf_{\bm x\in\mathcal X}a(\bm x,\hat{\bm d}),
    \qquad
    \bar\tau:=\inf_{\bm x\in\mathcal X}\sup_{\bm d\in\mathfrak D}a(\bm x,\bm d).
\]
If $\tau<\underline\tau$, the constraint at $\bm d=\hat{\bm d}$ is infeasible. If the robust benchmark is attained and $\tau\ge\bar\tau$, then there exists a decision whose worst-case cost over $\mathfrak D$ is at most $\tau$, so the optimal fragility is $\kappa^*(\tau)=0$. Therefore, in order for $0<\kappa^*(\tau)<\infty$ to hold, $\tau$ should lie in $[\underline \tau, \bar\tau)$.

More specifically, for a fixed decision $\bm x$, finite fragility is equivalent to
\begin{equation}
\label{eq:finite_fragility_condition}
    a(\bm x,\hat{\bm d})\le \tau
    \quad\text{and}\quad
    \sup_{\bm d\in\mathfrak D\setminus\{\hat{\bm d}\}}
    \frac{(a(\bm x,\bm d)-\tau)_+}{\mathfrak s(\bm d,\hat{\bm d})}<\infty.
\end{equation}
Bounded support, continuity, and local Lipschitz continuity of $a(\bm x,\cdot)$ with respect to $\mathfrak s$ are sufficient for \eqref{eq:finite_fragility_condition}. For unbounded $\mathfrak D$, a simple sufficient growth condition is
\begin{equation}
    \limsup_{\|\bm{d}\|\to\infty}\frac{a(\bm{x},\bm{d})-\tau}{\mathfrak{s}(\bm{d},\hat{\bm{d}})} < \infty,
\end{equation}
so the score-induced deviation grows at least as fast as the objective along the tails. This condition is satisfied, for example, by affine objectives on $\mathbb R^J_+$ under norm-based scores, including the fractional knapsack model studied in Section~\ref{sec:experiments}.

\subsubsection{Conformal Score Induces a Fragility Measure.}
We next make explicit the fragility notion embedded in \ConfRS{}. The optimal value $\kappa^*(\tau)$ measures the minimum fragility at target $\tau$. More generally, any conformal score induces a \emph{fragility measure} satisfying the required axiomatic conditions. Such induction is important because it directly links the deviation from the nominal forecast $\hat{\bm{d}}$, captured by $\mathfrak{s}(\bm{d}, \hat{\bm{d}})$,  with a robustness criterion of the downstream decision, captured by the fragility measure that is formally defined next.

For notational simplicity, we denote the target violation $v_{\bm x}(\bm d):=a(\bm x,\bm d)-\tau$ for a fixed decision $\bm x$ and target $\tau$. When there is no ambiguity, let $v$ denote a generic violation function.

\begin{definition}[Conformal Fragility]
    \label{def:fragility_measure}
    Given a violation function $v$ and score deviation function $\mathfrak{s}(\bm{d}, \hat{\bm d})$,  its corresponding conformal fragility is defined as
    \begin{equation}
        \rho(v):=\inf\left\{k\ge0: v(\bm d)\le k\mathfrak s(\bm d, \hat{\bm d}),\ \forall \bm d\in\mathfrak D\right\},
    \end{equation}
    with the convention that $\inf\varnothing=+\infty$.
\end{definition}

Under Definition~\ref{def:fragility_measure}, \eqref{eq:ConfRS} is equivalently viewed as $\inf_{\bm x\in\mathcal X}\ \rho\big(a(\bm x,\bm d)-\tau\big)$, so any optimizer selects a decision whose target-violation function has the smallest conformal fragility. We prove that $\rho$ satisfies the standard axiomatic requirements of a fragility measure in the satisficing literature \citep{brown2009satisficing} and thus establish it as a measure of fragility.

\begin{theorem}[Conformal Fragility Measure]
    \label{thm:fragility_measure}
    Suppose the score-induced deviation satisfies $\mathfrak s(\bm d,\hat{\bm d})\ge0$, $\mathfrak s(\hat{\bm d},\hat{\bm d})=0$, and $\mathfrak s(\bm d,\hat{\bm d})>0$ for $\bm d\ne\hat{\bm d}$. The conformal fragility measure $\rho$ is lower semicontinuous and satisfies the following properties.
    \begin{enumerate}[label={\textup{(\roman*)}}]
        \item \textbf{Monotonicity:} If $v_{1}(\bm{d})\ge v_{2}(\bm{d})$ for all $\bm{d}\in \mathfrak{D}$, then $\rho(v_1)\ge \rho(v_2)$. \label{thm:fm_1}
        \item \textbf{Positive homogeneity:} For any $\lambda\ge0$, $\rho(\lambda v)=\lambda \rho(v)$. \label{thm:fm_2}
        \item \textbf{Subadditivity:} $\rho(v_1+v_2)\le \rho(v_1)+\rho(v_2)$. \label{thm:fm_3}
        \item \textbf{Pro-robustness:} If $v(\bm{d})\le 0$ for all $\bm{d}\in \mathfrak{D}$, then $\rho(v)=0$. \label{thm:fm_4}
        \item \textbf{Anti-fragility:} If $v(\hat{\bm{d}})>0$, then $\rho(v)=+\infty$. \label{thm:fm_5}
    \end{enumerate}
\end{theorem}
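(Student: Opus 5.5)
The plan is to first rewrite $\rho$ in a closed form and then read off each property from it. Let $\mathfrak D^\circ:=\mathfrak D\setminus\{\hat{\bm d}\}$, on which $\mathfrak s(\bm d,\hat{\bm d})>0$. Suppose $v(\hat{\bm d})\le 0$, so that the constraint at $\hat{\bm d}$, namely $v(\hat{\bm d})\le k\cdot 0$, holds for every $k$. Then the feasible set $\{k\ge0: v\le k\mathfrak s\}$ equals $\{k\ge 0: k\ge v(\bm d)/\mathfrak s(\bm d,\hat{\bm d})\ \forall \bm d\in\mathfrak D^\circ\}$. This gives
\[
\rho(v)=\sup_{\bm d\in\mathfrak D^\circ}\frac{(v(\bm d))_+}{\mathfrak s(\bm d,\hat{\bm d})},
\]
with the convention $\sup\varnothing=0$. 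If instead $v(\hat{\bm d})>0$, no $k$ is feasible and $\rho(v)=+\infty$. That is exactly anti-fragility (v), and it is the reason (v) is stated separately. Equivalently, $\rho(v)=\sup_{\bm d\in\mathfrak D}\phi_{\bm d}(v)$, where $\phi_{\bm d}(v):=(v(\bm d))_+/\mathfrak s(\bm d,\hat{\bm d})$ for $\bm d\ne\hat{\bm d}$ and $\phi_{\hat{\bm d}}(v):=+\infty\cdot\indc\{v(\hat{\bm d})>0\}$.

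With this representation the axioms are short.

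Monotonicity (i): if $v_1\ge v_2$, then every $k$ feasible for $v_1$ is feasible for $v_2$, so the infimum can only decrease.

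Positive homogeneity (ii): for $\lambda>0$, the map $k\mapsto\lambda k$ is a bijection between the feasible sets of $v$ and $\lambda v$. For $\lambda=0$, the function $0\cdot v\equiv 0$ gives $\rho=0$, and I adopt the convention $0\cdot\infty=0$.

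Subadditivity (iii): if $\rho(v_1)$ or $\rho(v_2)$ is infinite there is nothing to prove. Otherwise, for any feasible $k_1,k_2$ we have $v_1+v_2\le (k_1+k_2)\mathfrak s$ pointwise. Taking infima over $k_1$ and $k_2$ gives the bound.

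Pro-robustness (iv): if $v\le0$, then $k=0$ is feasible.

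The step needing care is lower semicontinuity, because a topology on violation functions must be fixed. I will use pointwise convergence, which is the weakest natural choice and so yields the strongest statement; it also covers uniform convergence. Each $\phi_{\bm d}$ is lower semicontinuous along pointwise convergent sequences (or nets):

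\begin{itemize}
\item For $\bm d\ne\hat{\bm d}$, the map $v\mapsto (v(\bm d))_+/\mathfrak s(\bm d,\hat{\bm d})$ is continuous.
\item For $\bm d=\hat{\bm d}$, the indicator $v\mapsto +\infty\cdot\indc\{v(\hat{\bm d})>0\}$ is lower semicontinuous, because $\{v: v(\hat{\bm d})>0\}$ is open under pointwise convergence.
\end{itemize}

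A pointwise supremum of lower semicontinuous functions is lower semicontinuous, so $\rho$ is lower semicontinuous.

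An alternative, direct check: take $v_n\to v$ with $\liminf\rho(v_n)=L<\infty$, and pass to a subsequence with $\rho(v_n)\to L$. For each $\epsilon>0$ and large $n$, we have $v_n(\bm d)\le (L+\epsilon)\mathfrak s(\bm d,\hat{\bm d})$ for all $\bm d$. Letting $n\to\infty$ pointwise gives $v\le (L+\epsilon)\mathfrak s$, so $\rho(v)\le L+\epsilon$. The feasibility set is closed under pointwise limits because the inequalities are non-strict. Either route works, and I expect the only real obstacle to be stating the topology explicitly.
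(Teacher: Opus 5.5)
Your proposal is correct and follows essentially the same route as the paper. The paper also fixes the topology of pointwise convergence and proves lower semicontinuity by writing the epigraph of $\rho$ as an intersection over $\bm d\in\mathfrak D$ of the closed sets $\{(v,k): k\ge 0,\ v(\bm d)\le k\,\mathfrak s(\bm d,\hat{\bm d})\}$; these are exactly the epigraphs of your functionals $\phi_{\bm d}$, and your direct limit argument is the sequential form of the same closedness. The remaining axioms come from the same feasible-set observations, except that the paper proves subadditivity via the supremum representation, whereas your argument that $k_1+k_2$ is feasible for $v_1+v_2$ is slightly cleaner.
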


Properties~\ref{thm:fm_1}--\ref{thm:fm_3} of Theorem~\ref{thm:fragility_measure} show that conformal fragility is a monotone, sublinear, and therefore convex, functional. Properties~\ref{thm:fm_4}--\ref{thm:fm_5} of Theorem~\ref{thm:fragility_measure} simply encode the boundary behavior required for satisficing: fragility vanishes, i.e., $\rho(v)=0$,  when the target is uniformly met and becomes infinite, i.e., $\rho(v)=+\infty$, when the target is missed at the baseline forecast. More specifically, when $v(\hat{\bm d})\le0$, this functional admits the representation
\begin{equation}
    \rho(v)=\sup_{\bm d\in\mathfrak D\setminus\{\hat{\bm d}\}}
    \frac{v(\bm d)_+}{\mathfrak s(\bm d,\hat{\bm d})},
    \qquad v(\bm d)_+:=\max\{v(\bm d),0\}.
\end{equation}
If $v(\hat{\bm d})>0$, then $\rho(v)=+\infty$ because the target is violated at the forecast $\hat{\bm{d}}$ with zero score deviation.  Theorem~\ref{thm:fragility_measure} thus formally establishes a parameter-uncertainty counterpart to robust satisficing fragility, centered around score-measured deviations from prediction.

\subsubsection{Data-Driven Target Guarantees.}
The fragility parameter also yields a data-driven certificate for target-violation probabilities. Fix $\tau$, and for each $\bm z\in\mathcal Z$, let $(\bm x^*(\bm z),k^*(\bm z))$ be an optimizer of \eqref{eq:ConfRS}. Given the calibration sample $\mathcal D_{\mathrm{cal}}=\{(\bm Z_i,\bm D_i)\}_{i=1}^n$, define
\[
R_\tau(\bm z,\bm d):=k^*(\bm z)s_{\hat f}(\bm z,\bm d),
\qquad
\widehat F_n^R(t):=\frac{1}{n}\sum_{i=1}^n
\mathbf 1\{R_\tau(\bm Z_i,\bm D_i)\le t\}.
\]

\begin{proposition}
    \label{prop:crs_performance}
    Under Assumption~\ref{asp:iid_stability}, for every violation margin $\Delta>0$ and confidence level $\epsilon\in(0,1)$, with probability at least $1-\epsilon$ over $\mathcal D_{\mathrm{cal}}$,
\begin{equation}
    \mathbb{P}\{a(\bm{x}^*(\bm Z_\text{test}), \bm{D}_{\mathrm{test}}) > \tau + \Delta\}
    \le 1 - \widehat{F}^{R}_n\left(\Delta\right) + \sqrt{\frac{\ln(1/\epsilon)}{2n}}.
\end{equation}
\end{proposition}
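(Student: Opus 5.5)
The argument has two parts: a deterministic pointwise reduction from target violation to the score-scaled variable $R_\tau$, followed by a one-sided DKW bound on the empirical CDF of $R_\tau$ over the calibration sample.

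\emph{Step 1: reduction to $R_\tau$.} Fix any context $\bm z$ and realization $\bm d\in\mathfrak D$. Feasibility of $(\bm x^*(\bm z),k^*(\bm z))$ in \eqref{eq:ConfRS} gives
\[
a(\bm x^*(\bm z),\bm d)-\tau\le k^*(\bm z)\,\mathfrak s_{\bm z}(\bm d,\hat{\bm d})=R_\tau(\bm z,\bm d).
\]
Hence we have the event inclusion
\[
\{a(\bm x^*(\bm Z_{\mathrm{test}}),\bm D_{\mathrm{test}})>\tau+\Delta\}\subseteq\{R_\tau(\bm Z_{\mathrm{test}},\bm D_{\mathrm{test}})>\Delta\}.
\]
Writing $F^R(t):=\mathbb P\{R_\tau(\bm Z,\bm D)\le t\}$ for the population CDF, it follows that the target-violation probability is at most $1-F^R(\Delta)$. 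Note that $R_\tau$ is a fixed measurable function: the map $\bm z\mapsto k^*(\bm z)$ depends only on the pre-fitted $\hat f$ and the target $\tau$, not on $\mathcal D_{\mathrm{cal}}$. This is the point to verify carefully, choosing a measurable selection of optimizers if several exist. We also need $k^*(\bm z)<\infty$; for contexts with $k^*=\infty$ we set $R_\tau=+\infty$, which keeps both the inclusion and the CDF argument valid.

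\emph{Step 2: concentration.} Under Assumption~\ref{asp:iid_stability}, the variables $R_\tau(\bm Z_i,\bm D_i)$ for $i=1,\ldots,n$ are i.i.d.\ copies of $R_\tau(\bm Z_{\mathrm{test}},\bm D_{\mathrm{test}})$. Since $\Delta$ is fixed in advance, $\widehat F_n^R(\Delta)$ is an average of $n$ i.i.d.\ Bernoulli$(F^R(\Delta))$ indicators. Hoeffding's inequality, or equivalently the one-sided DKW--Massart inequality, gives
\[
\mathbb P\{\widehat F_n^R(\Delta)-F^R(\Delta)>t\}\le e^{-2nt^2}.
\]
Setting $t=\sqrt{\ln(1/\epsilon)/(2n)}$, with probability at least $1-\epsilon$ we obtain $F^R(\Delta)\ge\widehat F_n^R(\Delta)-t$. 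Combining this with Step 1 yields the claimed bound. Using DKW in place of Hoeffding would make the bound hold uniformly over all $\Delta>0$ simultaneously, at no extra cost.

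\emph{Main obstacle.} The main obstacle is Step 1's measurability and data-independence: the probability in the statement is over a fresh test pair conditional on $\mathcal D_{\mathrm{cal}}$, so the test pair must be independent of the calibration sample and $k^*(\cdot)$ must not be tuned on it. Both hold under Assumption~\ref{asp:iid_stability} together with the setup. The remaining steps are routine.
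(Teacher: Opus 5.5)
Your proof is correct and follows the paper's argument. You use ConfRS feasibility to get $a(\bm x^*(\bm z),\bm d)-\tau\le R_\tau(\bm z,\bm d)$, which gives the event inclusion and the population bound $1-F^R(\Delta)$; you then apply a one-sided concentration inequality to the i.i.d.\ fragility-scaled calibration scores, with $R_\tau$ held fixed by conditioning on the pre-fitted predictor and a measurable optimizer rule. The only difference is cosmetic: the paper invokes the one-sided DKW inequality and evaluates it at $t=\Delta$, whereas your pointwise Hoeffding bound at the fixed $\Delta$ already suffices and is slightly more elementary.
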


Proposition~\ref{prop:crs_performance} clarifies how conformal calibration enters \ConfRS{}. The bound is constructed from held-out fragility-scaled scores obtained by evaluating the \ConfRS{} policy on the calibration contexts, rather than relying on distribution-specific concentration parameters or tail assumptions for the data-generating distribution \citep{long2023robust}. The result also highlights the value of predictive accuracy and score design: smaller prediction-error scores $s(\bm Z_i,\bm D_i)$ and smaller context-dependent fragility $k^*(\bm Z)$ concentrate the distribution of $R_\tau$ near zero, causing its empirical CDF to rise more quickly and tightening the bound on large target violations.
Thus, $k^*(\bm Z)$ converts a unit of score-measured prediction error into a decision-relevant upper envelope on target violation.
Section~\ref{sec:ConfRO_ConfRS_equivalence} formalizes this interpretation by building the connection between \ConfRO{} and \ConfRS{}.

By Proposition~\ref{prop:calibration_transfer}, it holds that
\[\mathbb P\{a(\bm x^*(\bm Z_{\mathrm{test}}),\bm D_{\mathrm{test}})\le \tau+k^*(\bm Z_{\mathrm{test}})\eta_\alpha\}\ge\alpha.
\]
Thus, $k^*\eta_\alpha$ is a calibrated upper bound on the target exceedance at reliability level $\alpha$, complementing the probabilistic guarantee from Proposition~\ref{prop:crs_performance}. Smaller fragility $k^*$ tightens this bound, while better predictors and better-aligned scores reduce $\eta_\alpha$. Together, these two data-driven guarantees show that conformal calibration converts the typically deterministic fragility measure into a finite-sample probabilistic certificate for target satisfaction.

\begin{remark}[Relation to Chance-Constrained Programs]
Chance-constrained programs restrict $\mathbb P\{a(\bm x^*,\bm D)>\tau+\Delta\}\le\gamma$ for a risk level $\gamma\in(0,1)$ and violation margin $\Delta>0$, but are often nonconvex or require conservative safe approximations \citep{jiang2022also}. Proposition~\ref{prop:crs_performance} gives a nonparametric data-driven approximation with finite-sample guarantees. Specifically, it suffices to impose
$1-\hat F^{R}_n(\Delta)+\sqrt{\ln(1/\epsilon)/(2n)}\le\gamma.$
Let $\zeta:=1-\gamma+\sqrt{\ln(1/\epsilon)/(2n)}$. When $\zeta<1$, this condition is equivalent to $\Delta\ge \hat q^R_\zeta$, where $\hat q^R_\zeta:=\inf\{r\ge0:\hat F^R_n(r)\ge\zeta\}$ is the empirical $\zeta$-quantile of the calibration scores. Since \ConfRS{} minimizes fragility, it also tightens the resulting family of chance-type certificates among feasible decisions under the same target and score specification.
\end{remark}

\subsubsection{\ConfRS{} vs. Distributionally Robust Satisficing.}
\label{subsubsec:crs_vs_drs}

\ConfRS{} shares the target-oriented philosophy of Distributionally Robust Satisficing (DRS), but differs in the role of uncertainty. Let $\mathcal P(\mathfrak D)$ denote the class of probability distributions supported on $\mathfrak D$, and let $\Delta$ be a distributional discrepancy on this class. Standard DRS evaluates satisficing at the level of expected performance relative to a reference distribution $\widehat P\in\mathcal P(\mathfrak D)$:
\begin{equation*}
\begin{aligned}
    \kappa_{\mathrm{DRS}}(\tau)
    = \min_{\bm x,k}\quad & k \\
    \text{s.t.}\quad
    & \mathbb{E}_{\tilde{\bm d}\sim P}
    \bigl[a(\bm x,\tilde{\bm d})\bigr]-\tau
    \leq k\,\Delta(P,\widehat P),
    \quad \forall P\in\mathcal P(\mathfrak D), \\
    & \bm x\in\mathcal X,\quad k\geq 0.
\end{aligned}
\tag{DRS}\label{eq:DRS}
\end{equation*}

For Wasserstein-based DRS with empirical reference distribution
$\widehat P=S^{-1}\sum_{s\in[S]}\delta_{\tilde{\bm d}_s}$, an equivalent
representation of \eqref{eq:DRS} given by \citet{long2023robust} is
\begin{equation*}
\begin{aligned}
    \kappa_{\mathrm{DRS}}(\tau)
    = \min_{\bm x,k}\quad & k \\
    \text{s.t.}\quad
    & \frac{1}{S}\sum_{s\in[S]}
    \sup_{\bm d\in\mathfrak D}
    \left\{
        a(\bm x,\bm d)
        -k\,\mathfrak c(\bm d,\tilde{\bm d}_s)
    \right\}
    \leq \tau, \\
    & \bm x\in\mathcal X,\quad k\geq 0,
\end{aligned}
\tag{DRS-E}\label{eq:DRS-E}
\end{equation*}
where $\{\tilde{\bm d}_s\}_{s\in[S]}$ are the support points of the empirical
reference distribution and $\mathfrak c$ is a transport cost, typically a norm
or a power of a metric.

When the score-induced deviation $\mathfrak s$ is an admissible transport cost, a
\emph{fixed-context} \ConfRS{} problem admits a one-support-point DRS
representation. Specifically, fix a context $\bm z$ and define the
prediction-generated reference distribution
$\widehat P_{\bm z}=\delta_{\hat f(\bm z)}$. Under the choices
$S=1$, $\tilde{\bm d}_1=\hat f(\bm z)$, $\mathfrak c(\bm d,\tilde{\bm d}_1)=s_{\hat f}(\bm z,\bm d)$,
\eqref{eq:DRS-E} reduces to
$
    \sup_{\bm d\in\mathfrak D}
    \left\{
        a(\bm x,\bm d)
        -k\,s_{\hat f}(\bm z,\bm d)
    \right\}
    \leq \tau,
$
which coincides with the optimization constraint in \eqref{eq:ConfRS}.

Despite this algebraic coincidence, the complete frameworks differ in how
prediction uncertainty is represented and calibrated. First, the conformal
score $s_{\hat f}(\bm z,\bm d)$ need not be a transport cost and may encode
context-dependent notions of forecast error beyond Wasserstein geometry.
Second, \ConfRS{} treats an arbitrary fitted point predictor as fixed and
centers each deployed problem at the current forecast
$\hat{\bm d}=\hat f(\bm z)$, producing one anchor per decision instance and a
context-indexed family of anchors and scores across instances. In contrast, the residual-based framework of \citet{sim2024analytics} uses a structured parametric prediction model to construct a multi-support predicted empirical distribution and explicitly fortifies the decision against estimation error in the prediction coefficients. Third, held-out calibration scores in \ConfRS{} provide finite-sample target-violation certificates and connect the resulting fragility to a reliability scale (Propositions~\ref{prop:crs_performance} and~\ref{prop:calibration_transfer}).

\begin{remark}
Later, Section~\ref{subsec:reliability_target_translation} includes a concrete Example~\ref{example:distinction} in which the set of decisions attainable under \ConfRS{} is strictly larger than the corresponding set under the DRS model considered there. This comparison highlights the distinction while also suggesting opportunities to enrich distributionally robust models through context-dependent prediction and conformal calibration. Indeed, extending conformal calibration of arbitrary black-box predictors to the distributional setting represents a promising direction for future research. A direct distribution-level extension may require richer predictive outputs, such as conditional distributions or context-dependent empirical reference measures, together with greater data and modeling requirements that may be impractical in some applications. We discuss these opportunities and challenges in Appendix~\ref{app:cdro_note}.
\end{remark}

\section{Score-Calibrated Robustness Frontiers: Equivalence and Sensitivity}
\label{sec:ConfRO_ConfRS_equivalence}

The two decision-making strategies, \ConfRO{} and \ConfRS{} presented in Section~\ref{sec:cro_crs_framework},  use the same post-prediction score-calibrated robustness scale in different manners, and Table~\ref{tab:cro_crs_comparison} contrasts their key components.
\begin{table}[htb]
 \vspace{-1em}
    \centering
    \caption{Key conceptual components of \ConfRO{} and \ConfRS{}.}
    \label{tab:cro_crs_comparison}
    \small
    \renewcommand{\arraystretch}{1.2}
    \setlength{\tabcolsep}{3pt}
    \begin{tabularx}{\textwidth}{@{}l
        >{\hsize=0.95\hsize\linewidth=\hsize\raggedright\arraybackslash}X
        >{\hsize=1.15\hsize\linewidth=\hsize\raggedright\arraybackslash}X
        >{\hsize=1.00\hsize\linewidth=\hsize\raggedright\arraybackslash}X
        >{\hsize=0.90\hsize\linewidth=\hsize\raggedright\arraybackslash}X@{}}
        \toprule
        {Model} & {Decision-maker input} & {Objective} & {Uncertainty anchor} & {Decision philosophy} \\
        \midrule
        {\ConfRO{}} & Coverage level $\alpha$ & Minimize cost $V_{\ConfRO}$ & Calibrated set $\mathcal U_\alpha(\bm z)$ & Reliability-based \\
        {\ConfRS{}} & Acceptable target $\tau$ & Minimize fragility $\kappa^*(\tau)$ & Score deviation $\mathfrak s(\bm d,\hat{\bm d})$ & Target-oriented \\
        \bottomrule
    \end{tabularx}
    \vspace{-1em}
\end{table}

Despite these differences, a natural question is whether the two formulations represent merely distinct robustness modeling choices, or whether a deeper connection exists. Addressing this question clarifies how reliability levels and acceptable targets can be mapped to one another, and how the cost of robustness evolves along the frontier.

\subsection{Dual Representation and Target Interpretation}
\label{subsec:dual_target_interpretation}

To build the connection, we consider a family of  \ConfRO{} problems indexed by $\theta$. Here $\theta$ is the score radius linked to the reliability level $\alpha$, defined as $\theta(\alpha):=F^{-1}(\alpha)$, where $F^{-1}$ is the inverse CDF of the score distribution. Let \ConfRO{}$(\theta)$ denote the problem with optimal value
\begin{equation}
\label{eq:ConfRO_theta}
    V(\theta):=
    \inf_{\bm{x}\in\mathcal{X}}\sup_{\bm{d}\in\mathfrak{D}(\theta)} a(\bm{x},\bm{d}),
    \qquad
    \text{ where } \mathfrak{D}(\theta):=\{\bm d\in\mathfrak D:\mathfrak s(\bm d,\hat{\bm d})\le\theta\}.
\end{equation}

Under strong duality for the inner worst-case maximization problem, \eqref{eq:ConfRO_theta} has the equivalent value representation
\begin{equation*}
\begin{aligned}
    \inf_{\bm x,k,\tau}&\;  \tau + k\theta\\
    \text{s.t.} &\;  \sup_{\bm{d}\in\mathfrak{D}}\{a(\bm{x},\bm{d})-k\mathfrak{s}(\bm{d},\hat{\bm{d}})\}\le\tau, \\
   &\;  \bm{x}\in\mathcal{X},\quad k\ge 0.
\end{aligned}\tag{\ConfRO{}-D} \label{eq:ConfRO-D}
\end{equation*}
A key observation is that \eqref{eq:ConfRO-D} and  \eqref{eq:ConfRS} share the same constraint structure.
Consequently, we can view \ConfRO{} as choosing a target $\tau$ and a fragility level $k$ and then minimizing the objective $\tau+k\theta$. This observation leads to the correspondence results in this section. We formally state the conditions needed in Assumption~\ref{ass:cro_crs_equiv}.

\begin{assumption}
    \label{ass:cro_crs_equiv}
    For the \ConfRO{} and \ConfRS{} problems considered, the following conditions hold.
    \begin{enumerate}[label={\textup{(\roman*)}}]
    \item The decision set $\mathcal X$ is convex, and $a(\bm x,\bm d)$ is proper, closed, and convex in $\bm x$ for every $\bm d\in\mathfrak D$. \label{ass:equiv_convex}

    \item   \label{ass:equiv_dual} For every $\bm x\in\mathcal X$ and every score radius $\theta$ of interest,
    \[
        \sup_{\bm d\in\mathfrak D(\theta)}a(\bm x,\bm d)
        =
        \inf_{k\ge0}
        \left\{
        k\theta+
        \sup_{\bm d\in\mathfrak D}
        \bigl(a(\bm x,\bm d)-k\mathfrak s(\bm d,\hat{\bm d})\bigr)
        \right\}.
    \]
    \item The uncertainty enters through one objective (or one constraint through reformulation). Multiple independent uncertain constraints require an additional aggregation rule and need not yield the same decision-level correspondence.  \label{ass:equiv_single}
    \end{enumerate}
\end{assumption}

A set of verifiable sufficient conditions for Assumption~\ref{ass:cro_crs_equiv}\ref{ass:equiv_dual} is given by Proposition~\ref{prop:sufficient_duality} below.
Thus, the correspondence results apply to a broad but explicitly delimited class of convex models as specified in condition \ref{ass:equiv_convex} with uncertain objective functions, including the fractional knapsack problem in Section~\ref{subsec:knapsack}; see a summary of other score geometries in Table~\ref{tab:score_duality_conditions}. Similar to the distributional case \citep[Remark~\ref{rem:multi_constraints}]{wang2025equivalence}, this correspondence need not hold when there are multiple uncertain constraints without an additional aggregation rule; we therefore impose condition \ref{ass:equiv_single} of Assumption~\ref{ass:cro_crs_equiv}. These conditions, however, do not impose computational restrictions; both \ConfRO{} and \ConfRS{} can be implemented beyond Assumption~\ref{ass:cro_crs_equiv}.

\begin{proposition}
\label{prop:sufficient_duality}
Given any $\bm x\in\mathcal X$ and $\theta>0$, suppose $\mathfrak D$ is closed and convex, $\mathfrak s(\cdot,\hat{\bm d})$ is proper, closed, and convex on $\mathfrak D$, $a(\bm x,\cdot)$ is concave and upper semicontinuous on $\mathfrak D$, the worst-case value over $\mathfrak D(\theta)$ is finite, and there exists $\bar{\bm d}\in\operatorname{ri}(\mathfrak D)$ such that $\mathfrak s(\bar{\bm d},\hat{\bm d})<\theta$ and $a(\bm x,\bar{\bm d})$ is finite. Then the strong duality condition in Assumption~\ref{ass:cro_crs_equiv}\ref{ass:equiv_dual} holds for $\bm x$ and $\theta$, and the infimum over $k$ is attained.
\end{proposition}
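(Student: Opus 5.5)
Fix $\bm x\in\mathcal X$ and $\theta>0$. Write $g(\bm d):=a(\bm x,\bm d)$ and $h(\bm d):=\mathfrak s(\bm d,\hat{\bm d})-\theta$. Then the worst-case value
\[
    P:=\sup_{\bm d\in\mathfrak D}\{g(\bm d): h(\bm d)\le 0\}
\]
is a concave maximization over the closed convex set $\mathfrak D$ with a single convex inequality constraint. The plan is to read Assumption~\ref{ass:cro_crs_equiv}\ref{ass:equiv_dual} as Lagrangian strong duality for this program, with multiplier $k\ge0$ on the constraint $h\le0$. The Lagrangian is
\[
    L(\bm d,k)=g(\bm d)-k\bigl(\mathfrak s(\bm d,\hat{\bm d})-\theta\bigr),
\]
so the dual function is
\[
    \sup_{\bm d\in\mathfrak D}L(\bm d,k)=k\theta+\sup_{\bm d\in\mathfrak D}\bigl(a(\bm x,\bm d)-k\mathfrak s(\bm d,\hat{\bm d})\bigr).
\]
Minimizing this over $k\ge0$ gives exactly the right-hand side of Assumption~\ref{ass:cro_crs_equiv}\ref{ass:equiv_dual}.

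\textbf{Step 1 (weak duality).} For any feasible $\bm d$ and any $k\ge0$, we have $g(\bm d)\le L(\bm d,k)$ because $-k h(\bm d)\ge 0$. Taking the supremum over $\bm d$ and the infimum over $k$ yields $P\le \inf_{k\ge0}\sup_{\bm d\in\mathfrak D} L(\bm d,k)$. This direction needs no assumptions.

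\textbf{Step 2 (strong duality with attainment).} Here I would invoke the standard convex (Rockafellar/Slater-type) Lagrange multiplier theorem for $\sup\{g:h\le0,\ \bm d\in C\}$ with $C=\mathfrak D$. Its hypotheses are as follows. The objective $-g$ must be convex and proper on $C$; this follows from concavity and upper semicontinuity of $a(\bm x,\cdot)$, with finiteness at $\bar{\bm d}$ ensuring properness. The constraint $h$ must be convex and proper on $C$. The optimal value must be finite, which is assumed. Finally, a Slater point is needed: some $\bar{\bm d}\in\operatorname{ri}(C)\cap\operatorname{dom}g\cap\operatorname{ri}(\operatorname{dom} h)$ with $h(\bar{\bm d})<0$. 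The theorem (e.g., Rockafellar, \emph{Convex Analysis}, Thm.~28.2) then gives a Kuhn--Tucker vector $k^*\ge0$ with $P=\sup_{\bm d\in\mathfrak D}L(\bm d,k^*)$. Combined with Step 1, this simultaneously proves equality and attainment of the infimum over $k$.

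\textbf{Main obstacle.} The delicate point is matching the given Slater hypothesis to the relative-interior requirement of the theorem. We are given $\bar{\bm d}\in\operatorname{ri}(\mathfrak D)$ with $\mathfrak s(\bar{\bm d},\hat{\bm d})<\theta$ and $a(\bm x,\bar{\bm d})$ finite, but the theorem also needs a point in the relative interior of the effective domains of $g$ and $h$. I would handle this by a perturbation argument. Take any point $\bm d'$ in the relative interior of $\operatorname{dom} g\cap\operatorname{dom}h\cap\mathfrak D$; this intersection is nonempty because it contains $\bar{\bm d}$. Then move slightly along the segment from $\bar{\bm d}$ toward $\bm d'$. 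Along this segment, convexity of $h$ preserves $h<0$ near $\bar{\bm d}$, and the line-segment principle for relative interiors places the perturbed point in the required relative interior.

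If that technicality becomes cumbersome, I would switch to a perturbation-function argument. Define $p(u):=\sup\{g(\bm d):h(\bm d)\le u,\ \bm d\in\mathfrak D\}$. This function is concave and nondecreasing, and it is finite at $u=0$. The Slater point gives $p(u)>-\infty$ for $u$ in a neighborhood of $0$, so $0\in\operatorname{int}\operatorname{dom}p$. A concave function is therefore superdifferentiable at $0$, and any supergradient $k^*\ge0$ satisfies $p(u)\le p(0)+k^*u$. Setting $u=h(\bm d)$ for arbitrary $\bm d\in\mathfrak D$ gives $g(\bm d)-k^*h(\bm d)\le P$, which is the reverse inequality with attainment. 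This second route avoids relative-interior bookkeeping entirely, since only finiteness of $p$ near $0$ is needed, and I would likely present the proof this way.
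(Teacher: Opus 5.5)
Your proposal is correct. Steps~1--2 are exactly the paper's proof: weak duality, then a Slater-type multiplier theorem for $\inf\{-a(\bm x,\bm d):\mathfrak s(\bm d,\hat{\bm d})-\theta\le 0,\ \bm d\in\mathfrak D\}$, read back with a sign flip. The paper simply declares $\bar{\bm d}$ a Slater point. Your segment argument, which pushes $\bar{\bm d}$ into the relative interior of $\mathfrak D\cap\operatorname{dom}(-a(\bm x,\cdot))\cap\operatorname{dom}\mathfrak s(\cdot,\hat{\bm d})$, is the bookkeeping needed if one cites Rockafellar's Theorem~28.2 with $C$ taken to be that intersection, since that theorem is stated for functions finite on $C$.

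The perturbation-function route you say you would present is genuinely different, and it is also sound. Instead of citing a multiplier theorem, it proves one, using only that $p$ is concave, nondecreasing, finite at $0$, and $>-\infty$ on the open interval $(\mathfrak s(\bar{\bm d},\hat{\bm d})-\theta,\infty)\ni 0$. Properness of $p$ then follows from concavity, and $k^*\ge0$ follows from monotonicity. This buys two things:
\begin{itemize}
\item It never uses $\bar{\bm d}\in\operatorname{ri}(\mathfrak D)$ or the closedness hypotheses. So for a single inequality constraint, any strictly feasible point with finite objective suffices.
\item It exhibits the optimal $k^*$ as a supergradient of $u\mapsto\sup\{a(\bm x,\bm d):\bm d\in\mathfrak D,\ \mathfrak s(\bm d,\hat{\bm d})\le\theta+u\}$ at $u=0$. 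That is, $k^*$ is the marginal worst-case cost of enlarging the score radius for the fixed decision, the same interpretation the paper later derives for $V(\theta)$ in Theorem~\ref{thm:target_value_marginal_cost}.
\end{itemize}
The paper's one-line citation is shorter but less self-contained.

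One minor caveat applies to both routes and to the paper. Since $\mathfrak s(\cdot,\hat{\bm d})$ is only assumed proper, your step ``set $u=h(\bm d)$ for arbitrary $\bm d\in\mathfrak D$'' tacitly needs $h(\bm d)<\infty$. Points with $\mathfrak s(\bm d,\hat{\bm d})=+\infty$ must therefore be excluded from $\sup_{\bm d\in\mathfrak D}$, or equivalently one reads $0\cdot(+\infty)=+\infty$ in the Lagrangian. Under the convention $0\cdot(+\infty)=0$, attainment can fail when $k^*=0$. For example, take $\mathfrak D=\mathbb R$, $\hat d=0$, $\mathfrak s(d,0)=|d|$ for $d\le 1$ and $+\infty$ otherwise, $a(x,d)=d$, and $\theta=2$. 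The dual objective is then $+\infty$ at $k=0$ and $1+k$ for $0<k\le 1$, so the infimum $1$ (the primal value) is not attained. This does not affect the finite-valued scores used in the paper.
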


\begin{proposition}[Target-Oriented Interpretation of \ConfRO{}]
    \label{prop:cro_to_crs}
    Suppose Assumption~\ref{ass:cro_crs_equiv}  \ref{ass:equiv_dual}-\ref{ass:equiv_single} holds. Fix $\theta>0$. For any optimal solution $(\bm x^*,k^*,\tau^*)$ of \eqref{eq:ConfRO-D}, the decision-fragility pair $(\bm x^*,k^*)$ is optimal for \ConfRS{}$(\tau^*)$.
\end{proposition}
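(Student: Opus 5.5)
The argument is a direct exchange: if some feasible pair for \ConfRS{}$(\tau^*)$ had strictly smaller fragility than $k^*$, then pairing it with the same target $\tau^*$ would give a strictly smaller objective in \eqref{eq:ConfRO-D}, contradicting optimality of $(\bm x^*,k^*,\tau^*)$.

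\emph{Step 1 (feasibility).} The triple $(\bm x^*,k^*,\tau^*)$ is feasible for \eqref{eq:ConfRO-D}. Hence $\bm x^*\in\mathcal X$, $k^*\ge0$, and
\[
\sup_{\bm d\in\mathfrak D}\{a(\bm x^*,\bm d)-k^*\mathfrak s(\bm d,\hat{\bm d})\}\le\tau^*.
\]
This is precisely the constraint of \eqref{eq:ConfRS} with target $\tau^*$. So $(\bm x^*,k^*)$ is feasible for \ConfRS{}$(\tau^*)$, and $\kappa^*(\tau^*)\le k^*<\infty$.

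\emph{Step 2 (optimality by contradiction).} Suppose some feasible $(\bm x',k')$ for \ConfRS{}$(\tau^*)$ has $k'<k^*$. Then $(\bm x',k',\tau^*)$ satisfies every constraint of \eqref{eq:ConfRO-D}, because those constraints are identical to the \ConfRS{} constraints. Its objective value is
\[
\tau^*+k'\theta<\tau^*+k^*\theta,
\]
where the strict inequality uses $\theta>0$. This contradicts optimality of $(\bm x^*,k^*,\tau^*)$ in \eqref{eq:ConfRO-D}. Therefore $k^*=\kappa^*(\tau^*)$, and the infimum in \eqref{eq:ConfRS} is attained at $(\bm x^*,k^*)$.

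\emph{Step 3 (role of the assumptions).} The argument above only uses the shared constraint structure of \eqref{eq:ConfRO-D} and \eqref{eq:ConfRS}. Assumption~\ref{ass:cro_crs_equiv}\ref{ass:equiv_dual} is what makes \eqref{eq:ConfRO-D} a faithful representation of \ConfRO{}$(\theta)$: for each fixed $\bm x$, minimizing over $(k,\tau)$ with $\tau$ set equal to the inner supremum gives $\sup_{\bm d\in\mathfrak D(\theta)}a(\bm x,\bm d)$. I would note this identity so that "optimal for \eqref{eq:ConfRO-D}" carries its intended meaning as a \ConfRO{} decision. Condition~\ref{ass:equiv_single} ensures there is a single fragility variable, so that minimizing $\tau+k\theta$ at fixed $\tau$ is the same as minimizing $k$.

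The only delicate point is the strict inequality in Step 2, which requires $\theta>0$; that is exactly why the statement fixes $\theta>0$. One should also check that $\tau^*$ is finite, which holds because the \eqref{eq:ConfRO-D} problem attains a finite optimum.
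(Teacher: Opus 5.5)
Your proposal is correct and follows essentially the same route as the paper's proof. You show feasibility of $(\bm x^*,k^*)$ for \ConfRS{}$(\tau^*)$ from the shared constraint, then lift any competing \ConfRS{}$(\tau^*)$ pair to a feasible point of \eqref{eq:ConfRO-D} with the same $\tau^*$. Optimality and $\theta>0$ then give $k^*\le k$. The paper argues this comparison directly rather than by contradiction, which is only a cosmetic difference.
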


Proposition~\ref{prop:cro_to_crs} shows that every attained solution of the dual representation has a target-oriented interpretation: the optimizer selects a target $\tau^*$ and then chooses a decision with minimum fragility at that target. In particular, every decision represented by an attained optimizer of \eqref{eq:ConfRO-D} belongs to a \ConfRS{} optimal decision set at its endogenously selected target. This one-sided mapping follows directly from the shared constraint structure exposed by \eqref{eq:ConfRO-D}.

\subsection{Reliability--Target Translation}
\label{subsec:reliability_target_translation}

The reverse direction is more challenging: can a target specified exogenously in \ConfRS{} be represented by an appropriate robust set size (i.e., the score radius $\theta$)? We provide an affirmative answer in Theorem~\ref{thm:crs_to_cro}.

Recall that $\kappa^*(\tau)$ denotes the extended optimal value of \ConfRS{}$(\tau)$. Under Assumption~\ref{ass:cro_crs_equiv}, $\kappa^*$ is convex and nonincreasing on its effective domain. For a fixed score radius $\theta$, \eqref{eq:ConfRO-D} reduces to the following scalar value problem:
\begin{equation*}
    V(\theta)=\inf_{\tau\ge\underline\tau} g(\tau,\theta),
    \qquad g(\tau,\theta):=\tau+\theta\kappa^*(\tau).
    \tag{\ConfRO{}-E}\label{eq:cro_e}
\end{equation*}

\begin{theorem}[Reliability--Target Translation]
    \label{thm:crs_to_cro}
    Suppose Assumption~\ref{ass:cro_crs_equiv} holds and, for the coverage-level interpretation, Assumption~\ref{asp:exchangeability} holds. Fix an interior target $\tau_{\mathrm{rs}}\in(\underline\tau,\bar\tau)$ and choose any subgradient $\beta(\tau_{\mathrm{rs}})\in\partial\kappa^*(\tau_{\mathrm{rs}})$ with $\beta(\tau_{\mathrm{rs}})<0$. Set $
        \theta^*=-{1}/{\beta(\tau_{\mathrm{rs}})}$.
    \begin{enumerate}[label={\textup{(\roman*)}}, itemsep=0.25em, topsep=0.25em]
        \item\label{thm:crs_to_cro_correspondence_item} Every optimal decision of \ConfRS{}$(\tau_{\mathrm{rs}})$ is optimal for \ConfRO{}$(\theta^*)$.
        \item\label{thm:crs_to_cro_equivalence_item} If, in addition, $\tau_{\mathrm{rs}}$ is the unique minimizer of $g(\cdot,\theta^*)$ in \eqref{eq:cro_e} and the score-dual infimum in Assumption~\ref{ass:cro_crs_equiv}\ref{ass:equiv_dual} is attained for every \ConfRO{}$(\theta^*)$ optimal decision, then \ConfRS{}$(\tau_{\mathrm{rs}})$ and \ConfRO{}$(\theta^*)$ have the same optimal decision set.
        \item\label{thm:crs_to_cro_coverage_item} Furthermore, if the score distribution has continuous CDF $F$, then the corresponding coverage level is $\alpha^*=F(\theta^*)=F(-1/\beta(\tau_{\mathrm{rs}}))$. With atoms, the same statement holds using the generalized quantile convention, possibly yielding an interval of coverage levels.
    \end{enumerate}
\end{theorem}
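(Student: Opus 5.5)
The plan is to reduce everything to the scalar problem \eqref{eq:cro_e} and use convexity of $\kappa^*$. First I record why $V(\theta)=\inf_{\tau}g(\tau,\theta)$. By Assumption~\ref{ass:cro_crs_equiv}\ref{ass:equiv_dual}, for each $\bm x$,
\[
\sup_{\bm d\in\mathfrak D(\theta)}a(\bm x,\bm d)=\inf_{k\ge0,\tau}\Bigl\{\tau+k\theta:\ \sup_{\bm d\in\mathfrak D}\{a(\bm x,\bm d)-k\mathfrak s(\bm d,\hat{\bm d})\}\le\tau\Bigr\}.
\]
Minimizing over $\bm x$ gives \eqref{eq:ConfRO-D}. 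Minimizing first over $(\bm x,k)$ with $\tau$ fixed gives $\tau+\theta\kappa^*(\tau)$.

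Convexity of $\kappa^*$ follows from joint convexity of the feasible set in $(\bm x,k,\tau)$. The map $(\bm x,k)\mapsto\sup_{\bm d}\{a(\bm x,\bm d)-k\mathfrak s\}$ is jointly convex, being a supremum of functions convex in $\bm x$ and affine in $k$. Hence $\kappa^*(\tau)$ is the value function of a convex program parametrized by the right-hand side $\tau$. It is nonincreasing because raising $\tau$ enlarges the feasible set.

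For part (i), the subgradient inequality gives $\kappa^*(\tau)\ge\kappa^*(\tau_{\mathrm{rs}})+\beta(\tau-\tau_{\mathrm{rs}})$ for every $\tau$. Multiply by $\theta^*=-1/\beta>0$ and add $\tau$:
\[
g(\tau,\theta^*)\ge\tau_{\mathrm{rs}}+\theta^*\kappa^*(\tau_{\mathrm{rs}})=g(\tau_{\mathrm{rs}},\theta^*),
\]
so $\tau_{\mathrm{rs}}$ minimizes $g(\cdot,\theta^*)$ and $V(\theta^*)=g(\tau_{\mathrm{rs}},\theta^*)$. The interior choice $\tau_{\mathrm{rs}}\in(\underline\tau,\bar\tau)$ keeps $\kappa^*$ finite there; convexity then gives $\partial\kappa^*(\tau_{\mathrm{rs}})\neq\varnothing$, so a subgradient $\beta<0$ is meaningful.

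Now take any optimal decision $(\bm x_{\mathrm{rs}},k_{\mathrm{rs}})$ of \ConfRS{}$(\tau_{\mathrm{rs}})$, so $k_{\mathrm{rs}}=\kappa^*(\tau_{\mathrm{rs}})$. Then $(\bm x_{\mathrm{rs}},k_{\mathrm{rs}},\tau_{\mathrm{rs}})$ is feasible for \eqref{eq:ConfRO-D} with value $g(\tau_{\mathrm{rs}},\theta^*)=V(\theta^*)$. By the duality identity in the first display,
\[
\sup_{\mathfrak D(\theta^*)}a(\bm x_{\mathrm{rs}},\cdot)\le\tau_{\mathrm{rs}}+k_{\mathrm{rs}}\theta^*=V(\theta^*),
\]
so $\bm x_{\mathrm{rs}}$ is optimal for \ConfRO{}$(\theta^*)$.

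For part (ii), take any \ConfRO{}$(\theta^*)$-optimal $\bm x$. Attainment of the score-dual infimum gives $k\ge0$ with
\[
k\theta^*+\sup_{\bm d\in\mathfrak D}\{a(\bm x,\bm d)-k\mathfrak s(\bm d,\hat{\bm d})\}=V(\theta^*).
\]
Set $\tau:=\sup_{\bm d}\{a(\bm x,\bm d)-k\mathfrak s(\bm d,\hat{\bm d})\}$. Then $(\bm x,k,\tau)$ is optimal for \eqref{eq:ConfRO-D}. By Proposition~\ref{prop:cro_to_crs}, $k=\kappa^*(\tau)$, so $g(\tau,\theta^*)=V(\theta^*)$ and $\tau$ minimizes $g(\cdot,\theta^*)$. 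Uniqueness forces $\tau=\tau_{\mathrm{rs}}$, so $\bm x$ is \ConfRS{}$(\tau_{\mathrm{rs}})$-optimal. Combined with (i), the two optimal decision sets coincide.

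The main obstacle is this step. It needs $\tau$ to be finite, and it needs $V(\theta^*)$ to be finite and attained by the triple $(\bm x,k,\tau)$. Both rest on the attainment hypothesis; otherwise the endogenous target could fail to exist.

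For part (iii), use $\theta(\alpha)=F^{-1}(\alpha)$. With $F$ continuous, $F(F^{-1}(\alpha))=\alpha$, so the radius $\theta^*$ corresponds to $\alpha^*=F(\theta^*)$. Proposition~\ref{prop:calibration_transfer}, under Assumption~\ref{asp:exchangeability}, then interprets this level as coverage. With atoms, every $\alpha\in[F(\theta^*-),F(\theta^*)]$ whose generalized quantile equals $\theta^*$ qualifies, which yields an interval of coverage levels.
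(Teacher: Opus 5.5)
Your proposal is correct and follows the paper's proof essentially step for step. It uses convexity of $\kappa^*$ and the subgradient condition showing that $\tau_{\mathrm{rs}}$ minimizes $g(\cdot,\theta^*)$. It then lifts any \ConfRS{}$(\tau_{\mathrm{rs}})$ optimizer to an optimal triple of \eqref{eq:ConfRO-D} and, via the score-duality identity, to \ConfRO{}$(\theta^*)$. For part (ii), dual attainment, Proposition~\ref{prop:cro_to_crs}, and uniqueness force the endogenous target back to $\tau_{\mathrm{rs}}$.

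The differences are cosmetic. You check minimality with the subgradient inequality rather than the subdifferential sum rule. You also omit the paper's check that the endogenous target lies in $[\underline\tau,\bar\tau]$; this is harmless, since feasibility at $\bm d=\hat{\bm d}$ already gives $\tau\ge\underline\tau$. Finally, your aside that interiority alone makes $\kappa^*(\tau_{\mathrm{rs}})$ finite is not needed, because the assumed subgradient $\beta(\tau_{\mathrm{rs}})$ already presupposes finiteness.
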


Theorem~\ref{thm:crs_to_cro} makes the reliability--target translation precise. The selected subgradient $\beta(\tau)$ is the marginal change in minimum fragility as the acceptable target is relaxed. Because the theorem selects $\beta(\tau)<0$, the matched radius $\theta=-1/\beta(\tau)$ is positive. The condition $0\in 1+\theta\partial\kappa^*(\tau)$ is precisely the first-order condition for $\tau$ to solve \eqref{eq:cro_e}. When $g(\cdot,\theta)$ has a flat set of minimizers, the correspondence is set-valued rather than one-to-one; this is the mathematical source of the step-like parameter mappings observed in Figure~\ref{fig:parameter_mapping} of the numerical study.

The actual score distribution is not typically provided in practice, but its empirical distribution is observed during calibration. Therefore, the target-to-coverage map can also be approximated. For a fixed deterministic selection $\theta(\tau)=-1/\beta(\tau)$, define $\alpha(\tau)=F(\theta(\tau))$ and $\hat\alpha_n(\tau)=\hat F_n(\theta(\tau))$, where $\hat F_n$ is the empirical CDF of the calibration scores. Corollary~\ref{cor:mapping_compute} captures the accuracy guarantee of the empirical estimation.
\begin{corollary}
    \label{cor:mapping_compute}
    Suppose Assumptions~\ref{asp:iid_stability} and~\ref{ass:cro_crs_equiv} hold. For any $\delta \in (0, 1)$,
    \[
    \mathbb{P}\left( \sup_{\tau \in [\underline{\tau},\bar{\tau}]} |\hat{\alpha}_n(\tau) - \alpha(\tau)| \le \sqrt{\frac{\log(2/\delta)}{2n}} \right) \ge 1 - \delta.
    \]
\end{corollary}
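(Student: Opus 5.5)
The proof reduces the uniform-in-$\tau$ statement to a uniform-in-$t$ statement about the empirical CDF of the calibration scores. It then invokes the Dvoretzky--Kiefer--Wolfowitz (DKW) inequality with Massart's constant.

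First, I will observe that $\hat\alpha_n(\tau)-\alpha(\tau)=\hat F_n(\theta(\tau))-F(\theta(\tau))$. Here $\theta(\cdot)=-1/\beta(\cdot)$ is a fixed deterministic selection. It depends only on the \ConfRS{} value function $\kappa^*$, which is built from $a$, $\mathcal X$, $\mathfrak D$, the score, and the fixed forecast. It does not depend on the calibration sample $\mathcal D_n$. Under Assumption~\ref{ass:cro_crs_equiv}, $\kappa^*$ is convex and nonincreasing, so for each $\tau$ in the range a subgradient selection is available and $\theta(\tau)\in[0,\infty]$. At endpoints where $\beta(\tau)=0$ or is undefined, I use the natural convention $\theta=+\infty$ (or $0$). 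In those cases $\hat F_n$ and $F$ agree trivially (both equal $1$, or both are evaluated at the same point), so these $\tau$ contribute nothing extra to the supremum.

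Second, I will bound the supremum over $\tau$ by a supremum over all thresholds:
\[
\sup_{\tau\in[\underline\tau,\bar\tau]}|\hat\alpha_n(\tau)-\alpha(\tau)|
\le \sup_{t\in\mathbb R}|\hat F_n(t)-F(t)|.
\]
This holds because the set $\{\theta(\tau)\}$ is a deterministic subset of the extended real line. This step removes any need for continuity or monotonicity of the selection $\theta(\cdot)$, which could be discontinuous, since the correspondence may be set-valued and step-like.

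Third, under Assumption~\ref{asp:iid_stability}, the scores $S_i=s_{\hat f}(\bm Z_i,\bm D_i)$ are i.i.d.\ with CDF $F$, because $\hat f$ and the score are fixed before calibration. DKW with Massart's constant then gives $\mathbb P(\sup_t|\hat F_n(t)-F(t)|>\varepsilon)\le 2e^{-2n\varepsilon^2}$, and this holds for arbitrary $F$, including $F$ with atoms. Setting $2e^{-2n\varepsilon^2}=\delta$ yields $\varepsilon=\sqrt{\log(2/\delta)/(2n)}$, and the claim follows.

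The main obstacle is the first step: making sure that $\theta(\tau)$ really is non-random with respect to $\mathcal D_n$, and handling $\tau$ values at which the selection is infinite or degenerate. Once that measurability and independence point is settled, the rest is a direct application of DKW.
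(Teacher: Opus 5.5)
Your proposal is correct and follows the paper's proof: both bound $\sup_{\tau}|\hat\alpha_n(\tau)-\alpha(\tau)|$ by $\sup_t|\hat F_n(t)-F(t)|$, then apply the two-sided DKW inequality with Massart's constant and solve $2e^{-2n\varepsilon^2}=\delta$. Your remarks on why $\theta(\cdot)$ does not depend on the calibration sample, and your convention for degenerate endpoints, make explicit what the paper leaves implicit but do not change the argument.
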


Figure~\ref{fig:proof_flow} illustrates the geometric intuition behind Theorem~\ref{thm:crs_to_cro}. Strong duality for the inner score-constrained problem yields the value representation \eqref{eq:ConfRO-D}. For a fixed target $\tau$, the infimum of feasible $k$ in \eqref{eq:ConfRO-D} is exactly $\kappa^*(\tau)$, the optimal value of \ConfRS{}$(\tau)$. Substitution gives the scalar trade-off \eqref{eq:cro_e}. The key analytical step is convexity of $\kappa^*(\tau)$ (Lemma~\ref{lemma:kappa_convex} in Appendix~\ref{app:proof_equivalence}). Convexity gives the subgradient optimality condition
\[
    0\in \partial_\tau g(\tau,\theta)=1+\theta\partial\kappa^*(\tau),
\]
which yields the radius $\theta=-1/\beta(\tau)$ for any negative subgradient $\beta(\tau)\in\partial\kappa^*(\tau)$. This condition states that, at the matched parameters, the marginal cost of relaxing the target is exactly balanced by the marginal reduction in fragility.

\begin{figure}[htb]
    \centering
    \caption{Geometric intuition of the \ConfRO{}--\ConfRS{} correspondence.}
    \label{fig:proof_flow}
    \begin{tikzpicture}[>=Latex, font=\footnotesize, scale=1.2, transform shape]
\pgfmathsetmacro{\taurs}{2.15}
\pgfmathsetmacro{\krs}{3/(\taurs+0.30)+0.20}
\pgfmathsetmacro{\betaval}{-3/((\taurs+0.30)^2)}

\draw[->, thick] (0,0) -- (5.9,0) node[right] {$\tau$};
\draw[->, thick] (0,0) -- (0,3.75);
\node[anchor=east] at (-0.05,3.57) {$k$};

\draw[very thick]
  plot[domain=0.55:5.25, samples=140, smooth] (\x,{3/(\x+0.30)+0.20});
\node[anchor=west, fill=white, inner sep=1pt] at (0.95,3.05)
  {$k=\kappa^*(\tau)$};

\coordinate (P) at (\taurs,\krs);
\fill (P) circle (2.1pt);
\node[anchor=south west, fill=white, inner sep=1pt] at ($(P)+(0.08,0.08)$)
  {$(\tau_{\mathrm{rs}},\kappa^*(\tau_{\mathrm{rs}}))$};

\draw[densely dashed] (P) -- (\taurs,0) node[below] {$\tau_{\mathrm{rs}}$};
\draw[densely dashed] (P) -- (0,\krs) node[left] {$\kappa^*(\tau_{\mathrm{rs}})$};

\draw[thick, draw=red!65!black]
  ({\taurs-1.45},{\krs+\betaval*(-1.45)}) --
  ({\taurs+2.65},{\krs+\betaval*(2.65)});
\node[anchor=west, fill=white, inner sep=1pt, text=red!65!black] (isocostlabel) at (4.35,0.70)
  {$\tau+\theta k=c$};
\draw[->, draw=red!65!black] (isocostlabel.west) -- ({\taurs+1.55},{\krs+\betaval*(1.45)});
\end{tikzpicture}
\vspace{-1em}
\end{figure}

Theorem~\ref{thm:crs_to_cro} is conceptually related to the equivalence in the Wasserstein setting \citep{wang2025equivalence}. However, the two correspondence results rely on different model primitives, and neither implies the other. First, \ConfRO{} and \ConfRS{} admit calibrated scores that need not be symmetric or metric; Wasserstein DRO and DRS are built from a metric transport cost. Second, \ConfRO{} and \ConfRS{} are anchored at a context-specific forecast and calibrated by forecast residuals; Wasserstein DRO and DRS are anchored at an empirical distribution and calibrated by distributional ambiguity.

The following Example~\ref{example:distinction} further demonstrates the distinction.

\begin{example}[Distinction from DRO-DRS Equivalence]
\label{example:distinction}
Consider one-dimensional $\bm x$, $\bm d$, written as $x$,$d$ for clarity. Let $a(x,d)=x^2/2-dx$, let $\mathfrak s(d,\hat d)=|d-\hat d|$, and take the point prediction $\hat d=1$. For $\theta\in[0,1]$, \ConfRO{} reduces to
\[
    \min_x\left\{x^2/2-x+\theta|x|\right\},
\]
whose optimizer is $x^*=1-\theta$. The corresponding \ConfRS{} target is $\tau=(\theta^2-1)/2$, which yields the same optimizer. By contrast, a Wasserstein DRO model with empirical distribution $\widehat P=(\delta_1+\delta_{-1})/2$ and transport cost $|d-d'|$ reduces to
\[
    \min_x\left\{x^2/2+\vartheta|x|\right\},
\]
whose optimizer is $x^*=0$. The Wasserstein DRO-DRS correspondence therefore recovers a distributionally conservative decision that is insensitive to the point prediction, whereas the conformal correspondence preserves the prediction-adaptive decision $1-\theta$.
\end{example}

\subsection{Sensitivity Interpretation of the Robust Frontier}
\label{subsec:marginal_cost_robustness}

The reliability--target correspondence also provides a local sensitivity interpretation of the robust frontier. Under Assumption~\ref{ass:cro_crs_equiv}, suppose $\kappa^*$ is finite and continuous on $[\underline\tau,\bar\tau]$, and consider an open interval $\mathcal I$ of score radii on which the scalar representation in \eqref{eq:cro_e} is valid:
\begin{equation}
    V(\theta)
    =
    \min_{\tau\in[\underline\tau,\bar\tau]}
    \{\tau+\theta\kappa^*(\tau)\},
    \qquad \theta\in\mathcal I.
\end{equation}
For each $\theta\in\mathcal I$, define the corresponding set of target minimizers as
\[
    T(\theta):=
    \operatorname*{arg\,min}_{\tau\in[\underline\tau,\bar\tau]}
    \{\tau+\theta\kappa^*(\tau)\}.
\]
Fix $\tau_{\mathrm{rs}}\in(\underline\tau,\bar\tau)$ and a subgradient $\beta_{\mathrm{rs}}\in\partial\kappa^*(\tau_{\mathrm{rs}})$ with $\beta_{\mathrm{rs}}<0$. Set $\theta_{\mathrm{rs}}:=-1/\beta_{\mathrm{rs}}$ and suppose $\theta_{\mathrm{rs}}\in\mathcal I$. The subgradient optimality condition then implies $\tau_{\mathrm{rs}}\in T(\theta_{\mathrm{rs}})$. Finally, let $F$ be the CDF of $S:=\mathfrak s(\bm D_{\mathrm{test}},\hat{\bm d})$ and set $\alpha_{\mathrm{rs}}:=F(\theta_{\mathrm{rs}})$.

\begin{theorem}[Marginal Costs of Robustness and Reliability]
\label{thm:target_value_marginal_cost}
The following sensitivity identities of $V(\theta)$ hold.
\begin{enumerate}[label={\textup{(\roman*)}}, itemsep=0.25em, topsep=0.25em]
    \item
    \label{thm:marginal_cost_robustness_item}
    The one-sided derivatives of $V$ at $\theta_{\mathrm{rs}}$ exist and satisfy
    \begin{equation}
    \label{eq:mc_robustness}
    \begin{aligned}
        \lim_{h\downarrow0} h^{-1}
        \{V(\theta_{\mathrm{rs}}+h)-V(\theta_{\mathrm{rs}})\}
        &=
        \min_{\tau\in T(\theta_{\mathrm{rs}})}\kappa^*(\tau),\\
        \lim_{h\downarrow0} h^{-1}
        \{V(\theta_{\mathrm{rs}})-V(\theta_{\mathrm{rs}}-h)\}
        &=
        \max_{\tau\in T(\theta_{\mathrm{rs}})}\kappa^*(\tau).
    \end{aligned}
    \end{equation}
    \vspace{-2.0em}
    \item
    \label{thm:marginal_cost_reliability_item}
    Suppose, in addition, that $F$ is differentiable and strictly increasing in a neighborhood of $\theta_{\mathrm{rs}}$, with $f_S(\theta_{\mathrm{rs}}):=F'(\theta_{\mathrm{rs}})>0$, and that $V$ is differentiable at $\theta_{\mathrm{rs}}$. Then
    \begin{equation}
    \label{eq:mc_reliability}
        \left.
        \partial_\alpha
        V\bigl(F^{-1}(\alpha)\bigr)
        \right|_{\alpha=\alpha_{\mathrm{rs}}}
        =
        \kappa^*(\tau_{\mathrm{rs}})
        f_S(\theta_{\mathrm{rs}})^{-1}.
    \end{equation}
    \vspace{-2.5em}
\end{enumerate}
\end{theorem}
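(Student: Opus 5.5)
Part (i) is a Danskin-type envelope argument applied to the scalar representation $V(\theta)=\min_{\tau\in[\underline\tau,\bar\tau]}\{\tau+\theta\kappa^*(\tau)\}$ on $\mathcal I$. Each map $\theta\mapsto\tau+\theta\kappa^*(\tau)$ is affine in $\theta$ with slope $\kappa^*(\tau)\ge0$. Hence $V$ is a pointwise minimum of affine functions, so it is concave and nondecreasing on $\mathcal I$. Its one-sided derivatives therefore exist at the interior point $\theta_{\mathrm{rs}}$.

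To identify these derivatives, I will use the continuity of $\kappa^*$ on the compact interval $[\underline\tau,\bar\tau]$. This gives that $T(\theta)$ is nonempty and compact, so the max and min of $\kappa^*$ over $T(\theta_{\mathrm{rs}})$ are attained. The upper bound comes from fixing any $\tau\in T(\theta_{\mathrm{rs}})$:
\[
V(\theta_{\mathrm{rs}}+h)-V(\theta_{\mathrm{rs}})\le h\kappa^*(\tau).
\]
Hence $\limsup_{h\downarrow 0}h^{-1}[V(\theta_{\mathrm{rs}}+h)-V(\theta_{\mathrm{rs}})]\le\min_{T(\theta_{\mathrm{rs}})}\kappa^*$.

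For the matching lower bound, I will pick $\tau_h\in T(\theta_{\mathrm{rs}}+h)$. Then
\[
V(\theta_{\mathrm{rs}}+h)-V(\theta_{\mathrm{rs}})\ge h\kappa^*(\tau_h).
\]
By compactness, any cluster point $\bar\tau$ of $\tau_h$ as $h\downarrow0$ lies in $T(\theta_{\mathrm{rs}})$, since the objective is jointly continuous in $(\tau,\theta)$. Continuity of $\kappa^*$ then gives $\liminf\ge\min_{T(\theta_{\mathrm{rs}})}\kappa^*$.

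The left derivative is handled symmetrically and yields the max. With $\theta_{\mathrm{rs}}-h$ and $\tau\in T(\theta_{\mathrm{rs}})$, we get $V(\theta_{\mathrm{rs}}-h)\le V(\theta_{\mathrm{rs}})-h\kappa^*(\tau)$, so the left quotient is at least $\max_{T(\theta_{\mathrm{rs}})}\kappa^*$. Taking minimizers at $\theta_{\mathrm{rs}}-h$ gives the reverse bound.

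The main obstacle is the upper semicontinuity of the argmin map $T(\cdot)$, which the cluster-point step needs. It follows from Berge's maximum theorem once joint continuity and a compact feasible set are secured. Continuity at the endpoint $\underline\tau$, where $\kappa^*$ may blow up, is assumed away by the hypothesis that $\kappa^*$ is finite and continuous on $[\underline\tau,\bar\tau]$, so I will lean on that explicitly.

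Part (ii) is the chain rule. Since $F$ is differentiable and strictly increasing near $\theta_{\mathrm{rs}}$ with $F'(\theta_{\mathrm{rs}})>0$, the inverse function theorem gives $F^{-1}$ differentiable at $\alpha_{\mathrm{rs}}$, with derivative $1/f_S(\theta_{\mathrm{rs}})$ and $F^{-1}(\alpha_{\mathrm{rs}})=\theta_{\mathrm{rs}}$. Differentiability of $V$ at $\theta_{\mathrm{rs}}$ forces the two one-sided derivatives from (i) to coincide. Thus $\kappa^*$ is constant on $T(\theta_{\mathrm{rs}})$, and since $\tau_{\mathrm{rs}}\in T(\theta_{\mathrm{rs}})$ (from the subgradient optimality condition $0\in1+\theta_{\mathrm{rs}}\partial\kappa^*(\tau_{\mathrm{rs}})$ and the convexity of $\kappa^*$, Lemma~\ref{lemma:kappa_convex}), we get $V'(\theta_{\mathrm{rs}})=\kappa^*(\tau_{\mathrm{rs}})$. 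Composing the two derivatives gives \eqref{eq:mc_reliability}.
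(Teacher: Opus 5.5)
Your proposal is correct and follows essentially the same route as the paper. Both proofs use sandwich bounds from a fixed $\tau\in T(\theta_{\mathrm{rs}})$ and from minimizers $\tau_h\in T(\theta_{\mathrm{rs}}\pm h)$, place cluster points of $\tau_h$ in $T(\theta_{\mathrm{rs}})$ by compactness, combine equal one-sided derivatives with $\tau_{\mathrm{rs}}\in T(\theta_{\mathrm{rs}})$ to get $V'(\theta_{\mathrm{rs}})=\kappa^*(\tau_{\mathrm{rs}})$, and finish (ii) with the inverse-function chain rule. Your concavity remark and appeal to Berge's theorem only repackage a step the paper does by hand: it obtains the cluster-point claim from the already-proved bound $V(\theta_{\mathrm{rs}}+h)\le V(\theta_{\mathrm{rs}})+h\min_{T(\theta_{\mathrm{rs}})}\kappa^*$ and continuity of $g$.
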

Theorem~\ref{thm:target_value_marginal_cost} adds a local sensitivity interpretation to the reliability--target correspondence. In \eqref{eq:mc_robustness}, the optimized fragility $\kappa^*(\tau_{\mathrm{rs}})$ is the shadow price of expanding the conformal uncertainty region. At a smooth point, increasing the score radius from $\theta_{\mathrm{rs}}$ to $\theta_{\mathrm{rs}}+\Delta\theta$ raises the robust value $V(\theta)$ by approximately $\kappa^*(\tau_{\mathrm{rs}})\Delta\theta$. Thus, $\kappa^*(\tau_{\mathrm{rs}})$ measures the operational cost, in objective-value units, of requiring protection against one additional unit of conformal-score deviation from the forecast.

In \eqref{eq:mc_reliability} of Theorem~\ref{thm:target_value_marginal_cost}, this cost of increasing the reliability level $\alpha$ becomes $\kappa^*(\tau_{\mathrm{rs}})f_S(\theta_{\mathrm{rs}})^{-1}$, which separates decision fragility from statistical calibration: reliability is costly either because the decision is highly sensitive to score-level perturbations, as captured by a large $\kappa^*(\tau_{\mathrm{rs}})$, or because the score distribution is sparse near the current radius, as captured by a small $f_S(\theta_{\mathrm{rs}})$. Theorem~\ref{thm:target_value_marginal_cost} thus provides a diagnostic for whether additional reliability is limited by the decision model or by the calibration score distribution.

\subsection{Score Selection via Performance Evaluation}
\label{subsec:score_selection}

The preceding analysis treats the conformal score as fixed, but different score geometries can induce different decision frontiers at the same nominal reliability level. Because no score is universally optimal, even for pure prediction tasks \citep{angelopoulos2023conformal}, we develop a data-driven procedure that selects among candidates according to downstream robust performance, aligning score choice with the decision problem. Independent recalibration preserves finite-sample validity, as formalized in Proposition~\ref{prop:post_selection_validity}.

Let
$
    \mathcal S=\{s_\ell:\ell\in\mathcal A_{\mathrm{sc}}\}
$
be a finite collection of candidate score functions indexed by the finite set
$\mathcal A_{\mathrm{sc}}$. Each score
$s_\ell:\mathcal Z\times\mathfrak D\to\mathbb R_+$
may depend on the fitted predictor $\hat f$ and on score-specific nuisance estimates, such as residual scales, weights, or covariance matrices. All such fitted quantities are treated as fixed before the score-selection and final-calibration stages. For $\eta\ge0$, define
$
    \mathcal U_{\ell,\eta}(\bm z)
    :=
    \{\bm d\in\mathfrak D:s_\ell(\bm z,\bm d)\le \eta\}.
$

After training the predictor and fixing the candidate score functions, split the remaining data into three independent parts $\mathcal D_{\mathrm{pil}}$, $\mathcal D_{\mathrm{sel}}$, and $\mathcal D_{\mathrm{cal}}$, with sizes $N_p,N_s,N_c$, respectively. The pilot sample is used to construct provisional radii for comparing scores; the selection sample is used to choose a score; and the final calibration sample is used only after the score has been selected.

For each $\ell\in\mathcal A_{\mathrm{sc}}$, compute the pilot scores
$
    S^{\mathrm{pil}}_{\ell,i}
    =
    s_\ell(\bm z_i,\bm d_i),
    (\bm z_i,\bm d_i)\in\mathcal D_{\mathrm{pil}},
$
sort them as
$S^{\mathrm{pil}}_{\ell,(1)}\le\cdots\le S^{\mathrm{pil}}_{\ell,(N_p)}$,
set $S^{\mathrm{pil}}_{\ell,(N_p+1)}:=+\infty$, and define $k_p:=\lceil (N_p+1)\alpha\rceil$ and $\tilde\eta_{\alpha,\ell}:=S^{\mathrm{pil}}_{\ell,(k_p)}$.
The provisional set for score $\ell$ is
$
    \widetilde{\mathcal U}_{\alpha,\ell}(\bm z)
    =
    \mathcal U_{\ell,\tilde\eta_{\alpha,\ell}}(\bm z).
$ We compare candidate scores through the induced \ConfRO{} frontier. For a score $\ell$, radius $\eta$, and context $\bm z$, define
\begin{equation}
\label{eq:score_selection_frontier_value}
\begin{aligned}
    \phi_\ell(\bm z;\eta)
    :=
    \min_{\bm x}
        &\sup_{\bm d\in\mathcal U_{\ell,\eta}(\bm z)}
        a_0(\bm x,\bm d) \\
    \mathrm{s.t.}\quad
        & a_i(\bm x,\bm d)\le b_i,\quad
        i\in[I],\ \forall \bm d\in\mathcal U_{\ell,\eta}(\bm z),\\
        & \bm x\in\mathcal X,
\end{aligned}
\end{equation}
with the convention $\phi_\ell(\bm z;\eta)=+\infty$ if the robust problem is infeasible or has infinite worst-case objective value. Thus $\phi_\ell(\bm z;\eta)$ is the certified robust objective value obtained by using score $\ell$ at radius $\eta$ for context $\bm z$. We select the score by minimizing the empirical frontier value on the selection sample:
\begin{equation}
\label{eq:score_selection_rule}
    \widehat \Psi_\ell
    :=
    \frac{1}{N_s}
    \sum_{(\bm z_i,\bm d_i)\in\mathcal D_{\mathrm{sel}}}
    \phi_\ell(\bm z_i;\tilde\eta_{\alpha,\ell}),
    \qquad
    \widehat{\ell}\in\arg\min_{\ell\in\mathcal A_{\mathrm{sc}}}\widehat \Psi_\ell,
\end{equation}
with ties broken by a fixed deterministic rule.

After selecting $\widehat{\ell}$, discard the pilot radius and recalibrate the selected score on the independent final calibration sample. Specifically, compute $S^{\mathrm{cal}}_{\widehat{\ell},i}=s_{\widehat{\ell}}(\bm z_i,\bm d_i)$ for $(\bm z_i,\bm d_i)\in\mathcal D_{\mathrm{cal}}$,
sort these scores with
$S^{\mathrm{cal}}_{\widehat{\ell},(N_c+1)}:=+\infty$, and set $k_c=\lceil (N_c+1)\alpha\rceil$ and $\widehat\eta_\alpha=S^{\mathrm{cal}}_{\widehat{\ell},(k_c)}$.
The final post-selection conformal set is $
    \widehat{\mathcal U}_\alpha(\bm z)
    =
    \{\bm d\in\mathfrak D:
    s_{\widehat{\ell}}(\bm z,\bm d)\le \widehat\eta_\alpha\}.$ This independent recalibration yields the following post-selection guarantee.

\begin{proposition}
\label{prop:post_selection_validity}
Suppose that, after these pre-calibration quantities are fixed, the final calibration observations in $\mathcal D_{\mathrm{cal}}$ and the test observation
$(\bm Z_{\mathrm{test}},\bm D_{\mathrm{test}})$ are exchangeable. Then the post-selection conformal set $\widehat{\mathcal U}_\alpha(\bm Z_{\mathrm{test}})$ satisfies
$
    \mathbb P\left\{
    \bm D_{\mathrm{test}}
    \in
    \widehat{\mathcal U}_\alpha(\bm Z_{\mathrm{test}})
    \right\}
    \ge
    \alpha.
$
\end{proposition}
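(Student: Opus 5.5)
The plan is to reduce Proposition~\ref{prop:post_selection_validity} to the standard split-conformal coverage guarantee (Lemma~\ref{lem:conformal_coverage}, equivalently \eqref{eq:marginal_coverage}). We do this by conditioning on everything that happens before final calibration.

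\textbf{Step 1: fix the pre-calibration quantities.} Let $\mathcal G$ be the $\sigma$-field generated by the training data, the pilot sample $\mathcal D_{\mathrm{pil}}$, the selection sample $\mathcal D_{\mathrm{sel}}$, and any fitted nuisance quantities. The selected index $\widehat\ell$ is a deterministic function of $(\mathcal D_{\mathrm{pil}},\mathcal D_{\mathrm{sel}})$ and the fitted objects, because ties are broken by a fixed rule. Hence $\widehat\ell$ is $\mathcal G$-measurable, and conditional on $\mathcal G$ the score $s_{\widehat\ell}$ is a fixed function $\mathcal Z\times\mathfrak D\to\mathbb R_+$. The pilot radius $\tilde\eta_{\alpha,\widehat\ell}$ is discarded and never enters $\widehat{\mathcal U}_\alpha$.

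\textbf{Step 2: exchangeability of the conditional scores.} By hypothesis, once these quantities are fixed, the $N_c+1$ points in $\mathcal D_{\mathrm{cal}}\cup\{(\bm Z_{\mathrm{test}},\bm D_{\mathrm{test}})\}$ are exchangeable, i.e.\ exchangeable conditional on $\mathcal G$. Applying the fixed map $s_{\widehat\ell}$ coordinatewise preserves exchangeability, so the scores
\[
S^{\mathrm{cal}}_{\widehat\ell,1},\ldots,S^{\mathrm{cal}}_{\widehat\ell,N_c},\; S_{\mathrm{test}}:=s_{\widehat\ell}(\bm Z_{\mathrm{test}},\bm D_{\mathrm{test}})
\]
are exchangeable given $\mathcal G$.

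\textbf{Step 3: the rank argument.} Since $\bm D_{\mathrm{test}}\in\widehat{\mathcal U}_\alpha(\bm Z_{\mathrm{test}})$ if and only if $S_{\mathrm{test}}\le S^{\mathrm{cal}}_{\widehat\ell,(k_c)}$, the event of interest is a rank event. By the argument behind Lemma~\ref{lem:conformal_coverage}, with ties handled by the usual observation that ties only increase coverage, the rank of $S_{\mathrm{test}}$ among the $N_c+1$ scores is stochastically dominated by a uniform rank on $\{1,\ldots,N_c+1\}$. This gives
\[
\mathbb P\{S_{\mathrm{test}}\le S^{\mathrm{cal}}_{\widehat\ell,(k_c)}\mid\mathcal G\}\ge \frac{k_c}{N_c+1}\ge\alpha .
\]
If $k_c>N_c$, the threshold is $+\infty$ and coverage is trivially $1$. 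Taking expectations over $\mathcal G$ by the tower property yields the unconditional bound.

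\textbf{Main obstacle.} The only delicate point is Step 1. We must make sure that no information from $\mathcal D_{\mathrm{cal}}$ or the test point leaks into $\widehat\ell$, and that exchangeability is genuinely conditional on $\mathcal G$. Independence of the three splits, as assumed, is what guarantees this. The rest is the routine conformal rank argument applied conditionally.
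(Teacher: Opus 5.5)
Your proposal is correct and follows essentially the same route as the paper: fix the pre-calibration quantities so that $s_{\widehat\ell}$ becomes a fixed score, use the assumed exchangeability of the final calibration scores and the test score, apply the split-conformal rank bound $\lceil (N_c+1)\alpha\rceil/(N_c+1)\ge\alpha$, and then average over the pre-calibration randomness. Your explicit $\sigma$-field $\mathcal G$ and tower-property step simply make precise the paper's one-line remark that averaging over these pre-calibration quantities preserves the bound.
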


We now turn from the validity of the selection procedure to its connection with the \ConfRO{}--\ConfRS{} frontier. This interpretation invokes Assumption~\ref{ass:cro_crs_equiv}, whereas the procedure itself and Proposition~\ref{prop:post_selection_validity} do not. Specifically, when the robust frontier in \eqref{eq:score_selection_frontier_value} reduces to the setting of Assumption~\ref{ass:cro_crs_equiv}\ref{ass:equiv_single}, with no additional independent uncertain robust constraints and with any deterministic constraints absorbed into $\mathcal X$, write the context-fixed deviation scale as
$
    \mathfrak s_{\ell,\bm z}(\bm d)
    :=
    s_\ell(\bm z,\bm d).
$
Let $\kappa_\ell^*(\tau;\bm z)$ denote the optimal \ConfRS{} fragility value under score $\ell$ and context $\bm z$. Then the dual representation in \eqref{eq:ConfRO-D} gives
\begin{equation}
\label{eq:score_selection_crs_frontier}
    \phi_\ell(\bm z;\eta)
    =
    \inf_{\tau\in[\underline\tau_\ell(\bm z),\bar\tau_\ell(\bm z)]}
    \left\{
        \tau+\eta\,\kappa_\ell^*(\tau;\bm z)
    \right\}.
\end{equation}
Therefore, in this setting, score selection by \eqref{eq:score_selection_rule} selects the candidate with the best empirical \ConfRO{}--\ConfRS{} frontier at reliability level $\alpha$. This avoids comparing raw set volumes across geometries, which can be misleading because different scores use different units and level-set shapes.

\section{Applications}
\label{sec:experiments}

We evaluate the framework through synthetic benchmarks and a real-data case study. We begin by benchmarking on a classical fractional knapsack problem to validate empirical coverage, realized utility, and the \ConfRO{}--\ConfRS{} correspondence. We then study a multiperiod inventory problem using real data from a large online grocery platform, where deep learning-based demand forecasts must be translated into operationally implementable robust inventory decisions. Further details of the numerical results and an additional synthetic study on facility location are deferred to Appendix~\ref{app:numerical_details}.

\subsection{Simulation Study: Robust Fractional Knapsack Problem}
\label{subsec:knapsack}

We consider robustifying a data-driven \emph{fractional knapsack problem} in which item utilities $\bm c$ are uncertain but predictable from contextual features $\bm z$, following the setup of \cite{ho-nguyen2022risk}. We use the cost convention from Section~\ref{sec:cro_crs_framework} such that maximizing realized utility $\bm c^\top\bm x$ is equivalent to minimizing $a(\bm{x},\bm{c}):=-\bm{c}^\top \bm{x}$. The decision variable $\bm{x}$ lies in the feasible region $\mathcal{X}=\{\bm{x}:\bm{x}\in [0,1]^n,\ \bm{p}^\top \bm{x}\le B\}$, where $\bm{p}$ is the price of the items, and $B$ is the budget.

\subsubsection{Numerical Results on Fractional Knapsack.}
To evaluate forecast-centered calibration, we compare \ConfRO{} with methods that use progressively less contextual information: a context-agnostic ellipsoidal robust optimization baseline (Ellipsoid-RO), and the local $k$-means robust optimization (KMeans-RO) and $k$-nearest-neighbor robust optimization (KNN-RO) baselines \citep{ohmori2021predictive}. Joint estimation and robustness optimization (JERO; \citealp{zhu2022joint}) is not included because it optimizes the uncertainty-set radius rather than targeting a specified coverage level.

\begin{table}[htb]
    \centering
    \caption{Out-of-sample performance comparison of \ConfRO{} and benchmarks.}
    \label{tab:performance_comparison}
    \begin{subtable}[b]{0.52\textwidth}
        \centering
        \caption{Mean objective under realized parameters.}
        \label{tab:cro_objective}
        \small
        \resizebox{\linewidth}{!}{\begin{tabular}{@{}lccccccc@{}}
            \toprule
            $\alpha$ & \makecell{\ConfRO{}-\\Box} & \makecell{\ConfRO{}-\\Ellipsoid} & \makecell{\ConfRO{}-\\Budget} & \makecell{KMeans-\\RO} & \makecell{KNN-\\RO} & \makecell{Ellipsoid-\\RO} & PTO \\
            \midrule
            0.6  & -1308 & -1297 & -1309 & -899 & -1031 & 0 &-1311\\
            0.7  & -1309 & -1296 & -1310 & -846 & -1021 & 0 &-1311\\
            0.8  & -1309 & -1295 & -1310 & -905 & -1064 & 0 &-1311\\
            0.85 & -1309 & -1294 & -1311 & -843 & -1044 & 0 &-1311\\
            0.9  & -1309 & -1293 & -1311 & -918 & -1051 & 0 &-1311\\
            0.95 & -1310 & -1291 & -1310 & -663 & -1002 & 0 &-1311\\
            \bottomrule
        \end{tabular}
        }
    \end{subtable}
    \hfill
    \begin{subtable}[b]{0.46\textwidth}
        \centering
        \caption{Empirical coverage level.}
        \small
        \resizebox{\linewidth}{!}{\begin{tabular}{@{}lcccccc@{}}
            \toprule
            $\alpha$ & \makecell{\ConfRO{}-\\Box} & \makecell{\ConfRO{}-\\Ellipsoid} & \makecell{\ConfRO{}-\\Budget} & \makecell{KMeans-\\RO} & \makecell{KNN-\\RO} & \makecell{Ellipsoid-\\RO} \\
            \midrule
            0.6  & 0.56 & 0.58 & 0.56 & 0.46 & 0.60 & 0.62 \\
            0.7  & 0.68 & 0.69 & 0.66 & 0.54 & 0.68 & 0.72 \\
            0.8  & 0.79 & 0.80 & 0.77 & 0.63 & 0.76 & 0.80 \\
            0.85 & 0.84 & 0.84 & 0.83 & 0.70 & 0.80 & 0.85 \\
            0.9  & 0.89 & 0.89 & 0.89 & 0.74 & 0.85 & 0.89 \\
            0.95 & 0.95 & 0.94 & 0.94 & 0.81 & 0.89 & 0.94 \\
            \bottomrule
        \end{tabular}
        }
    \end{subtable}
    \par\smallskip
    \parbox{\textwidth}{\scriptsize \textit{Note.} Panel (a) reports $a(\bm x,\bm c)=-\bm c^\top\bm x$; more negative values indicate higher realized utility. Ellipsoid-RO's overly conservative set yields the zero solution at every level. Panel (b) reports test-set empirical coverage, which may differ slightly from the target.}
    \vspace{-1em}
\end{table}

Table~\ref{tab:performance_comparison} shows that \ConfRO{} closely tracks the nominal coverage level $\alpha$, whereas KMeans-RO substantially undercovers at high levels. \ConfRO{} improves realized utility over KNN-RO and KMeans-RO by 23.2\%--97.7\% and nearly matches PTO while retaining calibrated protection. Ellipsoid-RO is overly conservative and returns the zero solution at every level. These findings support our central premise that the useful robustness scale lies in calibrated residual variation around a context-specific forecast rather than unconditional variability in $\bm c$.

\subsubsection{Validating the Correspondence of \ConfRO{} and \ConfRS{}.}

We next examine the reliability--target correspondence from Section~\ref{sec:ConfRO_ConfRS_equivalence}.
Let $\mathfrak{s}(\bm{c},\hat{\bm{c}})=\|(\bm{c}-\hat{\bm{c}})/\hat{\bm{r}}\|_1$, where $\hat{\bm{r}}$ denotes the estimated absolute residual. The conditions supporting the \ConfRO{}--\ConfRS{} correspondence are satisfied: (i) $a(\bm{x},\bm{c})=-\bm{c}^\top \bm{x}$ is affine in $\bm x$, (ii) the feasible decision region $\mathcal{X}$ is convex, (iii) uncertainty enters through a single objective, and (iv) strong duality and dual attainment hold for the inner score-constrained maximization in Assumption~\ref{ass:cro_crs_equiv}\ref{ass:equiv_dual}. Proposition~\ref{prop:cro_to_crs} therefore maps any attained solution of the \ConfRO{} dual representation to an optimal \ConfRS{} pair at its selected target, while Theorem~\ref{thm:crs_to_cro} maps any eligible interior target back to a matched \ConfRO{} radius. Thus, the models admit a generally set-valued parameter correspondence. Equality of their full optimal decision sets additionally requires the unique-target condition in Theorem~\ref{thm:crs_to_cro}\ref{thm:crs_to_cro_equivalence_item}.

\begin{figure}[htb]
    \centering
    \begin{minipage}[b]{0.43\textwidth}
        \centering
        \captionof{figure}{Step-like pattern in the parameter mapping
        between \ConfRS{} and \ConfRO{}.}
        \label{fig:parameter_mapping}
        \includegraphics[width=\textwidth]
            {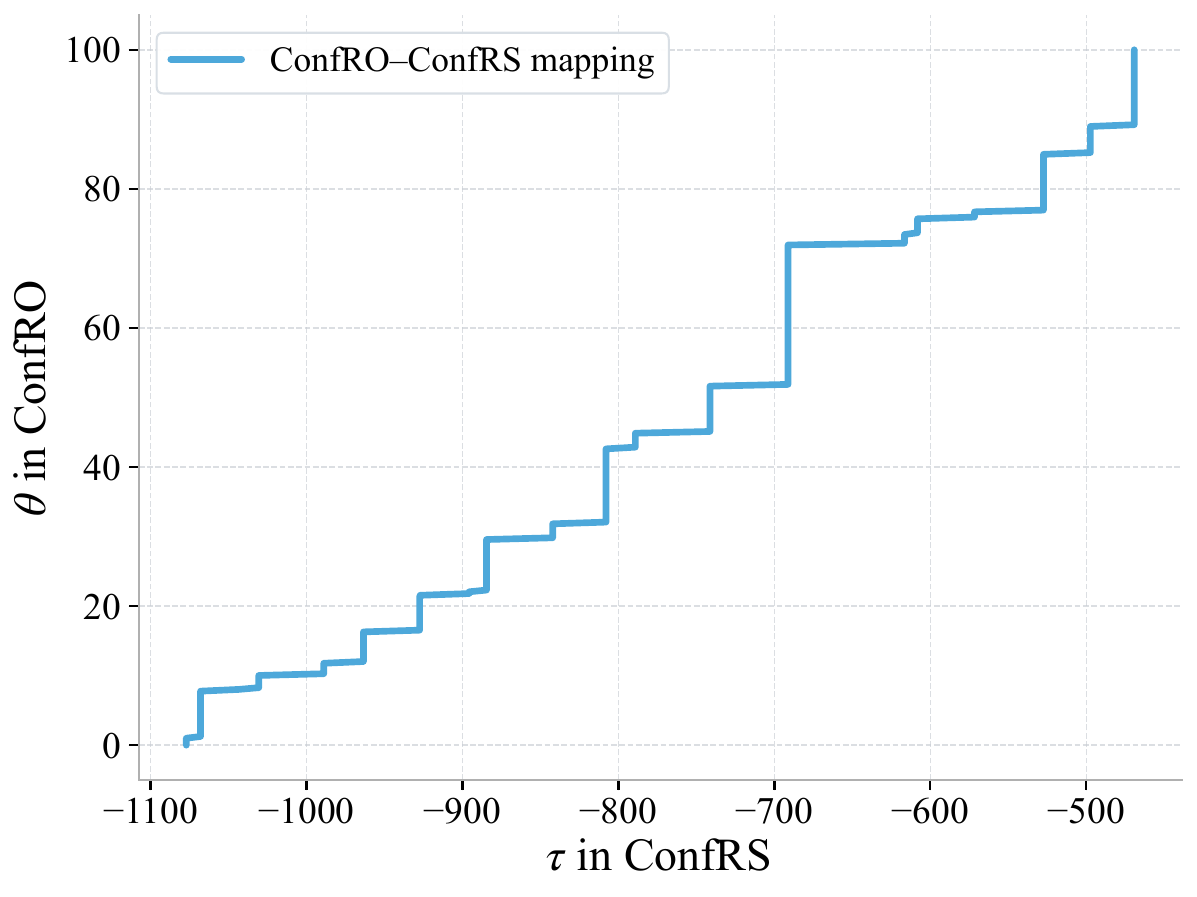}
    \end{minipage}
    \hfill
    \begin{minipage}[b]{0.5\textwidth}
        \centering
        \captionof{figure}{Performance comparison between the
        \ConfRS{} and DRS models.}
        \label{fig:crs_drs_comparison}
        \vspace{1em}
        \includegraphics[width=\textwidth]
            {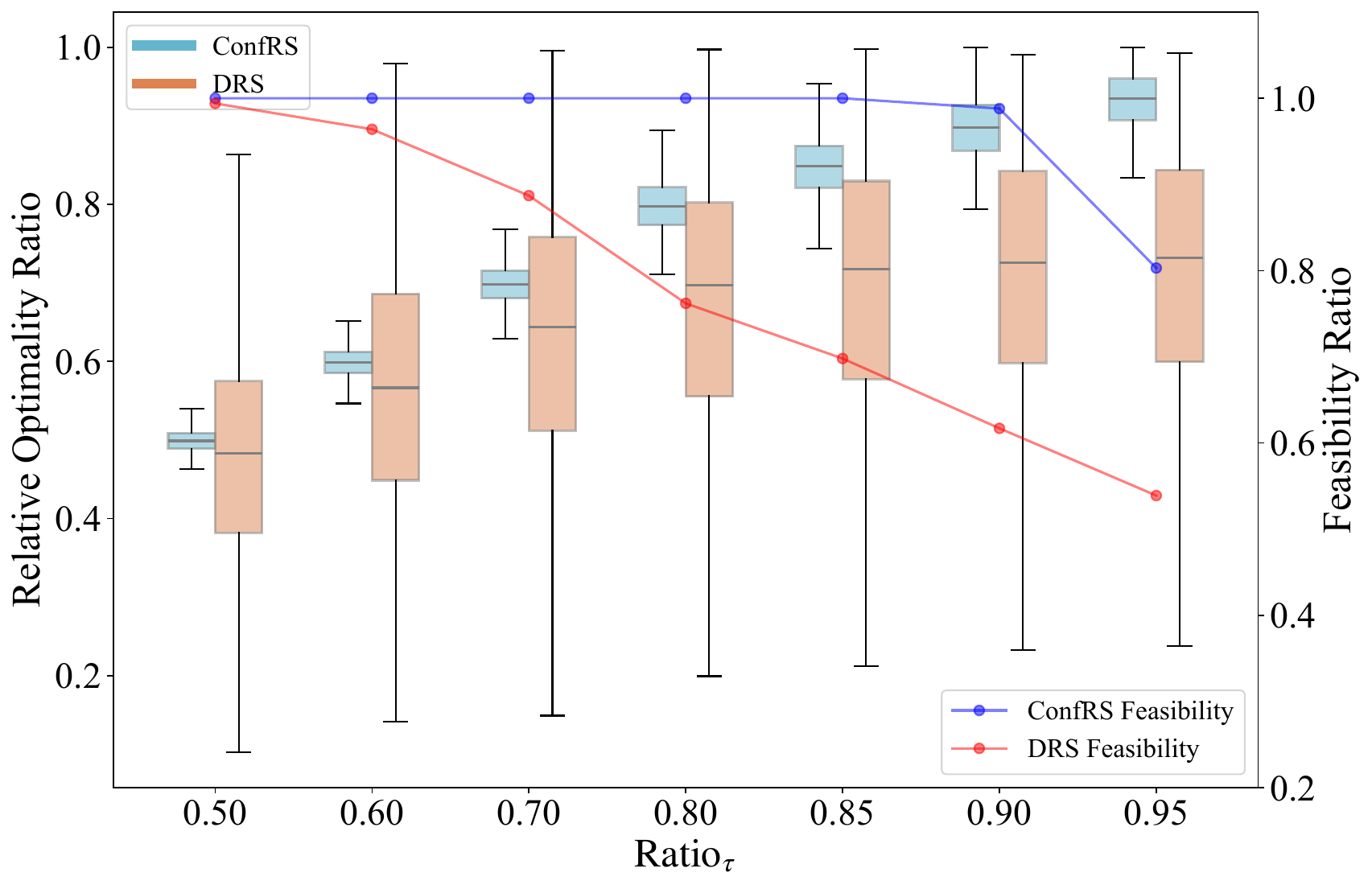}
    \end{minipage}
\end{figure}

Figure~\ref{fig:parameter_mapping} visualizes the parameter mapping between $\tau$ and $\theta$ for a representative instance, numerically corroborating our theoretical conclusion in Section~\ref{sec:ConfRO_ConfRS_equivalence}. Notably, the mapping curves exhibit an interesting step-like pattern, and this is because a single decision often remains optimal for a continuous range of target values rather than for a unique point.

\subsubsection{The Benefit of Conditioning Fragility on Prediction.}

\label{sec:sim_drs}

We use $S=40$ historical samples and the unscaled score $\mathfrak s(\bm c,\hat{\bm c})=\|\bm c-\hat{\bm c}\|_1$, which makes the comparison directly aligned with the DRS formulation. For each test instance, let $V_D$ denote the perfect-information objective value under the cost convention $a=-\bm c^\top\bm x$. We set the target $\tau=\rho_\tau V_D$, where $\rho_\tau\in\{0.5,0.6,0.7,0.8,0.85,0.9,0.95\}$; larger $\rho_\tau$ values make the target closer to the perfect-information benchmark and hence more stringent. Across 1{,}000 test instances, we evaluate feasibility and the relative optimality ratio $V_{RS}/V_D$, where $V_{RS}$ is the realized objective value; values closer to one indicate performance closer to the perfect-information benchmark.

Figure~\ref{fig:crs_drs_comparison} reveals two patterns. First, DRS becomes increasingly infeasible as the target tightens, i.e., $\rho_\tau$ increases, reflecting the coarseness of finite empirical samples when they are not conditioned on the current context. \ConfRS{} remains feasible for most instances and deteriorates noticeably only at the most stringent targets. Second, among feasible instances, \ConfRS{} achieves a higher mean optimality ratio $V_{RS}/V_D$ and lower dispersion. These findings support the mechanism developed in Section~\ref{sec:crs_framework}: using a context-specific forecast as the anchor for fragility yields decisions that are both more feasible and more stable than those based on an unconditional empirical distribution.

\subsection{Case Study: Multiperiod Inventory Management in Online Grocery}
\label{sec:case_study}

We next apply the score-calibrated robustness framework to a multiperiod inventory problem at \textit{Dingdong Maicai}, a large online grocery platform. Demand forecasting is particularly consequential in this setting because demand varies substantially across products, stores, and time. Our framework provides a robustness interface for translating predictions from industrial deep-learning demand models into reliable operational decisions.

To quantify the resulting operational benefits, we conduct a real-data study using \textsf{FreshRetailNet-50K}, a public dataset from \textit{Dingdong Maicai} \citep{2025freshretailnet-50k}. It contains granular hourly records for 50{,}000 store--SKU (stock-keeping-unit) pairs over 97 days across 898 stores in 18 major cities. The records combine hierarchical city, store, and product identifiers and hourly sales and inventory-status measures with discount and promotional-activity indicators, holiday and day-of-week information, and local weather measures such as temperature, humidity, wind, and precipitation.

Following the dataset's technical report, we use \textsf{TimesNet} \citep{Wu2023TimesNet} to impute demand observations missing because of stockouts. We partition the data chronologically, using the first 90 days for training and reserving the final seven days as a common holdout window for calibration and testing.

The robust inventory management problem accounts for holding and backlogging costs and uses a multiperiod robust optimization formulation with a replenishment schedule fixed before the planning horizon \citep{bertsimas2006robust, Erick2015robust}. For \ConfRO{}, the uncertainty set $\mathcal U_\alpha$ is calibrated to provide a prescribed coverage level $\alpha$ for the demand vector $\bm w\in\mathbb R^T$:
\begin{subequations}\label{prob:inv}
\begin{align}
\min_{\bm u,\bm y}\quad
    &\sum_{k=0}^{T-1}\left(cu_k+y_k\right)
    \label{prob:inv-obj}\\
\mathrm{s.t.}\quad
    &y_k\ge h\left(x_0+\sum_{i=0}^k(u_i-w_i)\right),
    &&\forall \bm w\in\mathcal{U}_\alpha,\quad
      k=0,\ldots,T-1,
    \label{prob:inv-holding}\\
    &y_k\ge -p\left(x_0+\sum_{i=0}^k(u_i-w_i)\right),
    &&\forall \bm w\in\mathcal{U}_\alpha,\quad
      k=0,\ldots,T-1,
    \label{prob:inv-shortage}\\
    &u_k\ge0,\quad y_k\ge0,
    &&k=0,\ldots,T-1.
    \label{prob:inv-nonnegative}
\end{align}
\end{subequations}
The planning horizon consists of $T=7$ periods. Here, $x_0$ denotes the initial inventory level, while $u_k$ and $w_k$ denote the replenishment decision and uncertain demand in period $k$, respectively. The unit procurement, holding, and shortage costs are denoted by $c$, $h$, and $p$, respectively. The objective~\eqref{prob:inv-obj} minimizes total procurement cost plus the periodwise inventory-cost bounds $y_k$. The cumulative term $x_0+\sum_{i=0}^k(u_i-w_i)$ is the end-of-period inventory position; for every demand trajectory $\bm w\in\mathcal U_\alpha$, constraints~\eqref{prob:inv-holding} and~\eqref{prob:inv-shortage} make $y_k$ upper-bound, respectively, the holding cost when this position is positive and the backlogging cost when it is negative. Thus, at optimality, $y_k$ is the worst-case period-$k$ inventory-imbalance cost over $\mathcal U_\alpha$. Constraint~\eqref{prob:inv-nonnegative} enforces nonnegativity.

\subsubsection{Predictor Choices and Calibration Pipeline.}

To exploit the heterogeneous demand signals over the seven-day forecast horizon, we compare three complementary predictors. (i) Temporal Fusion Transformer (\textsf{TFT}; \citealp{Lim2021temporal}) is particularly well suited to this setting: its variable-selection and gating mechanisms integrate static and time-varying covariates, while recurrent processing and attention capture local and longer-range temporal dependencies. (ii) \textsf{DLinear} \citep{zeng2023dlinear} provides a parsimonious benchmark that separates trend and seasonal components through linear layers. (iii) Similar Sample Average (\textsf{SSA}; \citealp{2025freshretailnet-50k}) provides an interpretable benchmark that weights historical demand by recency and similarity in holiday status, day of week, precipitation, and discount conditions.

Table~\ref{tab:prediction_performance} reports out-of-sample weighted absolute percentage error (WAPE), weighted percentage error (WPE), and mean absolute error (MAE). \textsf{TFT} ranks first or ties for first in seven of the nine metric-by-group comparisons and second in the other two, attaining the lowest WAPE and the lowest or tied-lowest MAE overall and in both SKU-volume groups. This pattern accords with the \textsf{FreshRetailNet-50K} technical report, which identifies \textsf{TFT} as the strongest overall forecaster across dataset segments \citep{2025freshretailnet-50k}. Its consistent accuracy across demand scales provides a strong predictive center for conformal calibration; we therefore use it as the default predictor for all downstream \ConfRO{} experiments.

\begin{table}[htbp]
\centering
\caption{Predictive performance comparison of different predictors.}
\label{tab:prediction_performance}
\small
\setlength{\tabcolsep}{5pt}

\begin{tabular}{l ccc ccc ccc}
\toprule
\multirow{2}{*}{{Method}} & \multicolumn{3}{c}{{Overall}} & \multicolumn{3}{c}{{High-Volume SKUs}} & \multicolumn{3}{c}{{Low-Volume SKUs}} \\
\cmidrule(lr){2-4} \cmidrule(lr){5-7} \cmidrule(lr){8-10}
 & WAPE & WPE & MAE & WAPE & WPE & MAE & WAPE & WPE & MAE \\
\midrule
\textsf{TFT} & \bestnum{29.2\%} & \secondnum{5.4\%} & \bestnum{0.363} & \bestnum{22.8\%} & \bestnum{2.1\%} & \bestnum{0.644} & \bestnum{38.1\%} & \secondnum{10.0\%} & \bestnum{0.267} \\
\textsf{DLinear} & \secondnum{30.6\%} & 6.6\% & \secondnum{0.381} & \secondnum{23.5\%} & \secondnum{2.7\%} & \secondnum{0.666} & 40.6\% & 12.0\% & 0.283 \\
\textsf{SSA} & 31.1\% & \bestnum{-1.2\%} & 0.387 & 26.0\% & -4.0\% & 0.739 & \secondnum{38.5\%} & \bestnum{2.70\%} & \bestnum{0.267} \\
\bottomrule
\end{tabular}

\par\smallskip
\parbox{\textwidth}{\scriptsize \textit{Note.} Shaded cells denote the best (blue) and second-best (orange) values.}
\vspace{-2em}
\end{table}

We next specify the conformal score used to calibrate forecast uncertainty. Drawing on the designs discussed in Section~\ref{subsubsec:cro_score_geometry}, we vary the score along two dimensions: the geometry of the induced uncertainty set (Box, Ellipsoid, or Budget) and whether a secondary residual model scales the score to reflect the expected magnitude of forecast errors.

Within the seven-day holdout window, we retain approximately 7{,}000 relatively stable store--SKU trajectories to limit extreme noise, randomly assigning 500 to testing and the remainder to calibration. The plausibility of cross-sectional exchangeability for this split is assessed in Appendix~\ref{app:cs_setting}. For each test instance, we construct $\mathcal U_\alpha$ and solve the robust inventory problem.

\subsubsection{Empirical Performance and Implementation Insights.}
\label{cs_results}

We evaluate the robust inventory policies across target coverage levels $\alpha\in[0.5,0.9]$. For comparison, we include Ellipsoid-RO, KNN-RO, and the PTO baseline, which directly uses the point forecast without robustness. To assess robustness to demand perturbations, we multiply each realized out-of-sample demand by an independent factor drawn uniformly from $[0.8,1.4]$.

\begin{figure}[htb]
    \centering
    \caption{Operational cost and empirical coverage: \ConfRO{} framework versus baselines.}
    \includegraphics[width=0.75\linewidth]{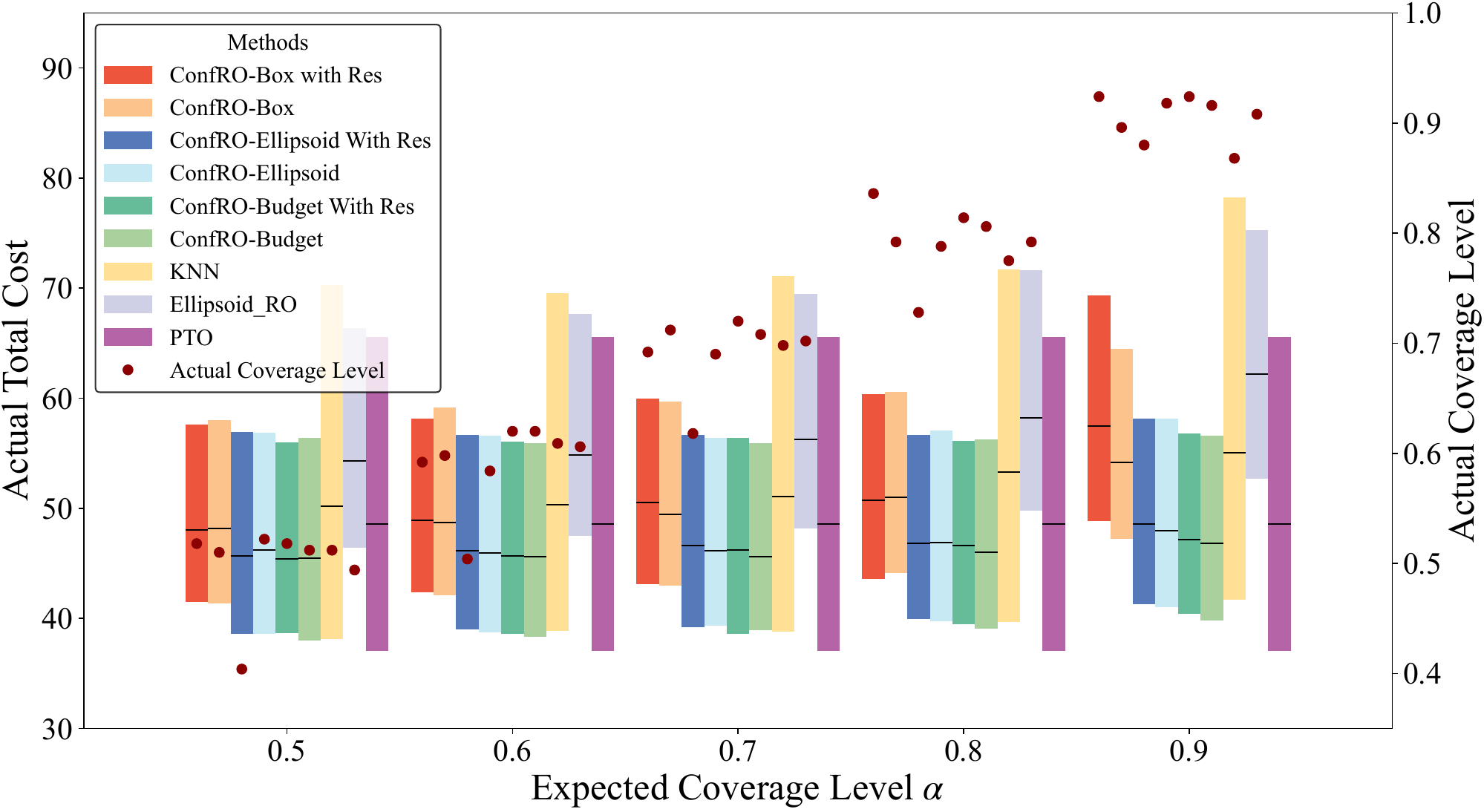}
    \label{fig:case_study_compare}
\end{figure}

Figure~\ref{fig:case_study_compare} summarizes the resulting trade-off between empirical coverage (reliability) and operational cost under representative cost parameters. Relative to deterministic PTO, \ConfRO{} is more resilient to demand perturbations. The Budget and Ellipsoid variants achieve both lower mean operational cost and lower cost variability, showing that explicit hedging against calibrated forecast residuals can dominate reliance on point forecasts alone. \ConfRO{} also outperforms the data-driven robust baselines. By anchoring the uncertainty set $\mathcal{U}_{\alpha}$ at a high-quality \textsf{TFT} forecast and calibrating the residual scale $\eta_\alpha$, \ConfRO{} avoids the static conservatism of Ellipsoid-RO while providing more reliable coverage control than KNN-RO.

The case study shows that the cost--reliability trade-off depends on three implementation choices: the forecast used to center the uncertainty set, the geometry induced by the conformal score, and the use of residual scaling.
\begin{enumerate}[label=(\roman*)]
    \item \emph{Prioritize forecast quality before tuning robustness:} Table~\ref{tab:prediction_performance} shows that \textsf{TFT} achieves the lowest overall WAPE and MAE (29.2\% and 0.363) and performs best on high-volume SKUs. It also leads to the best performance: at $\alpha=0.9$, the best \ConfRO{} variant reduces cost by 20.37\%, 26.95\%, and 7.74\% compared with KNN-RO, Ellipsoid-RO, and PTO, respectively (Table~\ref{tab:case_study_relative_performance}). Thus, reducing conservatism mainly relies on calibrating deviations around a strong context-specific predictor, rather than guarding against unconditional demand variation.
    \item \emph{Match uncertainty-set geometry to the structure of forecast errors:} The results favor geometries that capture joint deviations without imposing uniformly conservative bounds. The Budget score constrains total normalized deviation through an $L_1$ budget, allowing flexible allocation across periods while retaining a linear robust counterpart. It yields the lowest cost among the \ConfRO{} geometries at every tested coverage level, consistent with the fractional-knapsack results in Table~\ref{tab:performance_comparison}. The Ellipsoid score captures correlated errors through covariance-aware geometry and performs comparably in Figure~\ref{fig:case_study_compare}. The Box score instead bounds each coordinate separately and becomes increasingly conservative at higher coverage levels. These findings favor Budget sets for interpretability and tractability, and Ellipsoid sets when dependence can be estimated reliably.
    \item \emph{Residual scaling is not uniformly beneficial:} The adaptive scores use an additional residual-scale model to account for heteroscedastic forecast errors; we implement separate multilayer perceptrons (MLPs) for componentwise and global residual magnitudes. As shown in Figure~\ref{fig:case_study_compare}, the scaled variants do not consistently improve either cost or empirical coverage relative to their unscaled counterparts. One possible explanation is that estimation error in the secondary residual model offsets part of the benefit from adapting the set size. The value of residual scaling therefore depends on whether the additional scale model improves out-of-sample decision performance.
\end{enumerate}

\subsubsection{Conformal Robust Satisficing: Implementation and Comparison.}
\label{sec:cs_rs_exp}

The inventory formulation~\eqref{prob:inv} contains $2T$ uncertainty-dependent inequalities, comprising a holding-cost and a shortage-cost inequality for each period. A direct application of the multi-constraint \ConfRS{} formulation would assign a separate target to every inequality. However, the two inequalities in each period are not distinct performance criteria; they are the two branches of a single piecewise-linear inventory cost. We therefore define the realized inventory cost in period $k$ as
\[
    R_k(\bm u,\bm w):=\max\left\{
    h I_{k+1}(\bm u,\bm w),-p I_{k+1}(\bm u,\bm w)\right\}, \qquad k = 0, \dots, T-1
\]
where $I_{k+1}(\bm u,\bm w)=x_0+\sum_{i=0}^{k}(u_i-w_i)$ is the end-of-period inventory position. The acceptable target $\tau$ constrains total cost over the planning horizon rather than prescribing separate targets for individual periods. We therefore avoid imposing an exogenous target allocation and instead allow the model to determine period-specific inventory-cost allowances $t_k$ and corresponding fragilities $q_k$. Given the demand support $\mathfrak D$, the resulting \ConfRS{} formulation is
\begin{equation*}
\begin{aligned}
    \min_{\bm u,\bm t,\bm q}\quad
        & \sum_{k=0}^{T-1} q_k\\
    \mathrm{s.t.}\quad
        & \sum_{k=0}^{T-1}(cu_k+t_k)\leq \tau,\\
        & R_k(\bm u,\bm w)-t_k
        \leq q_k s(\bm w,\hat{\bm w}),
        \qquad
        \forall \bm w\in\mathfrak D,\
        k=0,\ldots,T-1,\\
        & \bm u,\bm t,\bm q\geq \bm 0,
\end{aligned}
\tag{\ConfRS{}-Inv}\label{eq:ConfRS_Inv}
\end{equation*}

To assess the value of prediction-centered fragility and prediction accuracy, we evaluate \ConfRS{} under different predictors and compare it against a distributionally robust satisficing (DRS) benchmark that retains the same target, parameterization, and demand support but replaces the point anchor $\hat{\bm w}$ with an empirical distribution of historical demand.

We use the same forecasts and $500$ test instances as in the \ConfRO{} experiment. Consistent with the simulation study in Section~\ref{sec:sim_drs}, we use the $L_1$ distance, $s(\bm w,\hat{\bm w})=\|\bm w-\hat{\bm w}\|_1$, as both the conformal score in \ConfRS{} and the transport cost in DRS. To parameterize the target for instance $i$, we first solve the deterministic inventory problem under the realized demand $\bm w_i$ and denote the resulting perfect-information cost by $V_i^0$. We then set
$\tau_i=\rho V_i^0$, $\rho\in\{1.05,1.10,1.15,1.20,1.30,1.40\}$.
At each target level, we solve all methods for the $500$ test instances and evaluate realized total cost under the observed demand, normalized by $V_i^0$. We also compare feasibility, defined as whether an instance admits a finite-fragility satisficing certificate at the specified target. For each target ratio $\rho$, let $N(\rho)$ denote the number of test instances for which all compared methods are feasible; realized relative costs are evaluated on this common subset. Because the fragilities of \ConfRS{} and DRS are governed by the asymptotic shortage-cost slope, both \ConfRS{} and DRS would be degenerate under an unbounded demand support. We therefore impose a data-driven support upper bound using the maximum historical demand observed for each store--SKU (details in Appendix~\ref{app:case_study_rs}).

\begin{figure}[htb]
    \centering
    \begin{minipage}[b]{0.48\textwidth}
        \centering
        \captionof{figure}{Realized relative cost comparison.}
        \label{fig:case_study_rs_cost}
        \includegraphics[width=\textwidth]
            {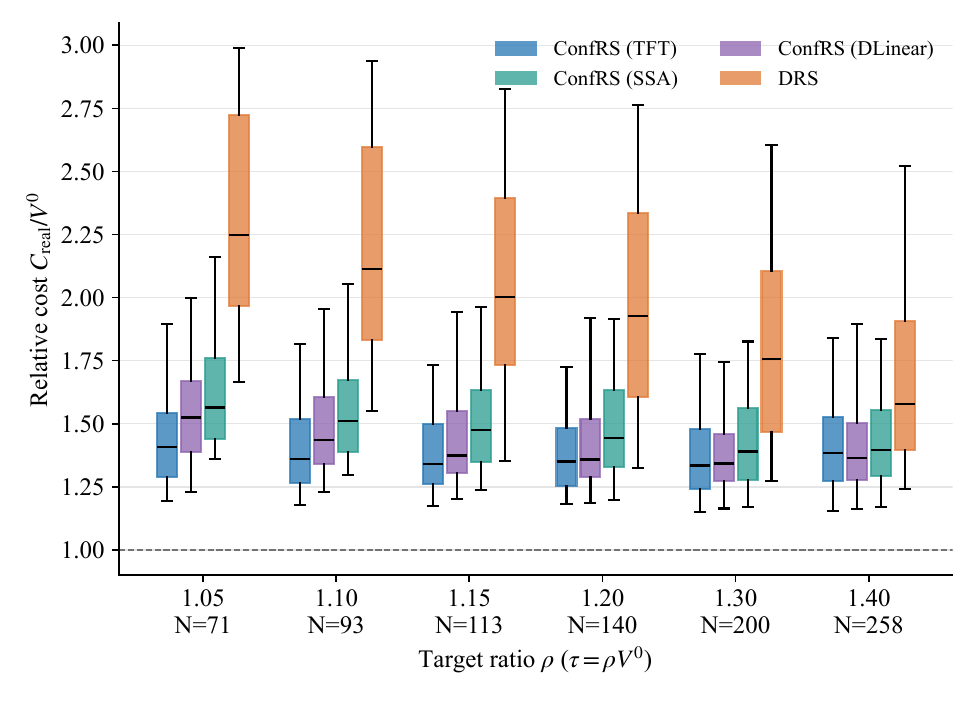}
    \end{minipage}
    \hfill
    \begin{minipage}[b]{0.48\textwidth}
        \centering
        \captionof{figure}{Feasibility ratio comparison.}
        \label{fig:case_study_rs_feasibility}
        \includegraphics[width=\textwidth]
            {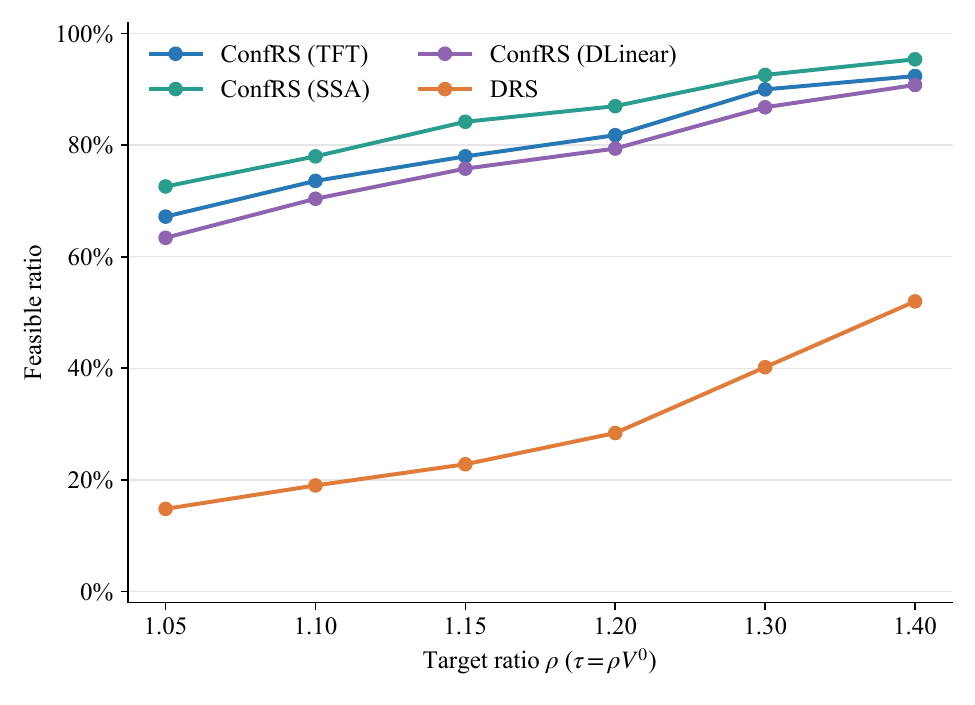}
    \end{minipage}
\end{figure}

As shown in Figures~\ref{fig:case_study_rs_cost} and~\ref{fig:case_study_rs_feasibility}, at each tested target ratio, all three \ConfRS{} variants have higher observed certificate-feasibility rates than DRS and lower mean realized relative costs on the common feasible subset. Detailed means are reported in Appendix Table~\ref{tab:case_study_rs_detailed}. The predictor rankings differ across the two measures. The \textsf{TFT}-based variant has the lowest mean cost for $\rho=1.05$ through $1.30$, while \textsf{DLinear} is marginally lowest at $\rho=1.40$. The \textsf{SSA}-based variant has the highest mean cost among the three \ConfRS{} variants on the common feasible subset, but its mean cost remains below that of DRS at each target ratio, while it attains the highest feasibility rate. These findings suggest that, in this case study, a context-specific forecast may provide a useful reference for fragility relative to the historical empirical distribution. They also indicate that forecast accuracy need not translate directly into certificate feasibility.

\section{Concluding Remarks}
\label{sec:conclusion}

This paper develops a score-calibrated robustness interface for prediction-driven decision-making. Its central message is that a conformal score can provide the missing link between black-box predictions and downstream robustness. We introduce \ConfRS{}, a target-oriented formulation that complements recent \ConfRO{}-type methods and shows that conformal prediction can calibrate not only uncertainty sets, but also decision-relevant robustness requirements around predictions. When uncertainty enters only through the objective and suitable convexity and duality conditions hold, \ConfRO{} and \ConfRS{} provide alternative parameterizations of the same score-calibrated robustness frontier, with a direct correspondence between reliability levels and acceptable performance targets.

The numerical studies support this unified perspective. In the fractional-knapsack experiments, \ConfRO{} reduces conservatism relative to the benchmark methods while maintaining empirical coverage near the nominal levels, and the observed mappings between \ConfRO{} and \ConfRS{} agree with the theoretical characterization. The online-grocery case study for both robustness views further demonstrates that the framework can be combined with deep-learning-based demand predictors in a complex operational environment. From a managerial perspective, anchoring uncertainty at a strong forecast and using score geometries that capture joint deviations can reduce both the level and variability of operating costs, whereas overly complex residual scaling may be counterproductive.

This work represents a step toward combining tractable decision models with powerful black-box predictors. Several directions merit further study. First, the decision-level correspondence between \ConfRO{} and \ConfRS{} may enable efficient algorithms for tracing the robustness frontier, such as warm starts across reliability or target levels. It may also support decision interfaces that translate reliability requirements into implied targets, and vice versa. Second, combining score calibration with sequential decision-making or other dynamic models could lead to hybrid frameworks that retain predictor-agnostic finite-sample calibration while capturing richer operational environments. Third, extending conditional calibration under more general data conditions remains important. Our main framework relies on finite-sample marginal validity under exchangeability, while Appendix~\ref{sec:conditional_cro} establishes pointwise asymptotic conditional coverage under smooth, low-dimensional context representations. Scalable methods for high-dimensional contexts with practical calibration sample sizes could yield more instance-specific uncertainty sets in \ConfRO{} and sharper target certificates in \ConfRS{}. These directions would further advance the broader goal of converting flexible predictors into reliable and efficient operational decisions.

\bibliographystyle{preprint}
\bibliography{references}

\newpage

\def\AppendixFontSize{\normalsize}
\def\AppendixSectionHeadSize{\fs.13.15.\relax}

\begin{APPENDICES}

\renewcommand{\thesection}{\Alph{section}}
\renewcommand{\theHsection}{appendix.\arabic{section}}

\renewcommand{\theHsubsection}{appendix.\arabic{section}.\arabic{subsection}}
\renewcommand{\theHsubsubsection}{appendix.\arabic{section}.\arabic{subsection}.\arabic{subsubsection}}

\numberwithin{equation}{section}
\numberwithin{theorem}{section}
\numberwithin{proposition}{section}
\numberwithin{lemma}{section}
\numberwithin{definition}{section}
\numberwithin{figure}{section}
\numberwithin{table}{section}
\numberwithin{assumption}{section}
\numberwithin{corollary}{section}
\numberwithin{algorithm}{section}

\section{Proofs and Additional Theoretical Results}
\label{app:proof}

Appendix~\ref{app:proof} collects the proofs and additional theoretical results. We first provide the proofs for Sections~\ref{sec:cus_framework}, \ref{sec:cro_crs_framework}, and~\ref{sec:ConfRO_ConfRS_equivalence}. Appendix~\ref{sec:conditional_cro} then develops the localized calibration extension and its conditional-coverage analysis.

\subsection{Proofs for Section~\ref{sec:cp_preliminary}}

Algorithm~\ref{alg:general_cro} records the split-conformal calibration procedure described in Section~\ref{sec:cp_preliminary}.

\begin{algorithm}[H]
    \caption{Conformal Uncertainty Set Construction}
    \label{alg:general_cro}
    \begin{algorithmic}[1]
    \State \textbf{Input}: Training set $\mathcal D_{\mathrm{train}}$, calibration set $\mathcal D_{\mathrm{cal}}=\{(\bm z_i,\bm d_i)\}_{i=1}^{N_c}$, target coverage level $\alpha\in(0,1)$.
    \State Fit the predictor $\hat f$ on $\mathcal D_{\mathrm{train}}$; for a test context, $\hat f(\bm z)$ will serve as the center of the uncertainty set.
    \State Design a conformal score $s_{\hat f}(\bm z,\bm d)$ whose sublevel sets have the desired geometry.
    \State Compute calibration scores
    \[
        S_i\leftarrow s_{\hat f}(\bm z_i,\bm d_i),\qquad i=1,\ldots,N_c.
    \]
    \State Sort the scores as $S_{(1)}\le\cdots\le S_{(N_c)}$, set $S_{(N_c+1)}:=+\infty$, and let
    \[
        k_\alpha=\lceil (N_c+1)\alpha\rceil,\qquad \eta_\alpha=S_{(k_\alpha)}.
    \]
    \State \textbf{Output}: For any test instance $\bm z_{\mathrm{test}}$, return
    \[
        \mathcal U_\alpha(\bm z_{\mathrm{test}})=\{\bm d\in\mathfrak D:s_{\hat f}(\bm z_{\mathrm{test}},\bm d)\le \eta_\alpha\}.
    \]
    \end{algorithmic}
\end{algorithm}

\begin{proof}{Proof of Proposition~\ref{prop:calibration_transfer}.}
We first recall the standard finite-sample validity result underlying split conformal prediction.

\begin{lemma}
\label{lem:conformal_coverage}
Suppose Assumption~\ref{asp:exchangeability} holds. Let $S_i=s_{\hat f}(\bm Z_i,\bm D_i)$ for $i=1,\ldots,n$, let $S_{(1)}\le\cdots\le S_{(n)}$ denote their order statistics, and set $S_{(n+1)}:=+\infty$. For $\alpha\in(0,1)$, define $k_\alpha=\lceil(n+1)\alpha\rceil$, $\eta_\alpha=S_{(k_\alpha)}$, and
$
    \mathcal U_\alpha(\bm Z)=\{\bm d\in\mathfrak D:s_{\hat f}(\bm Z,\bm d)\le\eta_\alpha\}.
$
Then
\[
    \mathbb P\bigl(\bm D_{\mathrm{test}}\in\mathcal U_\alpha(\bm Z_{\mathrm{test}})\bigr)\ge\alpha.
\]
If, in addition, the $n+1$ calibration and test scores are almost surely pairwise distinct, then
\[
    \mathbb P\bigl(\bm D_{\mathrm{test}}\in\mathcal U_\alpha(\bm Z_{\mathrm{test}})\bigr)
    \le\alpha+\frac{1}{n+1}.
\]
\end{lemma}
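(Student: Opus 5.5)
The proof goes through the standard rank argument for exchangeable scores. Set $S_{n+1}:=s_{\hat f}(\bm Z_{\mathrm{test}},\bm D_{\mathrm{test}})$. Because $s_{\hat f}$ and $\hat f$ are fixed before calibration, $(S_1,\ldots,S_{n+1})$ is a fixed measurable function applied coordinatewise to the exchangeable sequence in Assumption~\ref{asp:exchangeability}. The scores are therefore exchangeable. The event $\{\bm D_{\mathrm{test}}\in\mathcal U_\alpha(\bm Z_{\mathrm{test}})\}$ is exactly $\{S_{n+1}\le S_{(k_\alpha)}\}$, with $S_{(k_\alpha)}$ computed from the first $n$ scores. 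If $k_\alpha=n+1$, then $\eta_\alpha=+\infty$ and the claim is trivial. We assume $k_\alpha\le n$ from now on.

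For the lower bound, we avoid ties by a deterministic rewriting. We claim that $S_{n+1}\le S_{(k_\alpha)}$ holds iff
\[
\#\{i\le n+1: S_i< S_{n+1}\}\le k_\alpha-1,
\]
that is, iff fewer than $k_\alpha$ of all $n+1$ scores are strictly below $S_{n+1}$.

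\emph{Forward direction.} If $S_{n+1}\le S_{(k_\alpha)}$, then every calibration score strictly below $S_{n+1}$ is strictly below $S_{(k_\alpha)}$. There are at most $k_\alpha-1$ such scores.

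\emph{Reverse direction.} If $S_{n+1}>S_{(k_\alpha)}$, then at least $k_\alpha$ calibration scores lie strictly below $S_{n+1}$.

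Next, define the $n+1$ indicators $E_j:=\{\#\{i: S_i<S_j\}\le k_\alpha-1\}$. By exchangeability, $\mathbb P(E_j)$ is the same for every $j$. Hence
\[
\mathbb P(E_{n+1})=\frac{1}{n+1}\,\mathbb E\Bigl[\sum_{j=1}^{n+1}\indc_{E_j}\Bigr].
\]
It suffices to show deterministically that $\sum_j\indc_{E_j}\ge k_\alpha$. Sort the values and take the $k_\alpha$ smallest, with ties broken arbitrarily. Each of these has at most $k_\alpha-1$ values strictly smaller than it, so each $E_j$ holds for them. This gives $\mathbb P\ge k_\alpha/(n+1)\ge\alpha$.

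For the upper bound, assume the scores are a.s. distinct. Then the rank of $S_{n+1}$ among the $n+1$ scores is uniform on $\{1,\ldots,n+1\}$ by exchangeability. The event becomes $\{\mathrm{rank}\le k_\alpha\}$, which has probability $k_\alpha/(n+1)$. Since $\lceil(n+1)\alpha\rceil<(n+1)\alpha+1$, this probability is at most $\alpha+1/(n+1)$.

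The main obstacle is the tie handling in the lower bound, which the counting identity above resolves. Proposition~\ref{prop:calibration_transfer} then follows immediately. On the coverage event we have $\bm D_{\mathrm{test}}\in\mathcal U_\alpha(\bm Z_{\mathrm{test}})$, and the hypothesis gives $h_m(\tilde{\bm x},\bm D_{\mathrm{test}})\le0$ for all $m$ a.s. Event inclusion and the lower bound of the lemma yield \eqref{eq:calibration-decision}.
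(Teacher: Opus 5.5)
Your proof is correct. For the upper bound it matches the paper's argument: under almost-sure distinctness the test rank is uniform on $\{1,\ldots,n+1\}$, so coverage equals $k_\alpha/(n+1)<\alpha+1/(n+1)$.

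For the lower bound you take a different route to handle ties. The paper attaches i.i.d.\ $\mathrm{Unif}(0,1)$ variables $U_i$ and ranks the pairs $(S_i,U_i)$ lexicographically, so that the test rank $R_{\mathrm{test}}$ is exactly uniform. It then uses only the one-way implication $R_{\mathrm{test}}\le k_\alpha\Rightarrow S_{\mathrm{test}}\le S_{(k_\alpha)}$. You instead characterize the coverage event exactly as $E_{n+1}=\{\#\{i:S_i<S_{n+1}\}\le k_\alpha-1\}$. You use exchangeability only to equate $\mathbb P(E_j)$ across $j$, and you finish with the deterministic count $\sum_j\indc_{E_j}\ge k_\alpha$.

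What your version buys: it needs no auxiliary randomization, so there is no enlargement of the probability space and no independence of the $U_i$ from the data to justify. It also makes visible that ties can only raise coverage, since the count is $\ge k_\alpha$ in general and $=k_\alpha$ when the scores are distinct. What the paper's version buys: it is shorter, and a single device (a uniform rank) drives both the lower and the upper bound, with distinctness simply turning the implication into an equivalence.

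One small point: when $k_\alpha=n+1$ the upper bound is not literally trivial. It needs the observation that $\lceil(n+1)\alpha\rceil=n+1$ forces $\alpha>n/(n+1)$, i.e.\ $1=k_\alpha/(n+1)<\alpha+1/(n+1)$. The paper records this explicitly.
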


\begin{proof}{Proof of Lemma~\ref{lem:conformal_coverage}.}
This is a canonical result in conformal prediction \citep{vovk2012conditional,angelopoulos2023conformal}; we include a brief proof for completeness. Treat the predictor and all score-design choices as fixed before calibration, and let $S_{\mathrm{test}}=s_{\hat f}(\bm Z_{\mathrm{test}},\bm D_{\mathrm{test}})$. Assumption~\ref{asp:exchangeability} makes the $n+1$ calibration and test scores exchangeable.

If $k_\alpha=n+1$, then $\eta_\alpha=+\infty$, so the coverage probability is one; both bounds in the lemma follow because $\alpha>n/(n+1)$. Now suppose $k_\alpha\le n$. To accommodate ties, attach i.i.d. auxiliary variables $U_1,\ldots,U_n,U_{\mathrm{test}}\sim\mathrm{Unif}(0,1)$ and rank the pairs $(S_i,U_i)$ lexicographically. The rank $R_{\mathrm{test}}$ of $(S_{\mathrm{test}},U_{\mathrm{test}})$ among the $n+1$ exchangeable pairs is uniform on $\{1,\ldots,n+1\}$. Moreover, $R_{\mathrm{test}}\le k_\alpha$ implies $S_{\mathrm{test}}\le S_{(k_\alpha)}=\eta_\alpha$. Hence
\[
\mathbb P(S_{\mathrm{test}}\le\eta_\alpha)
\ge\mathbb P(R_{\mathrm{test}}\le k_\alpha)
=\frac{k_\alpha}{n+1}
\ge\alpha.
\]
When the scores are almost surely pairwise distinct, the implication is an equivalence, and therefore
\[
\mathbb P(S_{\mathrm{test}}\le\eta_\alpha)
=\frac{k_\alpha}{n+1}
\le\alpha+\frac{1}{n+1}.
\]
Finally, $S_{\mathrm{test}}\le\eta_\alpha$ is precisely the event $\bm D_{\mathrm{test}}\in\mathcal U_\alpha(\bm Z_{\mathrm{test}})$, which proves the lemma.
\hfill\halmos\end{proof}

We now apply Lemma~\ref{lem:conformal_coverage}. Define the conformal coverage event
$
    E_\alpha:=\{\bm D_{\mathrm{test}}\in\mathcal U_\alpha(\bm Z_{\mathrm{test}})\}.
$
The lemma gives $\mathbb P(E_\alpha)\ge\alpha$. By the assumed pathwise certificate, almost surely on $E_\alpha$,
\[
    h_m(\tilde{\bm x},\bm D_{\mathrm{test}})\le0,
    \qquad m=1,\ldots,M.
\]
Consequently,
\begin{align*}
\mathbb P\left\{h_m(\tilde{\bm x},\bm D_{\mathrm{test}})\le0,
    \ m=1,\ldots,M\right\}
&\ge\mathbb P(E_\alpha) \ge\alpha,
\end{align*}
which proves the proposition.
\hfill\halmos
\end{proof}

\begin{proposition}
    \label{prop:sample_complexity}
    Suppose Assumption~\ref{asp:iid_stability} holds and the fixed score $S=s_{\hat f}(\bm Z,\bm D)$ has a continuous cumulative distribution function (CDF). Then the calibration-conditional coverage satisfies a one-sided concentration bound at the canonical nonparametric rate. In particular, for any $\delta\in(0,1)$,
    $
    \mathbb{P}\!\left(p(\mathcal{D}_n)\ge \alpha-\sqrt{\frac{\log(1/\delta)}{2n}}\right)\ge 1-\delta,
    $
    and hence $p(\mathcal{D}_n)\ge \alpha-\mathcal{O}(n^{-1/2})$ with high probability.
\end{proposition}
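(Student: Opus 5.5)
The plan is to reduce the calibration-conditional coverage to a uniform random variable evaluated at an order statistic, and then apply a DKW-type one-sided bound. Since the score $S=s_{\hat f}(\bm Z,\bm D)$ has a continuous CDF $F$, and under Assumption~\ref{asp:iid_stability} the test pair is independent of $\mathcal D_n$, we have
\[
p(\mathcal D_n)=\mathbb P\bigl(S_{\mathrm{test}}\le \eta_\alpha\mid\mathcal D_n\bigr)=F(\eta_\alpha)=F\bigl(S_{(k_\alpha)}\bigr),
\]
with the convention $F(+\infty)=1$. The case $k_\alpha=n+1$ is trivial, because then $p(\mathcal D_n)=1$. Otherwise, set $U_i:=F(S_i)$. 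By continuity of $F$ (the probability integral transform), the $U_i$ are i.i.d.\ $\mathrm{Unif}(0,1)$, and monotonicity of $F$ gives $F(S_{(k_\alpha)})=U_{(k_\alpha)}$. So $p(\mathcal D_n)$ is exactly the $k_\alpha$-th uniform order statistic, which has a $\mathrm{Beta}(k_\alpha,n+1-k_\alpha)$ distribution.

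Next, I will bound the lower tail using the empirical CDF $G_n(u)=n^{-1}\sum_i\indc\{U_i\le u\}$. Fix $\epsilon:=\sqrt{\log(1/\delta)/(2n)}$ and $u:=\alpha-\epsilon$, and assume $u>0$; if $u\le0$ the claim is trivial. The event $\{U_{(k_\alpha)}<u\}$ means that at least $k_\alpha$ of the $U_i$ lie below $u$. That forces
\[
G_n(u^-)\ge k_\alpha/n\ge (n+1)\alpha/n\ge\alpha,
\]
so $G_n(u^-)-u\ge\epsilon$. The count $nG_n(u^-)$ is $\mathrm{Binomial}(n,u)$. Hoeffding's inequality, or equivalently the one-sided DKW inequality of Massart, then gives probability at most $\exp(-2n\epsilon^2)=\delta$. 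Therefore $\mathbb P(p(\mathcal D_n)\ge\alpha-\epsilon)\ge1-\delta$, and the $\mathcal O(n^{-1/2})$ statement follows immediately.

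The main obstacle is keeping the inequalities correct at the boundary. One point is strict versus non-strict inequalities: $U_{(k)}<u$ versus $\le u$ only matters on a null set, because the uniforms have no atoms. The other is the direction of the quantile index: I need $k_\alpha/n\ge\alpha$, which holds because $k_\alpha\ge(n+1)\alpha$. Only the single-point Hoeffding bound is needed, not the full uniform DKW.
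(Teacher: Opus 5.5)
Your proposal is correct and follows essentially the same route as the paper. Both arguments reduce $p(\mathcal D_n)$ to the uniform order statistic $U_{(k_\alpha)}$ via the probability integral transform, and both write $\{U_{(k_\alpha)}<\alpha-\epsilon\}$ as a $\mathrm{Binomial}(n,\alpha-\epsilon)$ count reaching $k_\alpha>n\alpha$. Both then apply Hoeffding's inequality with $\epsilon=\sqrt{\log(1/\delta)/(2n)}$, treating the case $k_\alpha=n+1$ as trivial.
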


\begin{proof}{Proof of Proposition~\ref{prop:sample_complexity}.}
We first recall the order-statistic result that controls the lower tail of the calibration-conditional coverage.

\begin{lemma}
\label{lem:sample_complexity}
Under Assumption~\ref{asp:iid_stability}, suppose the score distribution is continuous, and let $k_\alpha=\lceil(n+1)\alpha\rceil$. Define
$
p(\mathcal D_n)
:=\mathbb P\!\left(\bm D_{\mathrm{test}}\in
\mathcal U_\alpha(\bm Z_{\mathrm{test}})\mid\mathcal D_n\right).
$
If $k_\alpha\le n$, then
$
p(\mathcal D_n)\sim\mathrm{Beta}(k_\alpha,n+1-k_\alpha).
$
If $k_\alpha=n+1$, then $p(\mathcal D_n)=1$ almost surely. In either case, for every $\Delta>0$,
\begin{equation}
\mathbb P\!\left(p(\mathcal D_n)<\alpha-\Delta\right)
\le e^{-2n\Delta^2}.
\end{equation}
\end{lemma}

\begin{proof}{Proof of Lemma~\ref{lem:sample_complexity}.}
Because the fitted score function is fixed before calibration, let $F$ denote the CDF of $S=s_{\hat f}(\bm Z,\bm D)$. Under Assumption~\ref{asp:iid_stability} and continuity of $F$, the probability integral transforms $U_i=F(S_i)$ are i.i.d. $\mathrm{Unif}(0,1)$. Write $k=k_\alpha$. If $k\le n$, then, conditional on $\mathcal D_n$, independence of the test pair gives
\begin{align*}
p(\mathcal D_n)
&=\mathbb P\!\left(S_{\mathrm{test}}\le S_{(k)}\mid\mathcal D_n\right) =F(S_{(k)})
=U_{(k)}.
\end{align*}
The $k$th order statistic of $n$ independent uniform random variables has the $\mathrm{Beta}(k,n+1-k)$ distribution. If $k=n+1$, the conformal convention $S_{(n+1)}=+\infty$ instead gives $p(\mathcal D_n)=1$ almost surely.

It remains to establish the lower-tail bound. Fix $\Delta>0$ and set $t=\alpha-\Delta$. If $t\le0$ or $k=n+1$, the result is immediate. Otherwise, the order-statistic identity gives
\[
\mathbb P\!\left(p(\mathcal D_n)<t\right)
=\mathbb P\!\left(U_{(k)}<t\right)
=\mathbb P\!\left(B_{n,t}\ge k\right),
\qquad B_{n,t}\sim\mathrm{Binomial}(n,t).
\]
Because $k=\lceil(n+1)\alpha\rceil>n\alpha$ and $t=\alpha-\Delta$, Hoeffding's inequality yields
\[
\mathbb P(B_{n,t}\ge k)
\le\mathbb P\!\left(\frac{B_{n,t}}{n}-t\ge\alpha-t\right)
\le\exp\{-2n(\alpha-t)^2\}
=e^{-2n\Delta^2}.
\]
This proves the lemma.
\hfill\halmos\end{proof}

We now complete the proof of the proposition. Lemma~\ref{lem:sample_complexity} implies that, for every $\Delta>0$,
\begin{equation}
\label{eq:tail_bound_cor}
\mathbb{P}\bigl(p(\mathcal{D}_n)< \alpha-\Delta\bigr)\le e^{-2n\Delta^2}.
\end{equation}
Fix an arbitrary confidence level $\delta\in(0,1)$ and set
$
\Delta_n(\delta) := \sqrt{\frac{\log(1/\delta)}{2n}}.
$
Substituting $\Delta=\Delta_n(\delta)$ into \eqref{eq:tail_bound_cor} yields
\[
\mathbb{P}\left(p(\mathcal{D}_n)< \alpha-\sqrt{\frac{\log(1/\delta)}{2n}}\right)
\le \exp\left(-2n\cdot \frac{\log(1/\delta)}{2n}\right)=\delta.
\]
Equivalently,
\[
\mathbb{P}\left(p(\mathcal{D}_n)\ge \alpha-\sqrt{\frac{\log(1/\delta)}{2n}}\right)\ge 1-\delta.
\]
Hence, for any fixed $\delta$, with probability at least $1-\delta$ we have
\[
p(\mathcal{D}_n)\ge\alpha - C_\delta\,n^{-1/2},
\qquad\text{where}\qquad
C_\delta:=\sqrt{\frac{\log(1/\delta)}{2}}.
\]
This establishes that the calibration-conditional coverage deviates below $\alpha$ by at most
$\mathcal{O}(n^{-1/2})$ with high probability, proving the claimed convergence rate.
\hfill\halmos
\end{proof}

\subsection{Proofs for Section~\ref{sec:cro_framework}}
\begin{proof}{Proof of Proposition~\ref{prop:convex_bound}.}
Fix $\bm z_{\mathrm{test}}$ and $\bm d_{\mathrm{test}}\in\mathcal U_\alpha(\bm z_{\mathrm{test}})$, and write
\[
    \mathcal U:=\mathcal U_\alpha(\bm z_{\mathrm{test}}),
    \qquad
    \hat{\bm d}:=\hat f(\bm z_{\mathrm{test}}).
\]
Define
$A_i^{\mathcal U}(\bm x):=\sup_{\bm d\in\mathcal U}a_i(\bm x,\bm d)$,
$i\in\{0\}\cup[I]$. Because $\bm d_{\mathrm{test}}\in\mathcal U$, every robust-feasible decision is feasible for the realized problem, and
$A_0^{\mathcal U}(\bm x)\ge a_0(\bm x,\bm d_{\mathrm{test}})$.
Feasible-set containment and objective dominance therefore imply
$
    \delta_{\mathrm{Conv}}(\bm z_{\mathrm{test}},\bm d_{\mathrm{test}}) \ge 0.
$

Let $\bm\lambda^R=(\lambda_1^R,\ldots,\lambda_I^R)$ be the dual-optimal multiplier vector in the proposition, and define the robust Lagrangian
\[
    \mathcal L_R(\bm x,\bm\lambda)
    :=
    A_0^{\mathcal U}(\bm x)
    +\sum_{i=1}^I\lambda_i
      \bigl(A_i^{\mathcal U}(\bm x)-b_i\bigr).
\]
Strong duality and dual attainment for the outer robust program give
\[
    V_{\ConfRO}
    =\inf_{\bm x\in\mathcal X}\mathcal L_R(\bm x,\bm\lambda^R).
\]
Moreover, realized feasibility of $\bm x^\star_{\mathrm{test}}$ gives
$a_i(\bm x^\star_{\mathrm{test}},\bm d_{\mathrm{test}})\le b_i$
for all $i\in[I]$. Hence,
\begin{align*}
\delta_{\mathrm{Conv}}(\bm z_{\mathrm{test}},\bm d_{\mathrm{test}})
&=V_{\ConfRO}
  -a_0(\bm x^\star_{\mathrm{test}},\bm d_{\mathrm{test}})\\
&\le
  A_0^{\mathcal U}(\bm x^\star_{\mathrm{test}})
  -a_0(\bm x^\star_{\mathrm{test}},\bm d_{\mathrm{test}})
  +\sum_{i=1}^I\lambda_i^R
   \bigl(A_i^{\mathcal U}(\bm x^\star_{\mathrm{test}})-b_i\bigr)\\
&\le
  A_0^{\mathcal U}(\bm x^\star_{\mathrm{test}})
  -a_0(\bm x^\star_{\mathrm{test}},\bm d_{\mathrm{test}})\\
&\quad
  +\sum_{i=1}^I\lambda_i^R
   \bigl[
      A_i^{\mathcal U}(\bm x^\star_{\mathrm{test}})
      -a_i(\bm x^\star_{\mathrm{test}},\bm d_{\mathrm{test}})
   \bigr],
\end{align*}
where the first inequality evaluates the Lagrangian infimum at
$\bm x^\star_{\mathrm{test}}$, and the second also uses
$\bm\lambda^R\ge\bm{0}$.

For every $i\in\{0\}\cup[I]$, the score-sensitivity definition yields
\begin{align*}
&A_i^{\mathcal U}(\bm x^\star_{\mathrm{test}})
-a_i(\bm x^\star_{\mathrm{test}},\bm d_{\mathrm{test}})\\
&\quad\le
\sup_{\bm d\in\mathcal U}
\left|
    a_i(\bm x^\star_{\mathrm{test}},\bm d)
    -a_i(\bm x^\star_{\mathrm{test}},\bm d_{\mathrm{test}})
\right|\\
&\quad\le
\mathcal L_{\mathfrak s,i}(\bm x^\star_{\mathrm{test}})
\sup_{\bm d\in\mathcal U}
\mathfrak s(\bm d,\bm d_{\mathrm{test}}).
\end{align*}
When $\mathcal U$ is a singleton, each envelope difference is zero, and the result follows from the convention $\mathcal L_{\mathfrak s,i}(\bm x)=0$. Otherwise, the triangle inequality and metric symmetry give, for every $\bm d\in\mathcal U$,
\[
    \mathfrak s(\bm d,\bm d_{\mathrm{test}})
    \le
    \mathfrak s(\bm d,\hat{\bm d})
    +\mathfrak s(\hat{\bm d},\bm d_{\mathrm{test}})
    \le2\eta_\alpha.
\]
Substituting this bound for each objective and constraint envelope difference proves
\[
    0\le
    \delta_{\mathrm{Conv}}(\bm z_{\mathrm{test}},\bm d_{\mathrm{test}})
    \le
    2\eta_\alpha
    \left[
        \mathcal L_{\mathfrak s,0}(\bm x^\star_{\mathrm{test}})
        +\sum_{i=1}^I\lambda_i^R
        \mathcal L_{\mathfrak s,i}(\bm x^\star_{\mathrm{test}})
    \right].
\]
\hfill\halmos
\end{proof}

\begin{proof}{Derivation for Remark~\ref{rem:linear_case}.}
Under the linear specialization, write
\[
    \mathcal U:=\mathcal U_\alpha(\bm z_{\mathrm{test}}),
    \qquad
    \sigma_{\mathcal U}(\bm x)
    :=\sup_{\bm d\in\mathcal U}\bm d^\top\bm x.
\]
The robust Lagrangian and, by the assumed strong duality of the outer robust program, the robust value satisfy
\[
\begin{aligned}
    \mathcal L_R(\bm x,\lambda)
    &:=\bm c^\top\bm x+\lambda[\sigma_{\mathcal U}(\bm x)-b],\\
    V_{\ConfRO}
    &=\inf_{\bm x\in\mathcal X}\mathcal L_R(\bm x,\lambda^R).
\end{aligned}
\]
Evaluating the infimum at $\bm x^\star_{\mathrm{test}}$ and using
$V_{\mathrm{test}}=\bm c^\top\bm x^\star_{\mathrm{test}}$ gives
\begin{align*}
0\le\delta_{\mathrm{Lin}}(\bm z_{\mathrm{test}},\bm d_{\mathrm{test}})
&\le
\lambda^R\bigl[\sigma_{\mathcal U}(\bm x^\star_{\mathrm{test}})-b\bigr]\\
&\le
\lambda^R\bigl[
    \sigma_{\mathcal U}(\bm x^\star_{\mathrm{test}})
    -\bm d_{\mathrm{test}}^\top\bm x^\star_{\mathrm{test}}
\bigr]\\
&=
\lambda^R\sup_{\bm d\in\mathcal U}
(\bm d-\bm d_{\mathrm{test}})^\top\bm x^\star_{\mathrm{test}}\\
&\le
2\eta_\alpha\lambda^R
\mathcal L_{\mathfrak s}(\bm x^\star_{\mathrm{test}}).
\end{align*}
The second inequality uses realized feasibility,
$\bm d_{\mathrm{test}}^\top\bm x^\star_{\mathrm{test}}\le b$.
For the last inequality, taking absolute values and applying the definition of
$\mathcal L_{\mathfrak s}$ bounds the support-function difference by
$\mathcal L_{\mathfrak s}(\bm x^\star_{\mathrm{test}})
\sup_{\bm d\in\mathcal U}\mathfrak s(\bm d,\bm d_{\mathrm{test}})$. Lastly,
the metric-ball argument in the proposition bounds the supremum by
$2\eta_\alpha$.
\hfill\halmos
\end{proof}

\subsection{Proofs for Section~\ref{sec:crs_framework}}
\begin{proof}{Proof of Theorem~\ref{thm:fragility_measure}.}
For the lower semicontinuity claim, we view $\rho$ as an extended-real-valued functional on $\{v: \mathfrak{D} \rightarrow \mathbb{R}\}$ equipped with the topology of pointwise convergence. Equivalently, the argument below applies to any function space in which pointwise evaluations $v\mapsto v(\bm{d})$ are continuous. Let $K_v=\{k\ge 0:v(\bm{d})\le k\cdot\mathfrak{s}(\bm{d}), \forall \bm{d}\in\mathfrak{D}\}$. By definition $\rho(v)=\inf K_v$ with $\inf \varnothing=+\infty$.

We first prove lower semicontinuity. Consider the epigraph $\text{epi}(\rho)=\{(v,k):k\ge \rho(v)\}$. Because $K_v$ is an intersection of closed half-lines in $k$, whenever it is nonempty, it is closed and upward closed. Thus
\[
(v,k)\in \operatorname{epi}(\rho)
\quad\Longleftrightarrow\quad
k\ge0,\quad v(\bm{d})\le k \mathfrak{s}(\bm{d}),\ \forall \bm{d}\in\mathfrak D.
\]
Equivalently, we have
\[
\text{epi}(\rho) = \bigcap_{\bm{d}\in\mathfrak D} \{(v,k): v(\bm{d})-k \mathfrak{s}(\bm{d})\le0\} \cap \{(v,k):k\ge0\}.
\]
For each fixed $\bm{d}$, the map $(v,k)\mapsto v(\bm{d})-k\mathfrak{s}(\bm{d})$ is continuous, so each constraint set is closed. Hence the epigraph is an intersection of closed sets and is closed. Therefore $\rho$ is lower semicontinuous.

Then we verify the five properties within the axioms sequentially.

\emph{Monotonicity.} Assume $v_1(\bm{d})\ge v_2(\bm{d})$ for all $\bm{d}\in\mathfrak{D}$. If $k$ satisfies $v_1(\bm{d})\le k\cdot\mathfrak{s}(\bm{d}),\ \forall \bm{d}\in\mathfrak{D}$, then automatically $v_2(\bm{d})\le k\cdot\mathfrak{s}(\bm{d}),\ \forall \bm{d}\in\mathfrak{D}$. Thus $K_1\subseteq K_2$, so $\inf K_1\ge \inf K_2$. Equivalently, via the supremum form, $\frac{v_1(\bm{d})}{\mathfrak{s}(\bm{d})}\ge\frac{v_2(\bm{d})}{\mathfrak{s}(\bm{d})}$ pointwise, hence their suprema satisfy the inequality; taking the max with 0 preserves order.

\emph{Positive Homogeneity.} First, $\rho(0)=0$. For $\lambda=0$, both sides are zero. For $\lambda>0$, $\rho(\lambda v)=\inf\{k\ge0:\lambda v(\bm{d})\le k\cdot \mathfrak{s}(\bm{d}), \forall \bm{d}\in\mathfrak{D}\}=\inf\{k\ge0: v(\bm{d})\le (k/\lambda)\mathfrak{s}(\bm{d})\}$. Mapping $k\rightarrow k/\lambda$ yields $\rho(\lambda v)=\lambda\rho(v)$.

\emph{Subadditivity.} If either $\rho(v_1)=+\infty$ or $\rho(v_2)=+\infty$, the inequality is immediate. Otherwise, if two instances are both feasible, use the supremum representation: for any $\bm{d}$ with $\mathfrak{s}(\bm{d})>0$,
\[
    \frac{v_1(\bm{d})+v_2(\bm{d})}{\mathfrak{s}(\bm{d})}\le \frac{v_1(\bm{d})}{\mathfrak{s}(\bm{d})}+\frac{v_2(\bm{d})}{\mathfrak{s}(\bm{d})}.
\]
Take supremum over $\bm{d}$ and then max with $0$ to obtain
\begin{equation*}
    \begin{aligned}
    \rho(v_1+v_2)&=\max\left\{0,\sup_{\bm{d}}\frac{v_1(\bm{d})+v_2(\bm{d})}{\mathfrak{s}(\bm{d})}\right\}\le \max\left\{0,\sup_{\bm{d}}\frac{v_1(\bm{d})}{\mathfrak{s}(\bm{d})}+\sup_{\bm{d}} \frac{v_2(\bm{d})}{\mathfrak{s}(\bm{d})}\right\}\\
    &\le\max\left\{0,\sup_{\bm{d}}\frac{v_1(\bm{d})}{\mathfrak{s}(\bm{d})}\right\}+\max\left\{0,\sup_{\bm{d}}\frac{v_2(\bm{d})}{\mathfrak{s}(\bm{d})}\right\}=\rho(v_1)+\rho(v_2).
    \end{aligned}
\end{equation*}

\emph{Pro-robustness.} If $v_{\bm{x}}(\bm{d})\le 0$ pointwise in $\mathfrak{D}$, then $k=0$ is feasible in $\mathfrak{D}$. Hence $\rho(v)=\inf K_v\le 0$. Since $\rho(v)\ge 0$ by construction, $\rho(v)=0$.

\emph{Anti-fragility.} If $v(\hat{\bm{d}})>0$, then using $\mathfrak{s}(\hat{\bm{d}})=\mathfrak{s}(\hat{\bm{d}},\hat{\bm{d}})=0$, for any finite $k\ge0$ we have $k\mathfrak{s}(\hat{\bm{d}})=0<v(\hat{\bm{d}}).$
Thus no finite $k$ can satisfy the defining inequality at $\bm{d}=\hat{\bm{d}}$, so $K_v=\varnothing$. It follows that $\rho(v)=+\infty$. \hfill\halmos

\end{proof}

\begin{proof}{Proof of Proposition~\ref{prop:crs_performance}.}
Fix the target $\tau$, condition on the data used to construct the predictor and score, and fix a measurable optimizer rule; suppress this conditioning below. By feasibility of the selected solution for \eqref{eq:ConfRS}, for every $\bm z$ and every $\bm d\in\mathfrak D$,
\begin{equation}
    a\bigl(\bm x^*(\bm z),\bm d\bigr)-\tau\le
    k^*(\bm z)s_{\hat f}(\bm z,\bm d)=
    R_\tau(\bm z,\bm d).
    \label{eq:crs_scaled_score_pathwise}
\end{equation}
Apply \eqref{eq:crs_scaled_score_pathwise} to the test pair
$(\bm Z_{\mathrm{test}},\bm D_{\mathrm{test}})$. We obtain, almost surely,
\[
    a\bigl(\bm x^*(\bm Z_{\mathrm{test}}),\bm D_{\mathrm{test}}\bigr)-\tau
    \le
    R_\tau(\bm Z_{\mathrm{test}},\bm D_{\mathrm{test}}).
\]
Therefore, for any violation margin $\Delta>0$,
\[
\left\{
a\bigl(\bm x^*(\bm Z_{\mathrm{test}}),\bm D_{\mathrm{test}}\bigr)>\tau+\Delta\right\}\subseteq
\left\{R_\tau(\bm Z_{\mathrm{test}},\bm D_{\mathrm{test}})>\Delta\right\},
\]
which implies
\begin{equation}
\mathbb P\left\{a\bigl(\bm x^*(\bm Z_{\mathrm{test}}),\bm D_{\mathrm{test}}\bigr)>\tau+\Delta\right\}
\le
\mathbb P\left\{R_\tau(\bm Z_{\mathrm{test}},\bm D_{\mathrm{test}})>\Delta\right\}.
\label{eq:crs_event_transfer}
\end{equation}
Let $F^R$ denote the population CDF of the fragility-scaled score, $F^R(t):=\mathbb P\left\{R_\tau(\bm Z,\bm D)\le t\right\}.$
Because the test pair follows the same distribution as
$(\bm Z,\bm D)$ under Assumption~\ref{asp:iid_stability},
\begin{equation}
\mathbb P\left\{
a\bigl(\bm x^*(\bm Z_{\mathrm{test}}),\bm D_{\mathrm{test}}\bigr)>\tau+\Delta
\right\}
\le 1-F^R(\Delta).
\label{eq:crs_population_tail}
\end{equation}

We next estimate $F^R$ using the calibration sample. Under this conditioning, the mapping
\[
    (\bm z,\bm d)\mapsto R_\tau(\bm z,\bm d) = k^*(\bm z)s_{\hat f}(\bm z,\bm d)
\]
is fixed and measurable. Under Assumption~\ref{asp:iid_stability}, the calibration pairs $\{(\bm Z_i,\bm D_i)\}_{i=1}^n$ remain i.i.d. and have the same distribution as the test pair.
Hence, $R_i=R_\tau(\bm Z_i,\bm D_i),\ i=1,\ldots,n$ are i.i.d. observations with CDF $F^R$, and $\widehat F_n^R(t)=\frac{1}{n}\sum_{i=1}^n\mathbf 1\{R_i\le t\}$ is their empirical CDF. Then set $\varepsilon_n:=\sqrt{\frac{\ln(1/\epsilon)}{2n}}.$ The one-sided Dvoretzky--Kiefer--Wolfowitz inequality gives
\[
    \mathbb P_{\mathcal D_{\mathrm{cal}}}
    \left(\sup_{t\in\mathbb R}\left\{\widehat F_n^R(t)-F^R(t)\right\}>\varepsilon_n\right)
    \le\exp(-2n\varepsilon_n^2)=\epsilon.
\]
Therefore, with probability at least $1-\epsilon$ over the calibration
sample,
\[
    F^R(t)\ge\widehat F_n^R(t)-\varepsilon_n,\qquad \forall t\in\mathbb R.
\]
Evaluating this uniform inequality at the fixed threshold $t=\Delta$
gives
\[
    1-F^R(\Delta)\le1-\widehat F_n^R(\Delta)+\sqrt{\frac{\ln(1/\epsilon)}{2n}}.
\]
Finally, combining this inequality with
\eqref{eq:crs_population_tail}, we conclude that, with probability at
least $1-\epsilon$ over the calibration sample,
\[
    \mathbb P\left\{a\bigl(\bm x^*(\bm Z_{\mathrm{test}}),\bm D_{\mathrm{test}}\bigr)>\tau+\Delta\right\}
    \le
    1-\widehat F_n^R(\Delta)+\sqrt{\frac{\ln(1/\epsilon)}{2n}}.
\]
\hfill\halmos
\end{proof}

\subsection{Proofs for Section~\ref{sec:ConfRO_ConfRS_equivalence}}
\label{app:proof_equivalence}

\begin{proof}{Proof of Proposition~\ref{prop:sufficient_duality}.}
Fix $\bm x$ and $\theta$, and write
$
    v_\theta(\bm x)
    :=
    \sup\{a(\bm x,\bm d):\bm d\in\mathfrak D,
    \mathfrak s(\bm d,\hat{\bm d})\le\theta\}.
$
Equivalently, $-v_\theta(\bm x)$ is the value of the convex minimization problem
\[
    \inf_{\bm d\in\mathfrak D}
    \{-a(\bm x,\bm d):
    \mathfrak s(\bm d,\hat{\bm d})-\theta\le0\}.
\]
The assumed concavity and upper semicontinuity of $a(\bm x,\cdot)$ make $-a(\bm x,\cdot)$ proper, lower semicontinuous, and convex on $\mathfrak D$, while the score constraint is convex. The relative-interior point $\bar{\bm d}$ with $\mathfrak s(\bar{\bm d},\hat{\bm d})<\theta$ is Slater's condition for this convex problem. Hence strong Lagrangian duality holds, and the dual optimum is attained at some $k^*\ge0$. In particular,
\[
    -v_\theta(\bm x)
    =
    \sup_{k\ge0}
    \inf_{\bm d\in\mathfrak D}
    \{-a(\bm x,\bm d)+k(\mathfrak s(\bm d,\hat{\bm d})-\theta)\}.
\]
Multiplying by $-1$ and using
$-\inf_{\bm d}q(\bm d)=\sup_{\bm d}[-q(\bm d)]$ yields
\[
    v_\theta(\bm x)
    =
    \inf_{k\ge0}
    \left\{
    k\theta+
    \sup_{\bm d\in\mathfrak D}
    \bigl(a(\bm x,\bm d)-k\mathfrak s(\bm d,\hat{\bm d})\bigr)
    \right\},
\]
which is the desired score-duality identity, with the infimum attained at $k^*$. \hfill\halmos
\end{proof}

\begin{table}[htb]
\centering
\caption{Common score geometries satisfying strong duality for affine objective uncertainty.}
\label{tab:score_duality_conditions}
\small
\setlength{\tabcolsep}{4pt}
\begin{tabularx}{\textwidth}{p{0.20\textwidth}p{0.26\textwidth}p{0.24\textwidth}X}
\toprule
Score class & Set geometry & Duality mechanism & Used in the paper \\
\midrule
Weighted box / $L_\infty$ & Polyhedral interval set & Linear-programming duality under relative-interior feasibility & Score-design examples \\
Weighted budget / $L_1$ & Polyhedral budget set & Linear-programming duality under relative-interior feasibility & Fractional knapsack and online-grocery case study \\
Mahalanobis / ellipsoidal & Ellipsoid or second-order-cone set & Conic or convex-quadratic strong duality under Slater feasibility & Fractional knapsack and online-grocery case study \\
Asymmetric weighted budget & Polyhedral set with direction-specific slopes & Linear-programming duality under relative-interior feasibility & Score-design examples \\
\bottomrule
\end{tabularx}
\end{table}

To establish the value representation, define
$
    B(\bm x,k)
    :=\sup_{\bm d\in\mathfrak D}
    \{a(\bm x,\bm d)-k\mathfrak s(\bm d,\hat{\bm d})\}.
$
Assumption~\ref{ass:cro_crs_equiv}\ref{ass:equiv_dual} gives, for every $\bm x\in\mathcal X$,
$
    \sup_{\bm d\in\mathfrak D(\theta)}a(\bm x,\bm d)
    =\inf_{k\ge0}\{k\theta+B(\bm x,k)\}.
$
Taking the outer infimum and combining the two nested infima therefore yields
\[
    V(\theta)=\inf_{\bm x\in\mathcal X}\inf_{k\ge0}\{k\theta+B(\bm x,k)\}=\inf_{\substack{\bm x\in\mathcal X,\ k\ge0,\ \tau\in\mathbb R\\ B(\bm x,k)\le\tau}}\{\tau+k\theta\}.
\]
The second equality is the epigraph representation of $B(\bm x,k)$ and does not require the infimum over $k$ to be attained. Written as an optimization problem, it is exactly \eqref{eq:ConfRO-D}.

For a fixed target $\tau$, the infimum of $k$ over the pairs $(\bm x,k)$ satisfying $B(\bm x,k)\le\tau$ is precisely the extended value $\kappa^*(\tau)$ of \ConfRS{}$(\tau)$. Hence, for $\theta>0$, taking the infimum first over $(\bm x,k)$ and then over $\tau$ gives
\[
    V(\theta)
    =\inf_{\tau\in\mathbb R}\{\tau+\theta\kappa^*(\tau)\}
    =\inf_{\tau\ge\underline\tau}\{\tau+\theta\kappa^*(\tau)\}.
\]
The last equality follows because $\tau<\underline\tau$ makes \ConfRS{}$(\tau)$ infeasible at $\bm d=\hat{\bm d}$, so $\kappa^*(\tau)=+\infty$. This establishes the scalar value representation \eqref{eq:cro_e} without assuming that either infimum is attained.

\begin{proof}{Proof of Proposition~\ref{prop:cro_to_crs}.}
    Let $(\bm{x}^*,k^*,\tau^*)$ be an optimal solution of \eqref{eq:ConfRO-D}. Its feasibility in \eqref{eq:ConfRO-D} gives $\bm{x}^*\in\mathcal X$, $k^*\ge0$, and
    \[
        \sup_{\bm d\in\mathfrak D}
        \bigl\{a(\bm{x}^*,\bm d)-k^*\mathfrak s(\bm d,\hat{\bm d})\bigr\}
        \le \tau^*.
    \]
    The last inequality is equivalent to
    $a(\bm{x}^*,\bm d)-\tau^*\le
    k^*\mathfrak s(\bm d,\hat{\bm d})$ for every
    $\bm d\in\mathfrak D$. Hence, by the constraints in \eqref{eq:ConfRS},
    $(\bm{x}^*,k^*)$ is feasible for \ConfRS{}($\tau^*$).

    Now let $(\bm{x},k)$ be an arbitrary feasible pair for
    \ConfRS{}($\tau^*$). By \eqref{eq:ConfRS},
    $\bm{x}\in\mathcal X$, $k\ge0$, and
    \[
        \sup_{\bm d\in\mathfrak D}
        \bigl\{a(\bm{x},\bm d)-k\mathfrak s(\bm d,\hat{\bm d})\bigr\}
        \le \tau^*.
    \]
    Therefore, $(\bm{x},k,\tau^*)$ is feasible for \eqref{eq:ConfRO-D}.
    The optimality of $(\bm{x}^*,k^*,\tau^*)$ in \eqref{eq:ConfRO-D}
    then implies
    \[
        \tau^*+\theta k^*\le \tau^*+\theta k.
    \]
    Since $\theta>0$ by the hypothesis of Proposition~\ref{prop:cro_to_crs},
    it follows that $k^*\le k$. Because $(\bm{x},k)$ was an arbitrary
    feasible pair for \ConfRS{}($\tau^*$), we have
    $k^*=\kappa^*(\tau^*)$, and $(\bm{x}^*,k^*)$ is optimal for
    \ConfRS{}($\tau^*$).
\hfill\halmos\end{proof}

\begin{proof}{Proof of Theorem~\ref{thm:crs_to_cro}.}
    We first establish the convexity property used by the scalar representation in \eqref{eq:cro_e}.

    \begin{lemma}
    If $\mathcal X$ is convex and $a(\cdot,\bm d)$ is convex for every $\bm d\in\mathfrak D$, then $\kappa^*$ is convex.
    \label{lemma:kappa_convex}
    \end{lemma}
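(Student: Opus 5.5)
The plan is to show convexity of $\kappa^*$ directly from the definition, by building a feasible pair for the intermediate target from feasible pairs at two targets. We work with the extended-valued function $\kappa^*:\mathbb R\to[0,+\infty]$ and prove
\[
\kappa^*(\lambda\tau_1+(1-\lambda)\tau_2)\le\lambda\kappa^*(\tau_1)+(1-\lambda)\kappa^*(\tau_2)
\]
for all $\tau_1,\tau_2$ and $\lambda\in[0,1]$. If either right-hand value is $+\infty$, there is nothing to prove. So we assume both are finite and fix $\varepsilon>0$.

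\textbf{Step 1 (near-optimal pairs).} By the definition of the infimum in \eqref{eq:ConfRS}, we pick feasible pairs $(\bm x_j,k_j)$ for \ConfRS{}$(\tau_j)$ with $k_j\le\kappa^*(\tau_j)+\varepsilon$, for $j=1,2$. Feasibility means $a(\bm x_j,\bm d)-\tau_j\le k_j\mathfrak s(\bm d,\hat{\bm d})$ for all $\bm d\in\mathfrak D$.

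\textbf{Step 2 (the convex combination).} We set
\[
\bm x_\lambda=\lambda\bm x_1+(1-\lambda)\bm x_2,\qquad k_\lambda=\lambda k_1+(1-\lambda)k_2,\qquad \tau_\lambda=\lambda\tau_1+(1-\lambda)\tau_2.
\]
Convexity of $\mathcal X$ gives $\bm x_\lambda\in\mathcal X$, and clearly $k_\lambda\ge0$. For each fixed $\bm d$, convexity of $a(\cdot,\bm d)$ gives
\[
a(\bm x_\lambda,\bm d)\le\lambda a(\bm x_1,\bm d)+(1-\lambda)a(\bm x_2,\bm d).
\]
Subtracting $\tau_\lambda$ and using the two feasibility inequalities, we get
\[
a(\bm x_\lambda,\bm d)-\tau_\lambda\le\lambda k_1\mathfrak s+(1-\lambda)k_2\mathfrak s=k_\lambda\,\mathfrak s(\bm d,\hat{\bm d}).
\]
The key point is that the right-hand side is linear in $k$ with the same multiplier $\mathfrak s(\bm d,\hat{\bm d})\ge0$ for both pairs, so the combination closes up exactly.

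\textbf{Step 3 (conclusion).} The pair $(\bm x_\lambda,k_\lambda)$ is therefore feasible for \ConfRS{}$(\tau_\lambda)$, so
\[
\kappa^*(\tau_\lambda)\le k_\lambda\le\lambda\kappa^*(\tau_1)+(1-\lambda)\kappa^*(\tau_2)+\varepsilon.
\]
Letting $\varepsilon\downarrow0$ completes the argument. Equivalently, the argument shows that the set $\{(\bm x,k,\tau):\bm x\in\mathcal X,\ k\ge0,\ a(\bm x,\bm d)-\tau\le k\mathfrak s(\bm d,\hat{\bm d})\ \forall\bm d\}$ is jointly convex, and $\kappa^*$ is the partial minimization of the linear function $k$ over it.

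There is no real obstacle. The only care needed is the extended-value bookkeeping: the cases where the infimum is not attained (handled by $\varepsilon$) and the cases with value $+\infty$ (trivial). The properness and closedness assumed in Assumption~\ref{ass:cro_crs_equiv} are not needed for convexity itself, only for later steps.
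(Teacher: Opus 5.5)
Your proof is correct and follows essentially the same route as the paper: take $\varepsilon$-optimal feasible pairs at $\tau_1,\tau_2$, form the convex combination, verify feasibility at the intermediate target using convexity of $\mathcal X$ and of $a(\cdot,\bm d)$, and let $\varepsilon\downarrow 0$. The only cosmetic difference is that the paper checks feasibility through the supremum form $\sup_{\bm d\in\mathfrak D}\{a(\bm x,\bm d)-k\,\mathfrak s(\bm d,\hat{\bm d})\}\le\tau$ and subadditivity of the supremum, whereas you argue pointwise in $\bm d$; the two are equivalent.
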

    \begin{proof}{Proof of Lemma~\ref{lemma:kappa_convex}.}
        Fix $\tau_1,\tau_2$ at which $\kappa^*$ is finite and let $\varepsilon>0$. Choose feasible pairs $(\bm{x}_1,k_1)$ and $(\bm{x}_2,k_2)$ for $\ConfRS{}(\tau_1)$ and $\ConfRS{}(\tau_2)$, respectively, such that $k_i\le\kappa^*(\tau_i)+\varepsilon$ for $i\in\{1,2\}$. For any $\lambda\in[0,1]$, the convex combination $(\lambda \bm{x}_1+(1-\lambda)\bm{x}_2,\lambda k_1+(1-\lambda)k_2)$ is feasible for \ConfRS{}($\lambda\tau_1+(1-\lambda)\tau_2$). Indeed,
        \begin{align*}
            &\sup_{\bm{d}\in\mathfrak{D}}\{a(\lambda \bm{x}_1+(1-\lambda)\bm{x}_2,\bm{d})-(\lambda k_1+(1-\lambda)k_2)\cdot \mathfrak{s}(\bm{d},\hat{\bm{d}})\}\\
            \le&\sup_{\bm{d}\in\mathfrak{D}}\{\lambda[a(\bm{x}_1,\bm{d})-k_1\cdot \mathfrak{s}(\bm{d},\hat{\bm{d}})]+(1-\lambda)[a(\bm{x}_2,\bm{d})-k_2\cdot \mathfrak{s}(\bm{d},\hat{\bm{d}})]\}\\
            \le&\lambda\sup_{\bm{d}\in\mathfrak{D}}\{[a(\bm{x}_1,\bm{d})-k_1\cdot \mathfrak{s}(\bm{d},\hat{\bm{d}})]\}+(1-\lambda) \sup_{\bm{d}\in\mathfrak{D}}\{[a(\bm{x}_2,\bm{d})-k_2\cdot \mathfrak{s}(\bm{d},\hat{\bm{d}})]\}\\
            \le&\lambda \tau_1+(1-\lambda)\tau_2.
        \end{align*}
        The first inequality follows from the convexity of $a(\bm{x}, \bm{d})$ in $\bm{x}$, the second from subadditivity of the supremum, and the third from the feasibility of $(\bm{x}_1,k_1)$ and $(\bm{x}_2,k_2)$. Since $\lambda \bm{x}_1+(1-\lambda)\bm{x}_2\in\mathcal{X}$ and $\lambda k_1+(1-\lambda)k_2\ge 0$, the convex combination is feasible. Therefore,
        \[
        \kappa^*(\lambda\tau_1+(1-\lambda)\tau_2)
        \le \lambda\kappa^*(\tau_1)+(1-\lambda)\kappa^*(\tau_2)+\varepsilon.
        \]
        Letting $\varepsilon\downarrow0$ proves that $\kappa^*$ is convex.
    \hfill\halmos\end{proof}

    Now fix the interior target $\tau_{\mathrm{rs}}$ and negative subgradient $\beta(\tau_{\mathrm{rs}})$ from the theorem, and set $\theta^*=-1/\beta(\tau_{\mathrm{rs}})>0$. For $g(\tau,\theta):=\tau+\theta\kappa^*(\tau)$, the subdifferential sum rule gives
    \[
        \partial_\tau g(\tau_{\mathrm{rs}},\theta^*)
        =1+\theta^*\partial\kappa^*(\tau_{\mathrm{rs}}).
    \]
    By construction, $0\in\partial_\tau g(\tau_{\mathrm{rs}},\theta^*)$, so $\tau_{\mathrm{rs}}$ minimizes $g(\cdot,\theta^*)$ in \eqref{eq:cro_e}. Let $(\bm x^*,k^*)$ be any optimal pair for \ConfRS{}$(\tau_{\mathrm{rs}})$. Then $(\bm x^*,k^*,\tau_{\mathrm{rs}})$ is feasible for \eqref{eq:ConfRO-D} and has objective
    \[
        \tau_{\mathrm{rs}}+\theta^*k^*
        =g(\tau_{\mathrm{rs}},\theta^*)
        =V(\theta^*).
    \]
    Thus it is optimal for \eqref{eq:ConfRO-D}. Its feasibility and the score-duality identity imply
    \[
        \sup_{\bm d\in\mathfrak D(\theta^*)}a(\bm x^*,\bm d)
        \le
        \theta^*k^*+
        \sup_{\bm d\in\mathfrak D}
        \{a(\bm x^*,\bm d)-k^*\mathfrak s(\bm d,\hat{\bm d})\}
        \le V(\theta^*).
    \]
    Since $V(\theta^*)$ is the infimum of the left-hand side over $\bm x\in\mathcal X$, equality holds and $\bm x^*$ is optimal for \ConfRO{}$(\theta^*)$. This proves part~\ref{thm:crs_to_cro_correspondence_item}.

    For part~\ref{thm:crs_to_cro_equivalence_item}, suppose additionally that $\tau_{\mathrm{rs}}$ is the unique minimizer of $g(\cdot,\theta^*)$ and that the score-dual infimum is attained for every \ConfRO{}$(\theta^*)$ optimizer. Take any such optimizer $\bar{\bm x}$ and an attaining multiplier $\bar k$. Set
    \[
        \tau'
        :=\sup_{\bm d\in\mathfrak D}
        \{a(\bar{\bm x},\bm d)-\bar k\mathfrak s(\bm d,\hat{\bm d})\}.
    \]
    Dual attainment gives $V(\theta^*)=\tau'+\theta^*\bar k$, so $(\bar{\bm x},\bar k,\tau')$ is optimal for \eqref{eq:ConfRO-D}. Moreover, $\tau'\ge a(\bar{\bm x},\hat{\bm d})\ge\underline\tau$, while $\bar k\ge0$ gives $\tau'\le V(\theta^*)$. Because $\mathfrak D(\theta^*)\subseteq\mathfrak D$, we also have $V(\theta^*)\le\bar\tau$. Hence $\tau'\in[\underline\tau,\bar\tau]$. Proposition~\ref{prop:cro_to_crs} implies that $(\bar{\bm x},\bar k)$ is optimal for \ConfRS{}$(\tau')$, so $g(\tau',\theta^*)=V(\theta^*)$. Thus $\tau'$ minimizes $g(\cdot,\theta^*)$, and uniqueness yields $\tau'=\tau_{\mathrm{rs}}$. Hence $\bar{\bm x}$ is also optimal for \ConfRS{}$(\tau_{\mathrm{rs}})$, proving the reverse inclusion.

    For part~\ref{thm:crs_to_cro_coverage_item}, if the score CDF $F$ is continuous, the calibrated coverage level is
    $\alpha^*=F(\theta^*)=F(-1/\beta(\tau_{\mathrm{rs}}))$.
    With atoms, the same conclusion uses the generalized-quantile convention and may correspond to an interval of coverage levels.
\hfill\halmos\end{proof}

\begin{proof}{Proof of Theorem~\ref{thm:target_value_marginal_cost}.}
We first record the scalar representation and the matched-target relation,
then prove the two marginal-cost claims.

First, we recall the scalar representation induced by
Assumption~\ref{ass:cro_crs_equiv}. Under the exact score-duality condition
in Assumption~\ref{ass:cro_crs_equiv}\ref{ass:equiv_dual}, the robust
problem \ConfRO{}$(\theta)$ is equivalent to
\[
\begin{aligned}
    \inf_{\bm x,k,\tau}\quad
    & \tau+\theta k \\
    \text{s.t.}\quad
    & \sup_{\bm d\in\mathfrak D}
    \left\{
        a(\bm x,\bm d)
        -
        k\mathfrak s(\bm d,\hat{\bm d})
    \right\}
    \le \tau,\\
    & \bm x\in\mathcal X,\quad k\ge0.
\end{aligned}
\tag{\ConfRO{}-D}
\]
For a fixed target $\tau$, the constraint in \ConfRO{}-D is exactly the
feasibility condition in \ConfRS{}$(\tau)$. Therefore, the infimum of $k$
over all pairs $(\bm x,k)$ feasible for this fixed $\tau$ is
$\kappa^*(\tau)$. Taking the nested infima gives
\[
    V(\theta)
    =
    \min_{\tau\in[\underline\tau,\bar\tau]}
    \left\{
        \tau+\theta\kappa^*(\tau)
    \right\}.
\]
The minimum over $\tau$ is attained because $\kappa^*$ is finite and
continuous on the compact interval $[\underline\tau,\bar\tau]$ by the setup
of Theorem~\ref{thm:target_value_marginal_cost}.

The matched subgradient condition identifies
$\tau_{\mathrm{rs}}$ as a selected target at radius
$\theta_{\mathrm{rs}}$. Since
$\beta_{\mathrm{rs}}\in\partial\kappa^*(\tau_{\mathrm{rs}})$, for every
$\tau\in[\underline\tau,\bar\tau]$,
\[
    \kappa^*(\tau)
    \ge
    \kappa^*(\tau_{\mathrm{rs}})
    +
    \beta_{\mathrm{rs}}(\tau-\tau_{\mathrm{rs}}).
\]
Multiplying by $\theta_{\mathrm{rs}}>0$ and adding $\tau$ to both sides
yields
\[
    \tau+\theta_{\mathrm{rs}}\kappa^*(\tau)
    \ge
    \tau
    +
    \theta_{\mathrm{rs}}\kappa^*(\tau_{\mathrm{rs}})
    +
    \theta_{\mathrm{rs}}\beta_{\mathrm{rs}}
    (\tau-\tau_{\mathrm{rs}})
    =
    \tau_{\mathrm{rs}}
    +
    \theta_{\mathrm{rs}}\kappa^*(\tau_{\mathrm{rs}})
    +
    (1+\theta_{\mathrm{rs}}\beta_{\mathrm{rs}})
    (\tau-\tau_{\mathrm{rs}}).
\]
By construction,
$\theta_{\mathrm{rs}}=-1/\beta_{\mathrm{rs}}$, so
$1+\theta_{\mathrm{rs}}\beta_{\mathrm{rs}}=0$. Hence
\[
    \tau+\theta_{\mathrm{rs}}\kappa^*(\tau)
    \ge
    \tau_{\mathrm{rs}}
    +
    \theta_{\mathrm{rs}}\kappa^*(\tau_{\mathrm{rs}})
    \qquad
    \forall \tau\in[\underline\tau,\bar\tau].
\]
Thus $\tau_{\mathrm{rs}}$ minimizes
$g(\cdot,\theta_{\mathrm{rs}})$ over $[\underline\tau,\bar\tau]$, i.e.,
$\tau_{\mathrm{rs}}\in T(\theta_{\mathrm{rs}})$, and
$
    V(\theta_{\mathrm{rs}})
    =
    \tau_{\mathrm{rs}}
    +
    \theta_{\mathrm{rs}}\kappa^*(\tau_{\mathrm{rs}}).
$

We now prove part \textup{(i)}. For notational simplicity, write
\[
    g(\tau,\theta):=\tau+\theta\kappa^*(\tau).
\]
By the setup of Theorem~\ref{thm:target_value_marginal_cost}, $\kappa^*$ is
finite and continuous on $[\underline\tau,\bar\tau]$. Hence
$g(\cdot,\theta)$ is continuous on $[\underline\tau,\bar\tau]$, and
$T(\theta)$ is nonempty and compact for every radius $\theta$ under
consideration.

Fix $\theta_{\mathrm{rs}}$ and define
$
    T_{\mathrm{rs}}
    :=
    T(\theta_{\mathrm{rs}}).
$
Let
$
    m_+
    :=
    \min_{\tau\in T_{\mathrm{rs}}}\kappa^*(\tau),
    \;
    m_-
    :=
    \max_{\tau\in T_{\mathrm{rs}}}\kappa^*(\tau).
$
Both extrema exist because $T_{\mathrm{rs}}$ is compact and $\kappa^*$ is
continuous.

We first compute the right derivative. For any $h>0$, choosing a target
$\tau^+\in T_{\mathrm{rs}}$ with
$\kappa^*(\tau^+)=m_+$ gives
\[
    V(\theta_{\mathrm{rs}}+h)
    \le
    g(\tau^+,\theta_{\mathrm{rs}}+h)
    =
    g(\tau^+,\theta_{\mathrm{rs}})
    +
    h\kappa^*(\tau^+)
    =
    V(\theta_{\mathrm{rs}})
    +
    h m_+.
\]
Therefore
\[
    \frac{
        V(\theta_{\mathrm{rs}}+h)
        -
        V(\theta_{\mathrm{rs}})
    }{h}
    \le
    m_+.
\]

For the reverse inequality, let
$\tau_h\in T(\theta_{\mathrm{rs}}+h)$. Since
$V(\theta_{\mathrm{rs}})\le g(\tau_h,\theta_{\mathrm{rs}})$, we have
\[
    V(\theta_{\mathrm{rs}}+h)-V(\theta_{\mathrm{rs}})
    =
    g(\tau_h,\theta_{\mathrm{rs}}+h)
    -
    V(\theta_{\mathrm{rs}})
    \ge
    g(\tau_h,\theta_{\mathrm{rs}}+h)
    -
    g(\tau_h,\theta_{\mathrm{rs}})
    =
    h\kappa^*(\tau_h).
\]
Thus
\[
    \frac{
        V(\theta_{\mathrm{rs}}+h)
        -
        V(\theta_{\mathrm{rs}})
    }{h}
    \ge
    \kappa^*(\tau_h).
\]
Take any sequence $h_j\downarrow0$. Because
$[\underline\tau,\bar\tau]$ is compact, the corresponding sequence
$\{\tau_{h_j}\}$ has a convergent subsequence; denote its limit by
$\bar\tau$. Along this subsequence,
\[
    g(\tau_{h_j},\theta_{\mathrm{rs}}+h_j)
    =
    V(\theta_{\mathrm{rs}}+h_j)
    \le
    V(\theta_{\mathrm{rs}})+h_j m_+,
\]
where the last inequality follows from the upper bound already proved.
Letting $j\to\infty$ and using continuity of $g$ gives
$
    g(\bar\tau,\theta_{\mathrm{rs}})
    \le
    V(\theta_{\mathrm{rs}}).
$
Since $V(\theta_{\mathrm{rs}})$ is the minimum value of
$g(\cdot,\theta_{\mathrm{rs}})$, equality must hold. Thus
$\bar\tau\in T_{\mathrm{rs}}$. By continuity of $\kappa^*$,
\[
    \lim_{j\to\infty}\kappa^*(\tau_{h_j})
    =
    \kappa^*(\bar\tau)
    \ge
    m_+.
\]
Hence the lower bound converges to at least $m_+$ along every vanishing
sequence of positive $h$. Combining this with the upper bound yields
\[
    \lim_{h\downarrow0}
    \frac{
        V(\theta_{\mathrm{rs}}+h)
        -
        V(\theta_{\mathrm{rs}})
    }{h}
    =
    m_+
    =
    \min_{\tau\in T(\theta_{\mathrm{rs}})}
    \kappa^*(\tau).
\]

The left derivative is analogous. For $h>0$, choose
$\tau^-\in T_{\mathrm{rs}}$ with $\kappa^*(\tau^-)=m_-$. Then
\[
    V(\theta_{\mathrm{rs}}-h)
    \le
    g(\tau^-,\theta_{\mathrm{rs}}-h)
    =
    g(\tau^-,\theta_{\mathrm{rs}})
    -
    h\kappa^*(\tau^-)
    =
    V(\theta_{\mathrm{rs}})
    -
    h m_-.
\]
Therefore
\[
    \frac{
        V(\theta_{\mathrm{rs}})
        -
        V(\theta_{\mathrm{rs}}-h)
    }{h}
    \ge
    m_-.
\]
Conversely, let $\tilde\tau_h\in T(\theta_{\mathrm{rs}}-h)$. Since
$V(\theta_{\mathrm{rs}})\le g(\tilde\tau_h,\theta_{\mathrm{rs}})$, we obtain
\[
    V(\theta_{\mathrm{rs}})
    \le
    g(\tilde\tau_h,\theta_{\mathrm{rs}})
    =
    g(\tilde\tau_h,\theta_{\mathrm{rs}}-h)
    +
    h\kappa^*(\tilde\tau_h)
    =
    V(\theta_{\mathrm{rs}}-h)
    +
    h\kappa^*(\tilde\tau_h).
\]
Thus
\[
    \frac{
        V(\theta_{\mathrm{rs}})
        -
        V(\theta_{\mathrm{rs}}-h)
    }{h}
    \le
    \kappa^*(\tilde\tau_h).
\]
Repeating the compactness argument above, every cluster point of
$\tilde\tau_h$ as $h\downarrow0$ belongs to $T_{\mathrm{rs}}$. Hence every
subsequential limit of $\kappa^*(\tilde\tau_h)$ is at most $m_-$. Therefore
\[
    \lim_{h\downarrow0}
    \frac{
        V(\theta_{\mathrm{rs}})
        -
        V(\theta_{\mathrm{rs}}-h)
    }{h}
    =
    m_-
    =
    \max_{\tau\in T(\theta_{\mathrm{rs}})}
    \kappa^*(\tau).
\]

If $V$ is differentiable at $\theta_{\mathrm{rs}}$, then the two one-sided
derivatives coincide, so $m_+=m_-$. Since the matched-target relation above
gives $\tau_{\mathrm{rs}}\in T_{\mathrm{rs}}$, this common value must equal
$\kappa^*(\tau_{\mathrm{rs}})$, and hence
$V'(\theta_{\mathrm{rs}})=\kappa^*(\tau_{\mathrm{rs}})$. In particular,
uniqueness of $\tau_{\mathrm{rs}}$ as the minimizer of
$g(\cdot,\theta_{\mathrm{rs}})$ implies this conclusion. This proves
part \textup{(i)}.

Finally, we prove part \textup{(ii)}. Define
$
    H(\alpha)
    :=
    V\bigl(F^{-1}(\alpha)\bigr).
$
At $\alpha_{\mathrm{rs}}=F(\theta_{\mathrm{rs}})$, we have
$F^{-1}(\alpha_{\mathrm{rs}})=\theta_{\mathrm{rs}}$ under the stated local
differentiability and positive-density assumptions. Since
$F'(\theta_{\mathrm{rs}})=f_S(\theta_{\mathrm{rs}})>0$, the inverse function
is differentiable at $\alpha_{\mathrm{rs}}$ and
\[
    \left.
    \frac{d}{d\alpha}F^{-1}(\alpha)
    \right|_{\alpha=\alpha_{\mathrm{rs}}}
    =
    \frac{1}{f_S(\theta_{\mathrm{rs}})}.
\]
Applying the chain rule and using part \textup{(i)} gives
\[
\begin{aligned}
    \left.
    \frac{d}{d\alpha}
    V\bigl(F^{-1}(\alpha)\bigr)
    \right|_{\alpha=\alpha_{\mathrm{rs}}}
    &=
    V'(\theta_{\mathrm{rs}})
    \left.
    \frac{d}{d\alpha}F^{-1}(\alpha)
    \right|_{\alpha=\alpha_{\mathrm{rs}}} \\
    &=
    \kappa^*(\tau_{\mathrm{rs}})
    \cdot
    \frac{1}{f_S(\theta_{\mathrm{rs}})} \\
    &=
    \frac{
        \kappa^*(\tau_{\mathrm{rs}})
    }{
        f_S(\theta_{\mathrm{rs}})
    },
\end{aligned}
\]
which completes the proof of part (ii).
\hfill\halmos
\end{proof}

\begin{proof}{Proof of Corollary~\ref{cor:mapping_compute}.}
    Let $\hat{F}_n$ and $F$ denote the empirical and true CDFs of the calibration scores, respectively. By the Dvoretzky-Kiefer-Wolfowitz (DKW) inequality, the uniform deviation between $\hat{F}_n$ and $F$ is bounded for any $\epsilon > 0$ as follows:
    \[
    \mathbb{P}\left(\sup_{s} |\hat{F}_n(s) - F(s)| > \epsilon\right) \le 2\exp(-2n\epsilon^2).
    \]
    To establish the bound for a specific confidence level $\delta \in (0,1)$, we equate the upper bound of the error probability to $\delta$:
    \[
    2\exp(-2n\epsilon^2) = \delta \implies \epsilon = \sqrt{\frac{\log(2/\delta)}{2n}}.
    \]
    Taking the complement of the probability event yields the high-probability uniform bound for the CDFs:
    \[
    \mathbb{P}\left(\sup_{s} |\hat{F}_n(s) - F(s)| \le \sqrt{\frac{\log(2/\delta)}{2n}}\right) \ge 1 - \delta.
    \]
    Because the empirical mapping $\hat{\alpha}_n(\tau)$ and the true mapping $\alpha(\tau)$ are functionally determined by $\hat{F}_n$ and $F$ respectively, the mapping deviation over the target range $\tau \in [\underline{\tau}, \bar{\tau}]$ is fundamentally bounded by the maximum deviation of the CDFs. That is, $\sup_{\tau\in[\underline{\tau},\bar{\tau}]} |\hat{\alpha}_n(\tau) - \alpha(\tau)| \le \sup_{s} |\hat{F}_n(s) - F(s)|$. Substituting this relationship into the inequality directly yields the final result
    \[
    \mathbb{P}\left(\sup_{\tau\in[\underline{\tau},\bar{\tau}]} |\hat{\alpha}_n(\tau) - \alpha(\tau)| \le \sqrt{\frac{\log(2/\delta)}{2n}}\right) \ge 1 - \delta.
    \]
    \hfill\halmos
\end{proof}

\begin{proof}{Proof of Proposition~\ref{prop:post_selection_validity}.}
Once the data and fitted quantities used before final calibration are fixed, the selected score $s_{\widehat{\ell}}$ is fixed. Hence the final calibration scores
$
    s_{\widehat{\ell}}(\bm z_i,\bm d_i),
    \qquad
    (\bm z_i,\bm d_i)\in\mathcal D_{\mathrm{cal}},
$
and the test score
$
    s_{\widehat{\ell}}(\bm Z_{\mathrm{test}},\bm D_{\mathrm{test}})
$
are exchangeable.

The split-conformal rank argument gives, for the remaining randomness,
\[
    \mathbb P\left\{
    s_{\widehat{\ell}}(\bm Z_{\mathrm{test}},\bm D_{\mathrm{test}})
    \le
    \widehat\eta_\alpha
    \right\}
    \ge
    \frac{\lceil (N_c+1) \alpha \rceil}{N_c +1}
    \ge
    \alpha.
\]
Averaging over the randomness in the pre-calibration quantities gives the same coverage bound unconditionally.
This event is precisely
$\{\bm D_{\mathrm{test}}\in
\widehat{\mathcal U}_\alpha(\bm Z_{\mathrm{test}})\}$.
\hfill\halmos
\end{proof}

\paragraph{Decision-transfer implications.}
The coverage event in Proposition~\ref{prop:post_selection_validity} implies that every robust constraint enforced over
$\widehat{\mathcal U}_\alpha(\bm Z_{\mathrm{test}})$
holds at $\bm D_{\mathrm{test}}$, and the robust objective upper-bounds the realized objective.

For any fixed target $\tau$, let
$(\widehat{\bm x}_\tau(\bm z),\widehat k_\tau(\bm z))$
be a feasible \ConfRS{} pair using $s_{\widehat\ell}$. Then
\[
    a(\widehat{\bm x}_\tau(\bm z),\bm d)-\tau
    \le
    \widehat k_\tau(\bm z)\,
    s_{\widehat{\ell}}(\bm z,\bm d),
    \qquad
    \forall \bm d\in\mathfrak D.
\]
On the same conformal coverage event,
$s_{\widehat{\ell}}(\bm Z_{\mathrm{test}},\bm D_{\mathrm{test}})
\le \widehat\eta_\alpha$, and therefore
\[
    a(\widehat{\bm x}_\tau(\bm Z_{\mathrm{test}}),\bm D_{\mathrm{test}})
    \le
    \tau+
    \widehat k_\tau(\bm Z_{\mathrm{test}})\widehat\eta_\alpha.
\]
Hence this \ConfRS{} bound also holds with probability at least $\alpha$.

\subsection{Localized Calibration with Conditional Coverage}
\label{sec:conditional_cro}

As discussed in Section~\ref{sec:cp_preliminary}, exact distribution-free pointwise conditional coverage is impossible for nontrivial procedures without additional structure \citep{foygel2021limits}. We therefore study kernel localization at a fixed interior context $\bm z_0\in\mathbb R^{d_z}$, where $d_z$ is the dimension of a meaningful metric representation, possibly a fixed pre-trained embedding, and the conditional score law varies smoothly near $\bm z_0$.

Throughout this subsection, we condition on the predictor and all other score components, fitted using data independent of both the calibration and test samples; all $\mathcal O_p$ statements are conditional on these components.

Let $\mathcal D_{\mathrm{cal}}=\{(\bm z_i,\bm d_i)\}_{i=1}^T$ and $S_i=s_{\hat f}(\bm z_i,\bm d_i)$. Define $K_h(\bm z,\bm z_0)=K((\bm z-\bm z_0)/h_T)$ and $W_T(\bm z_0)=\sum_{j=1}^T K_h(\bm z_j,\bm z_0)$. When $W_T(\bm z_0)>0$, set $w_i(\bm z_0)=K_h(\bm z_i,\bm z_0)/W_T(\bm z_0)$; otherwise, set $w_i(\bm z_0)=1/T$.

The localized CDF, quantile, and uncertainty set are, respectively, $\hat F_{S\mid\bm z_0}(t)=\sum_{i=1}^T w_i(\bm z_0)\indc\{S_i\le t\}$, $\hat\eta_\alpha(\bm z_0)=\inf\{t\ge0:\hat F_{S\mid\bm z_0}(t)\ge\alpha\}$, and $\mathcal U_\alpha(\bm z_0)=\{\bm d\in\mathfrak D:s_{\hat f}(\bm z_0,\bm d)\le\hat\eta_\alpha(\bm z_0)\}$. For $S=s_{\hat f}(\bm Z,\bm D)$, let $F_{S\mid\bm z}(t)$ be a regular conditional-CDF version of $\mathbb P(S\le t\mid\bm Z=\bm z)$ satisfying the smoothness condition below; conditioning on the fitted score components is suppressed in the notation. Finally, let $\eta_\alpha^*(\bm z_0)=\inf\{t\ge0:F_{S\mid\bm z_0}(t)\ge\alpha\}$.

Importantly, localization changes only the calibration step and leaves the pre-trained predictor and score unchanged, preserving the modularity of the main framework.

\begin{lemma}
\label{lem:weighted_threshold_process}
Let $S_1,\ldots,S_T$ be independent real-valued random variables with CDFs $F_i$, and let $a_1,\ldots,a_T$ be deterministic weights. Then there is a universal constant $C<\infty$ such that
\[
    \mathbb E\left[
    \sup_{t\in\mathbb R}\left|\sum_{i=1}^T a_i\{\indc\{S_i\le t\}-F_i(t)\}\right|
    \right]
    \le C\sqrt{\log(T+1)\sum_{i=1}^T a_i^2}.
\]
For a sigma-field $\mathcal G$, the same inequality holds almost surely with conditional expectation given $\mathcal G$ on the left if the $a_i$ are $\mathcal G$-measurable and the $S_i$ are conditionally independent given $\mathcal G$, with $F_i(t)=\mathbb P(S_i\le t\mid\mathcal G)$.
\end{lemma}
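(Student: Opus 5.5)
The plan is symmetrization followed by a finite maximal inequality, with no chaining. The supremum over $t\in\mathbb R$ is an uncountable index set, but after symmetrization the indicator vectors take only $\mathcal O(T)$ distinct patterns. This finiteness is what produces the $\sqrt{\log(T+1)}$ factor.

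\emph{Step 1 (measurability and reduction).} Write $G(t):=\sum_{i=1}^T a_i\{\indc\{S_i\le t\}-F_i(t)\}$. Each summand is right-continuous in $t$, so $\sup_t|G(t)|=\sup_{t\in\mathbb Q}|G(t)|$ almost surely. Hence the supremum is measurable, and monotone convergence lets us work with countable (indeed finite) index sets and pass to the limit.

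\emph{Step 2 (symmetrization by a ghost sample).} Introduce an independent copy $S_1',\ldots,S_T'$ with $S_i'\sim F_i$, independent of everything else. Since $F_i(t)=\mathbb E'[\indc\{S_i'\le t\}]$, Jensen's inequality applied to the convex functional $\sup_t|\cdot|$ gives
\[
\mathbb E\sup_t|G(t)|\le \mathbb E\sup_t\Bigl|\sum_{i=1}^T a_i\bigl(\indc\{S_i\le t\}-\indc\{S_i'\le t\}\bigr)\Bigr|.
\]
Each pair $(S_i,S_i')$ is exchangeable and the pairs are independent. We may therefore insert i.i.d.\ Rademacher signs $\varepsilon_i$ without changing the law:
\[
\mathbb E\sup_t|G(t)|\le \mathbb E\sup_t\Bigl|\sum_{i=1}^T \varepsilon_i a_i\,\delta_i(t)\Bigr|,\qquad \delta_i(t)\in\{-1,0,1\},
\]
where $\delta_i(t):=\indc\{S_i\le t\}-\indc\{S_i'\le t\}$. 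This avoids the factor-$2$ triangle split; a constant is harmless anyway.

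\emph{Step 3 (finite maximal inequality).} Condition on $(S,S')$. The vector $t\mapsto(\delta_i(t))_{i\le T}$ is piecewise constant, with jumps only at the $2T$ points $\{S_i,S_i'\}$. It therefore takes at most $2T+1$ distinct values $\bm\delta^{(1)},\ldots,\bm\delta^{(m)}$. For each such vector, both $\pm\sum_i\varepsilon_i a_i\delta_i^{(r)}$ are sub-Gaussian with variance proxy $\sum_i a_i^2(\delta_i^{(r)})^2\le\sum_i a_i^2$. The standard maximal inequality for $2m$ sub-Gaussian variables yields
\[
\mathbb E_\varepsilon\max_r\Bigl|\sum_i\varepsilon_i a_i\delta_i^{(r)}\Bigr|\le\sqrt{2\log(2(2T+1))\sum_i a_i^2}.
\]
Integrating over $(S,S')$ and bounding $\log(4T+2)\le C'\log(T+1)$ for $T\ge1$ gives the claim with a universal $C$.

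\emph{Conditional version.} Run the same argument under $\mathbb P(\cdot\mid\mathcal G)$. Construct the ghost sample to be conditionally independent with conditional laws $F_i(\cdot)=\mathbb P(S_i\le\cdot\mid\mathcal G)$; this is possible via regular conditional distributions on $\mathbb R$. Because the $a_i$ are $\mathcal G$-measurable, they act as constants throughout, so the same bound holds almost surely.

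The main obstacle is Step 2 in the heterogeneous, conditional setting. One must justify the Jensen/ghost-sample step with non-identical $F_i$ (it only needs $\mathbb E'\indc\{S_i'\le t\}=F_i(t)$ coordinatewise). One must also justify the exchangeability of each $(S_i,S_i')$ conditionally on $\mathcal G$. I would handle both by an explicit product-space construction before inserting the signs.
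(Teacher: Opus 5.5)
Your proof is correct and follows the same route as the paper: symmetrization followed by the finite-class sub-Gaussian maximal inequality, using the fact that thresholds induce only $\mathcal O(T)$ distinct coefficient vectors. The only difference is cosmetic. You keep the ghost-sample differences $\delta_i(t)$, which gives at most $2T+1$ patterns and no factor $2$, whereas the paper uses the standard factor-$2$ Rademacher symmetrization with the $T+1$ binary vectors $(\indc\{S_i\le t\})_i$; your measurability and conditional-construction remarks supply details the paper leaves implicit.
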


\begin{proof}{Proof of Lemma~\ref{lem:weighted_threshold_process}.}
By symmetrization, the left-hand side is at most $2\mathbb E\sup_t|\sum_{i=1}^T a_i\varepsilon_i\indc\{S_i\le t\}|$, where the $\varepsilon_i$ are independent Rademacher variables.

Conditional on $S_1,\ldots,S_T$, thresholds induce at most $T+1$ binary vectors, so the finite-class sub-Gaussian maximal inequality gives an upper bound of $2\sqrt{2\log(2(T+1))\sum_{i=1}^T a_i^2}$; see, e.g., \citet{wainwright2019high}. Absorbing constants gives the claim, and the same argument applies conditionally. \hfill\halmos
\end{proof}

\begin{lemma}
\label{lem:conditional_coverage}
Fix an interior context $\bm z_0$. Suppose Assumption~\ref{asp:iid_stability} holds and $F_{S\mid\bm z}(t)$ is H\"older continuous in $\bm z$ uniformly over $t$, i.e., for some $L<\infty$ and $s>0$ and all $\bm z,\bm z'$ in a neighborhood of $\bm z_0$,
$
    \sup_{t\ge 0}\left|F_{S\mid \bm z}(t)-F_{S\mid \bm z'}(t)\right|
    \le L\|\bm z-\bm z'\|_2^s.
$
Assume also that the density of $\bm Z$ is bounded above and away from zero near $\bm z_0$. Let $K:\mathbb R^{d_z}\to[0,\infty)$ be bounded and compactly supported, with $K(\bm u)\ge c_K>0$ whenever $\|\bm u\|_2\le r_K$ for some $r_K>0$. If $h_T\to0$ and $T h_T^{d_z}/\log T\to\infty$, then
\[
    \sup_{t\ge0}\left|\hat F_{S\mid \bm z_0}(t)-F_{S\mid \bm z_0}(t)\right|
    =\mathcal O_p\!\left(h_T^s+\sqrt{\frac{\log T}{T h_T^{d_z}}}\right).
\]
\end{lemma}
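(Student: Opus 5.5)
Throughout, write $\hat F:=\hat F_{S\mid\bm z_0}$ and $F_0:=F_{S\mid\bm z_0}$. The plan is the standard bias--variance decomposition of the localized CDF, carried out conditionally on the calibration contexts. Let $\mathcal G:=\sigma(\bm Z_1,\ldots,\bm Z_T)$. Under Assumption~\ref{asp:iid_stability}, the scores $S_i$ are conditionally independent given $\mathcal G$, with conditional CDFs $F_{S\mid\bm Z_i}$. The weights $w_i(\bm z_0)$ are $\mathcal G$-measurable. I would work on the event $\{W_T(\bm z_0)>0\}$ and show first that its probability tends to one. On this event,
\[
    \hat F(t)-F_0(t)
    =\sum_{i=1}^T w_i(\bm z_0)\bigl[\indc\{S_i\le t\}-F_{S\mid\bm Z_i}(t)\bigr]
    +\sum_{i=1}^T w_i(\bm z_0)\bigl[F_{S\mid\bm Z_i}(t)-F_0(t)\bigr],
\]
using $\sum_i w_i(\bm z_0)=1$. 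I would bound the two terms uniformly in $t$ separately.

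\textbf{Bias term.} Because $K$ is compactly supported, say in a ball of radius $R_K$, any $i$ with $w_i(\bm z_0)>0$ satisfies $\|\bm Z_i-\bm z_0\|_2\le R_K h_T$. Since $h_T\to0$, this radius eventually lies inside the Hölder neighborhood. Hölder continuity, uniform in $t$, then bounds each bracket by $L(R_Kh_T)^s$. Because the weights are convex, the whole bias term is at most $LR_K^sh_T^s$ for all $t$ simultaneously.

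\textbf{Stochastic term.} I would apply the conditional version of Lemma~\ref{lem:weighted_threshold_process} with $a_i=w_i(\bm z_0)$. This gives a conditional expected supremum of at most $C\sqrt{\log(T+1)\sum_i w_i^2}$. Since $K$ is bounded by $\|K\|_\infty$,
\[
    \sum_i w_i^2\le \|K\|_\infty\max_i w_i\le \|K\|_\infty^2/W_T(\bm z_0).
\]
It therefore remains to show $W_T(\bm z_0)\gtrsim Th_T^{d_z}$ with probability tending to one.

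\textbf{Lower bound on $W_T$ (main obstacle).} This is the one genuinely probabilistic step. Using the assumption $K\ge c_K$ on the ball of radius $r_K$,
\[
    W_T(\bm z_0)\ge c_K N_T,
    \qquad
    N_T:=\#\{i:\|\bm Z_i-\bm z_0\|_2\le r_Kh_T\}.
\]
The count $N_T$ is Binomial$(T,p_T)$, and $p_T\asymp h_T^{d_z}$ because the density of $\bm Z$ is bounded above and away from zero near the interior point $\bm z_0$. A multiplicative Chernoff bound then gives $N_T\ge Tp_T/2$ with probability at least $1-\exp(-Tp_T/8)\to1$, since $Th_T^{d_z}\to\infty$. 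In particular $W_T(\bm z_0)>0$ on this event, so the fallback weights $1/T$ are used only with vanishing probability. On this event the stochastic term has conditional expectation $\mathcal O(\sqrt{\log T/(Th_T^{d_z})})$. Markov's inequality applied conditionally, then integrated, converts this into an $\mathcal O_p$ statement.

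Combining the bias and stochastic bounds yields the stated rate $h_T^s+\sqrt{\log T/(Th_T^{d_z})}$. The condition $Th_T^{d_z}/\log T\to\infty$ is what makes the stochastic part vanish.
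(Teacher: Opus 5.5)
Your proposal is correct and follows the paper's proof: the same conditional bias--variance decomposition given the calibration contexts, the Hölder bound $LR^s h_T^s$ on the bias via compact support, and a conditional application of Lemma~\ref{lem:weighted_threshold_process} followed by Markov's inequality for the stochastic term. The only difference is how you control the effective sample size. You lower-bound $W_T$ by $c_K$ times a binomial count using Chernoff, and you use $\sum_i K_i^2\le\|K\|_\infty W_T$. The paper instead uses Bernstein to get $W_T\asymp_p Th_T^{d_z}$ and Markov to get $\sum_i K_i^2=\mathcal O_p(Th_T^{d_z})$. Your variant is slightly more elementary and needs only the density lower bound. One small fix: the chain should read $\sum_i w_i^2\le\max_i w_i\le\|K\|_\infty/W_T$, since your first inequality fails when $\|K\|_\infty<1$; this does not affect the rate.
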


\begin{proof}{Proof of Lemma~\ref{lem:conditional_coverage}.}
Write $F_0(t)=F_{S\mid\bm z_0}(t)$, $F_i(t)=F_{S\mid\bm Z_i}(t)$, $K_i=K_h(\bm Z_i,\bm z_0)$, and $W_T=\sum_{i=1}^T K_i$. The local density and kernel assumptions give $\mathbb E[K_i]\asymp h_T^{d_z}$ and $\mathbb E[K_i^2]=\mathcal O(h_T^{d_z})$. Bernstein's and Markov's inequalities therefore yield $W_T\asymp_p T h_T^{d_z}$ and $\sum_{i=1}^T K_i^2=\mathcal O_p(T h_T^{d_z})$. In particular, $\mathbb P(W_T=0)\to0$, and on $\{W_T>0\}$,
\begin{equation}
\label{eq:effective_weights_revised}
    \sum_{i=1}^T w_i^2
    =\frac{\sum_{i=1}^T K_i^2}{W_T^2}
    =\mathcal O_p\!\left(\frac{1}{T h_T^{d_z}}\right).
\end{equation}

On this event, decompose the CDF error as
\[
\sup_{t\ge0}|\hat F_{S\mid \bm z_0}(t)-F_0(t)|
\le
\underbrace{\sup_{t\ge0}\left|\sum_{i=1}^T w_i\{\indc\{S_i\le t\}-F_i(t)\}\right|}_{R_T}
+
\underbrace{\sup_{t\ge0}\left|\sum_{i=1}^T w_i\{F_i(t)-F_0(t)\}\right|}_{B_T}.
\]

If $\operatorname{supp}(K)\subseteq\{\bm u:\|\bm u\|_2\le R\}$, then $K_i>0$ implies $\|\bm Z_i-\bm z_0\|_2\le Rh_T$. Nonnegativity of the weights and the H\"older condition thus give $B_T\le LR^s h_T^s$. Conditional on $\bm Z_1,\ldots,\bm Z_T$, the weights are fixed and the scores are independent with CDFs $F_1,\ldots,F_T$. Lemma~\ref{lem:weighted_threshold_process}, \eqref{eq:effective_weights_revised}, and conditional Markov's inequality give $R_T=\mathcal O_p(\sqrt{\log T/(T h_T^{d_z})})$.

Combining these bounds proves the result on $\{W_T>0\}$. Because the fallback empirical CDF is bounded and $\mathbb P(W_T=0)\to0$, it does not affect the claimed rate. \hfill\halmos
\end{proof}

\begin{proposition}
\label{prop:cond_cov_approx}
Under the conditions of Lemma~\ref{lem:conditional_coverage}, suppose $\eta_\alpha^*(\bm z_0)\in(0,\infty)$ is an interior regular quantile and, for some $0<r<\eta_\alpha^*(\bm z_0)$, $F_{S\mid\bm z_0}$ is absolutely continuous on $(\eta_\alpha^*(\bm z_0)-r,\eta_\alpha^*(\bm z_0)+r)$ with density bounded below by a positive constant. For an independent test point, define the context-and-calibration-conditional coverage
\[
    p_T(\bm z_0)=\mathbb P\!\left(\bm D_{\mathrm{test}}\in\mathcal U_\alpha(\bm z_0)
    \mid \bm Z_{\mathrm{test}}=\bm z_0,\mathcal D_{\mathrm{cal}},s_{\hat f}\right).
\]
Then, with $\epsilon_T=h_T^s+\sqrt{\log T/(T h_T^{d_z})}$, $|\hat\eta_\alpha(\bm z_0)-\eta_\alpha^*(\bm z_0)|=\mathcal O_p(\epsilon_T)$ and $|p_T(\bm z_0)-\alpha|=\mathcal O_p(\epsilon_T)$.
\end{proposition}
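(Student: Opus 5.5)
The proof turns the uniform CDF error bound of Lemma~\ref{lem:conditional_coverage} into two statements: a quantile error bound, via the local density floor, and a coverage error bound, via a Lipschitz bound on $F_0:=F_{S\mid\bm z_0}$ near $\eta^*:=\eta_\alpha^*(\bm z_0)$. Write $\hat\eta:=\hat\eta_\alpha(\bm z_0)$, and let $\Delta_T:=\sup_{t\ge0}|\hat F_{S\mid\bm z_0}(t)-F_0(t)|$. By Lemma~\ref{lem:conditional_coverage}, $\Delta_T=\mathcal O_p(\epsilon_T)$, and $\epsilon_T\to0$ under the bandwidth conditions.

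\textbf{Quantile step.} Let $c>0$ be the density lower bound on $(\eta^*-r,\eta^*+r)$. Since $F_0$ is continuous there and $\eta^*$ is its $\alpha$-quantile, $F_0(\eta^*)=\alpha$. For $0<u<r$ we then have
\[
F_0(\eta^*+u)\ge\alpha+cu,\qquad F_0(\eta^*-u)\le\alpha-cu.
\]
Fix $u\in(\Delta_T/c,\,r)$; this is possible on the event $\{\Delta_T<cr\}$, whose probability tends to one.

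\emph{Upper bound.} We get $\hat F_{S\mid\bm z_0}(\eta^*+u)\ge\alpha+cu-\Delta_T>\alpha$. By the definition of $\hat\eta$ as an infimum, $\hat\eta\le\eta^*+u$.

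\emph{Lower bound.} For any $t\le\eta^*-u$, monotonicity gives $\hat F_{S\mid\bm z_0}(t)\le F_0(\eta^*-u)+\Delta_T\le\alpha-cu+\Delta_T<\alpha$. Hence $\hat\eta\ge\eta^*-u$.

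Letting $u\downarrow\Delta_T/c$ gives $|\hat\eta-\eta^*|\le\Delta_T/c$ on that event, so $|\hat\eta-\eta^*|=\mathcal O_p(\epsilon_T)$.

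\textbf{Coverage step.} The test point is independent of $\mathcal D_{\mathrm{cal}}$, and $\hat\eta$ is $\mathcal D_{\mathrm{cal}}$-measurable. Conditioning on $\bm Z_{\mathrm{test}}=\bm z_0$ therefore gives
\[
p_T(\bm z_0)=\mathbb P\bigl(s_{\hat f}(\bm z_0,\bm D_{\mathrm{test}})\le\hat\eta\mid\bm Z_{\mathrm{test}}=\bm z_0,\mathcal D_{\mathrm{cal}}\bigr)=F_0(\hat\eta).
\]
This uses the regular conditional CDF version fixed in the setup. Since $F_0(\eta^*)=\alpha$, we need $|F_0(\hat\eta)-F_0(\eta^*)|\lesssim|\hat\eta-\eta^*|$ on the event $\{|\hat\eta-\eta^*|<r\}$, whose probability tends to one.

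\textbf{Main obstacle.} The stated hypotheses give only a density \emph{lower} bound near $\eta^*$, while the coverage step needs a local Lipschitz (upper density) bound on $F_0$. My first choice is to add a local density upper bound, which is the regularity implicit in calling $\eta^*$ a regular quantile, and then conclude $|p_T(\bm z_0)-\alpha|\le C'\Delta_T/c=\mathcal O_p(\epsilon_T)$.

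The fallback avoids any upper density bound. It uses
\[
|F_0(\hat\eta)-\alpha|\le|F_0(\hat\eta)-\hat F_{S\mid\bm z_0}(\hat\eta)|+|\hat F_{S\mid\bm z_0}(\hat\eta)-\alpha|\le\Delta_T+|\hat F_{S\mid\bm z_0}(\hat\eta)-\alpha|.
\]
By right-continuity, $\hat F_{S\mid\bm z_0}(\hat\eta)\ge\alpha$. The overshoot $\hat F_{S\mid\bm z_0}(\hat\eta)-\alpha$ is at most the jump of $\hat F_{S\mid\bm z_0}$ at $\hat\eta$, i.e.\ the total weight on ties there. Each single weight is $\mathcal O_p(1/(Th_T^{d_z}))$, which is $o(\epsilon_T)$. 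Ties at $\hat\eta$ do not occur in the continuity region, since $F_0$ is continuous there and the $F_{\bm Z_i}$ are close to $F_0$ by the H\"older condition. Either route yields $|p_T(\bm z_0)-\alpha|=\mathcal O_p(\epsilon_T)$.
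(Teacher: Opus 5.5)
Your quantile step is correct and is essentially the paper's argument: the paper fixes $A>1/\underline f$ and shows $\hat F_{S\mid\bm z_0}(\eta^*-A\Delta_T)<\alpha<\hat F_{S\mid\bm z_0}(\eta^*+A\Delta_T)$, which is your sandwich with $u\downarrow\Delta_T/c$. The gap is in the coverage step, where neither of your routes proves the statement as given.

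\textbf{The primary route} adds a local density \emph{upper} bound on $F_0$. This is not among the stated hypotheses, so it proves a weaker proposition.

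\textbf{The fallback} rests on the claim that no ties occur at $\hat\eta$, and that claim does not follow from the assumptions. The H\"older condition says $\sup_t|F_{S\mid\bm Z_i}(t)-F_0(t)|\le L\|\bm Z_i-\bm z_0\|_2^s$. Being uniformly close to a continuous $F_0$ still lets each $F_{S\mid\bm Z_i}$ carry atoms of mass up to $2L\|\bm Z_i-\bm z_0\|_2^s$. Several calibration scores can then coincide at $\hat\eta$ with positive probability, so the jump of $\hat F_{S\mid\bm z_0}$ there need not be a single weight.

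\textbf{The missing observation} is that the obstacle you identified is not real. Coverage is controlled directly by the uniform CDF error; only continuity of $F_0$ at $\hat\eta$ is needed, not a Lipschitz bound. Write $\hat F:=\hat F_{S\mid\bm z_0}$.
\begin{itemize}
\item Lower side: you already have $F_0(\hat\eta)\ge\hat F(\hat\eta)-\Delta_T\ge\alpha-\Delta_T$.
\item Upper side: approach $\hat\eta$ from the left instead of controlling the overshoot at $\hat\eta$. By the definition of $\hat\eta$, every $t\in[0,\hat\eta)$ has $\hat F(t)<\alpha$, so $F_0(t)\le\hat F(t)+\Delta_T<\alpha+\Delta_T$.
\item On the event $\{|\hat\eta-\eta^*|<r\}$, whose probability tends to one by your quantile step, $\hat\eta>0$ and $F_0$ is continuous at $\hat\eta$. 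Letting $t\uparrow\hat\eta$ gives $F_0(\hat\eta)\le\alpha+\Delta_T$.
\end{itemize}
Hence $|p_T(\bm z_0)-\alpha|\le\Delta_T=\mathcal O_p(\epsilon_T)$, which is exactly the paper's proof.

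Alternatively, your fallback can be repaired without any tie analysis. At any continuity point $t>0$ of $F_0$, the jump of $\hat F$ satisfies $\hat F(t)-\hat F(t^-)\le[F_0(t)+\Delta_T]-[F_0(t^-)-\Delta_T]=2\Delta_T$. So the overshoot $\hat F(\hat\eta)-\alpha$ is at most $2\Delta_T$, and $|p_T(\bm z_0)-\alpha|\le3\Delta_T$.
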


\begin{proof}{Proof of Proposition~\ref{prop:cond_cov_approx}.}
Let $F_0=F_{S\mid\bm z_0}$, $\hat F=\hat F_{S\mid\bm z_0}$, $\eta^*=\eta_\alpha^*(\bm z_0)$, $\hat\eta=\hat\eta_\alpha(\bm z_0)$, and $\delta_T=\sup_{t\ge0}|\hat F(t)-F_0(t)|$. Lemma~\ref{lem:conditional_coverage} gives $\delta_T=\mathcal O_p(\epsilon_T)$. By assumption, there are $r>0$ and $\underline f>0$ such that $F_0$ is absolutely continuous with density at least $\underline f$ on $(\eta^*-r,\eta^*+r)$; hence $F_0(\eta^*)=\alpha$.

Choose $A>1/\underline f$. On $\{0<A\delta_T<r\}$, the density lower bound and the definition of $\delta_T$ imply
\[
    \hat F(\eta^*-A\delta_T)<\alpha
    <\hat F(\eta^*+A\delta_T).
\]
Thus $|\hat\eta-\eta^*|\le A\delta_T=\mathcal O_p(\epsilon_T)$; the case $\delta_T=0$ is immediate.

Independence of the test point gives $p_T(\bm z_0)=F_0(\hat\eta)$. With probability tending to one, $\hat\eta$ lies in the neighborhood above, where $F_0$ is continuous. Since the weighted empirical CDF is right-continuous, $\hat F(\hat\eta)\ge\alpha$, so $F_0(\hat\eta)\ge\alpha-\delta_T$. Moreover, $\hat F(t)<\alpha$ for every $t<\hat\eta$; letting $t\uparrow\hat\eta$ gives $F_0(\hat\eta)\le\alpha+\delta_T$. Therefore $|p_T(\bm z_0)-\alpha|\le\delta_T=\mathcal O_p(\epsilon_T)$. \hfill\halmos
\end{proof}

Choosing $h_T\asymp(\log T/T)^{1/(2s+d_z)}$ yields the rate $\mathcal O_p((\log T/T)^{s/(2s+d_z)})$. The two terms in Proposition~\ref{prop:cond_cov_approx} make the price of localization explicit: $h_T^s$ is the bias from averaging across nearby contexts, whereas $\sqrt{\log T/(T h_T^{d_z})}$ is the stochastic error governed by the effective number of nearby calibration observations. A smaller bandwidth therefore adapts more closely to local score behavior but becomes less stable when few observations receive appreciable weight.

\section{Supplementary Discussion}
\label{app:supplementary_discussion}

Appendix~\ref{app:supplementary_discussion} provides a structured comparison with closely related prediction-driven robustness frameworks and discusses opportunities and challenges in extending conformal calibration to the distributional setting.

\subsection{Comparison with Closely Related Prediction-Driven Robustness Frameworks}
\label{subsec:related_work}

In addition to the literature review in Section~\ref{sec:literature} of the main text, we compare in Table~\ref{tab:closest_positioning} several closely related frameworks in terms of predictor choices, uncertainty modeling, and decision guarantees to provide a more straightforward comparison and better position this paper's contributions.

\begin{table}[htbp]
\centering
\caption{Positioning relative to closely related robustness frameworks from a prediction-driven perspective.}
\label{tab:closest_positioning}
\scriptsize
\setlength{\tabcolsep}{1.3pt}
\renewcommand{\arraystretch}{1.0}
\begin{tabular}{@{}>{\raggedright\arraybackslash}m{0.25\textwidth}>{\centering\arraybackslash}m{0.105\textwidth}>{\centering\arraybackslash}m{0.105\textwidth}>{\centering\arraybackslash}m{0.105\textwidth}>{\centering\arraybackslash}m{0.105\textwidth}>{\raggedright\arraybackslash}m{0.14\textwidth}>{\raggedright\arraybackslash}m{0.155\textwidth}@{}}
\toprule
Paper & \makecell{Contextual\\predictor} & \makecell{Parameter-\\uncertainty\\robust\\optimization} & \makecell{Parameter-\\uncertainty\\robust\\satisficing} & \makecell{Decision\\equivalence} & \makecell[l]{Robustness\\object} & \makecell[l]{Robustness \\ interpretation} \\
\midrule
\parbox[c]{\linewidth}{\raggedright Conformal robust optimization and risk-sensitive linear programs\\\citep{sun2023predict,patel2024conformal,chenreddy2024end,cai2025out}} & Arbitrary & Yes & No & No & Calibrated uncertainty set & Score not used as a unifying decision primitive \\
\addlinespace[5pt]

\parbox[c]{\linewidth}{\raggedright Joint estimation and robustness optimization \\\citep{zhu2022joint}} & Structured$^\dagger$ & Yes & No$^\ddagger$ & No & Estimation error in input parameters & Tailored to specific estimation procedures \\
\addlinespace[8pt]

\parbox[c]{\linewidth}{\raggedright Robust satisficing and equivalence\\\citep{long2023robust,wang2025equivalence}} & Not applied & No$^*$ & No$^*$ & Yes$^*$ & Distributional ambiguity & Not prediction-centered \\

\addlinespace[8pt]
\parbox[c]{\linewidth}{\raggedright
Prediction-based robust satisficing and fortification\\
\citep{sim2024analytics}}
&
Structured$^\dagger$
&
No
&
Partial
&
No$^*$
&
{Residual distribution and prediction coefficients}
&
{Prediction-centered distributionally robust satisficing (DRS) with estimation fortification}
\\
\addlinespace[8pt]

\textbf{This paper} & Arbitrary & Yes & Yes & Yes & Calibrated prediction-error score and induced fragility & Decision-level equivalence; interplay of score radius, reliability, and target \\
\bottomrule
\end{tabular}
\par\smallskip
\parbox{\textwidth}{\scriptsize \textit{Note.} $^*$: distributional ambiguity differs from parameter uncertainty, see Sections~\ref{subsubsec:crs_vs_drs} and~\ref{sec:ConfRO_ConfRS_equivalence}.
$^\dagger$: structured estimation method refers to, for example, regression, least absolute shrinkage and selection operator, and maximum likelihood estimation.
$^\ddagger$:  achieving a target is considered though the formulation is relatively more akin to robust optimization.}
\end{table}

\subsection{Distributional Extensions: Opportunities and Challenges}
\label{app:cdro_note}

While our framework focuses on calibrating uncertainty for realized parameter values relative to a point prediction, a natural question is whether this approach can be extended to conformal distributionally robust optimization (DRO). In such a setting, the primitive object of interest shifts from a single realization $\bm D$ to the entire conditional distribution. However, this may introduce significant theoretical and practical hurdles.

Suppose $P_{\bm z}:=\mathcal L(\bm D\mid \bm Z=\bm z)$ is the true conditional distribution, and $\widehat P_{\bm z}\in\mathcal P(\mathfrak D)$ is a distribution-valued predictor. Conceptually, one might want to construct a calibrated ambiguity set $\mathcal P_\alpha(\bm z)$ around $\widehat P_{\bm z}$ to solve
\[
    \min_{\bm x\in\mathcal X} \sup_{Q\in\mathcal P_\alpha(\bm z)} \mathbb E_Q[a(\bm x,\bm D)].
\]
This direction is conceptually attractive because it would combine black-box conditional distribution estimation with the ambiguity-set machinery of DRO.

If we had access to an oracle that could evaluate a distributional discrepancy score $\mathfrak S(P_{\bm Z_i},\widehat P_{\bm Z_i})$ between the true and estimated distributions, conformalizing this process would be straightforward. By computing calibration scores $S_i^{\mathrm{dist}} := \mathfrak S(P_{\bm Z_i},\widehat P_{\bm Z_i})$ for $i=1,\ldots,n$, the standard split-conformal quantile $\eta_\alpha^{\mathrm{dist}}$ would yield the ambiguity set
\[
    \mathcal P_\alpha(\bm z):=
    \left\{
        Q\in\mathcal P(\mathfrak D):
        \mathfrak S(Q,\widehat P_{\bm z})\le \eta_\alpha^{\mathrm{dist}}
    \right\}.
\]
Assuming exchangeability, this provides a rigorous distributional coverage guarantee:
\[
    \mathbb P\!\left\{
        P_{\bm Z_{\mathrm{test}}}\in\mathcal P_\alpha(\bm Z_{\mathrm{test}})
    \right\} \ge \alpha.
\]

However, a main issue is that the usual datasets available for estimation do not give us $P_{\bm Z_i}$. We only see one sample $\bm D_i\sim P_{\bm Z_i}$ per context, meaning the oracle score cannot be evaluated. One workaround is to use a realized score $r(\bm z,\bm d;\widehat P)$, like the negative log-likelihood $r(\bm z,\bm d;\widehat P)=-\log \widehat p_{\bm z}(\bm d)$. But calibrating a realized score fundamentally changes the type of guarantee we get. The resulting prediction region
$C_\alpha(\bm z):=\{\bm d\in\mathfrak D:
r(\bm z,\bm d;\widehat P)\le \eta_\alpha\}$
satisfies $\mathbb P\{\bm D_{\mathrm{test}}\in C_\alpha(\bm Z_{\mathrm{test}})\}\ge\alpha$. This only ensures we cover the realized parameter, not the true distribution $P_{\bm z}$.

We could still try to build DRO-style ambiguity sets from these realized regions, such as probability-mass bounds $\mathcal Q_{\alpha,\gamma}(\bm z):= \{Q\in\mathcal P(\mathfrak D):Q(C_\alpha(\bm z))\ge \gamma\}$ or score-budget sets $\mathcal Q_\eta^r(\bm z):=\{Q\in\mathcal P(\mathfrak D):\mathbb E_Q[r(\bm z,\bm D;\widehat P)]\le \eta\}$.
But unless we have repeated observations per context or make strong structural assumptions, we have no distribution-free guarantee that $P_{\bm z}$ actually belongs to these sets.

There is also a modeling issue. If the ambiguity set is only required to place all of its mass inside the conformal region, e.g., $\mathcal Q_{\alpha,1}(\bm z):= \{Q\in\mathcal P(\mathfrak D):Q(C_\alpha(\bm z))=1\}$, then the formulation becomes much simpler. Without additional moment or shape constraints, the worst-case expected cost reduces to
\[
\sup_{Q\in\mathcal Q_{\alpha,1}(\bm z)}\mathbb E_Q[a(\bm x,\bm D)]=
\sup_{\bm d\in C_\alpha(\bm z)} a(\bm x,\bm d).
\]
In this case, the model is essentially robust optimization over the conformal set $C_\alpha(\bm z)$, rather than a genuinely distributional formulation.

These observations suggest that conformal DRO is an attractive but more demanding extension of the current framework. Achieving direct distributional coverage requires richer information, such as repeated observations per context or structural assumptions that make the conditional law $P_{\bm z}$ estimable. Realized-score methods are more practical, but their distribution-free guarantees apply to realized parameter values rather than the true conditional distribution itself. This work therefore focuses on the parameter-value setting, where one observation per context is sufficient for valid split-conformal calibration.

\section{Details of Numerical Experiments}
\label{app:numerical_details}

Appendix~\ref{app:numerical_details} provides supplementary details for the numerical studies in Section~\ref{sec:experiments}. Appendix~\ref{app:knapsack_details} covers the robust fractional knapsack experiment, including data generation, predictors, benchmark calibration, additional performance comparisons under nominal and shifted evaluations, parameter-mapping computation, and reformulations. We document the online-grocery case study in Appendix~\ref{app:case_study} and Appendix~\ref{app:case_study_rs}, including formulations, the exchangeability justification, forecasting architectures and hyperparameter settings, additional out-of-sample cost comparisons, and the \ConfRS{} experiments.
Finally, Appendix~\ref{app:location} presents an additional robust facility-location study that examines coverage validity, the downstream value of predictive accuracy, and comparative performance.

\subsection{Robust Fractional Knapsack}
\label{app:knapsack_details}

\subsubsection{Experimental Setup.}
\noindent\paragraph{Data Generation.}
Following \cite{ho-nguyen2022risk}, we set $n=20$ and $d=15$. The coefficient matrix $\bm{\Theta}$ has entries drawn from $\mathrm{Binom}(1,0.5)\cdot\mathrm{U}(0.8,1.2)$, with the last two dimensions set to zero to represent irrelevant covariates. For each sample, we draw the contextual feature $\bm{z}\sim \mathrm{U}(0,4)^d\in\mathbb{R}^d$ and define the value of item $i$ as $c_i=((\bm{\Theta}\bm{z})_i)^2\xi_i$, where $\xi_i\sim \mathrm{U}(0.8,1.2)$ is a multiplicative disturbance. Item prices are sampled as $p_i\in\{100,101,\dots,1000\}$, and the budget as $B\in [p_{\max},\sum_i p_i-u p_{\max}]$, where $u\sim \mathrm{U}(0,1)$.

We generate 9,000 samples, split into training, uncertainty quantification, calibration, and test sets in a 3:3:2:1 ratio for \ConfRO{}, and into training, uncertainty quantification, and test sets in a 4:4:1 ratio for \ConfRS{}, as the latter requires no separate calibration set.

\noindent\paragraph{Benchmark Methods.}
For the Ellipsoid-RO baseline, we rely solely on historical observations to estimate the covariance matrix $\hat{\bm{\Sigma}}$ and use the sample mean $\bar{\bm{d}}$ as the centroid. The size parameter $\eta_\alpha$ is calibrated empirically to satisfy the target coverage level $\alpha$ on the calibration set, defining the uncertainty set as $\mathcal{U}_\alpha=\{\bm{d} \in \mathbb{R}^J :(\bm{d}-\bar{\bm{d}})^\top\hat{\bm{\Sigma}}^{-1}(\bm{d}-\bar{\bm{d}}) \le \eta_\alpha\}$.

For the clustering-based $k$-nearest-neighbor robust optimization (KNN-RO) and $k$-means robust optimization (KMeans-RO) baselines, we construct the uncertainty set for a context $\bm{z}$ by first assigning it to a local neighborhood or cluster based on feature similarity, and then forming an ellipsoidal set using the corresponding sample mean and covariance, calibrated to the prescribed coverage level. KNN-RO determines the neighborhood dynamically via nearest neighbors, whereas KMeans-RO uses precomputed clusters.

\noindent\paragraph{Predictor Specification and Residual Modeling.}
We use kernel ridge regression (KRR) with a polynomial kernel to predict the utility vector $\bm{c}$. The regularization parameter $\lambda$ and polynomial degree $d$ are selected by cross-validation based on validation mean squared error (MSE). 

For residual scaling, we further train separate residual-scale predictors using the pinball loss to estimate conditional $\alpha$-quantiles. Both models are trained for 300 epochs with a learning rate of $0.005$. The models use Gaussian error linear unit (GELU) and rectified linear unit (ReLU) activation functions, as detailed in Table~\ref{tab:residual_models}.

\begin{table}[htbp]
    \centering
    \caption{Residual-scale quantile models used for different uncertainty-set geometries in \ConfRO{}. }
    \label{tab:residual_models}
    \setlength{\tabcolsep}{4pt}
    \begin{tabularx}{\textwidth}{@{}lccc>{\raggedright\arraybackslash}X@{}}
        \toprule
        Score geometry & Hidden-layer widths & Activation & Output size & Predicted residual scale \\
        \midrule
        Box/Budget & $64,32$ & GELU & $20$ & $|c_i-\hat c_i|$ for each $i\in[n]$ \\
        Ellipsoid  & $64,16$ & ReLU & $1$ & $\|\bm c-\hat{\bm c}\|_2$ \\
        \bottomrule
    \end{tabularx}
\end{table}

\subsubsection{Further Performance Comparisons and Data Shift Analysis for Knapsack.}
\label{app:knapsack_relative_performance}

Table~\ref{tab:knapsack_relative_performance} compares the out-of-sample utilities of \ConfRO{}, KMeans-RO, KNN-RO, and the predict-then-optimize (PTO) benchmark. We report the relative improvement of the best-performing \ConfRO{} specification over each baseline as
\[
    \mathrm{Imp.}
    =
    \frac{\mathrm{Util}_{\ConfRO{}}-\mathrm{Util}_{\mathrm{baseline}}}
    {\mathrm{Util}_{\mathrm{baseline}}}
    \times 100\%.
\]
Across all coverage levels, \ConfRO{} consistently outperforms both local robust baselines while achieving utility close to PTO. The local robust baselines, however, become increasingly conservative at higher coverage levels: at targets of 90\% or above, fewer than 10\% of instances remain non-degenerate.

\begin{table}[htbp]
\centering
\caption{Out-of-sample performance comparison in the fractional knapsack experiment.}
\small
\label{tab:knapsack_relative_performance}
\setlength{\tabcolsep}{2.5pt}
\resizebox{\textwidth}{!}{
\begin{tabular}{cccccccccc}
\toprule
Coverage & \makecell{\ConfRO{}-\\Box} & \makecell{\ConfRO{}-\\Ellipsoid} & \makecell{\ConfRO{}-\\Budget} & KMeans-RO & KNN-RO & PTO & \makecell{Imp. vs\\KMeans-RO} & \makecell{Imp. vs\\KNN-RO} & \makecell{Imp. vs\\PTO} \\
\midrule
0.60 & 1307.7 & 1296.9 & 1309.4 & 898.9(29.40\%) & 1030.5(37.40\%) & 1311.2 & 45.66\% & 27.06\% & -0.14\% \\
0.70 & 1308.5 & 1296.0 & 1310.2 & 845.9(18.40\%) & 1021.0(25.20\%) & 1311.2 & 54.88\% & 28.32\% & -0.08\% \\
0.80 & 1309.5 & 1294.5 & 1310.4 & 904.9(7.60\%)  & 1063.8(13.60\%) & 1311.2 & 44.81\% & 23.18\% & -0.06\% \\
0.85 & 1309.5 & 1293.7 & 1310.6 & 843.2(7.60\%)  & 1044.0(10.20\%) & 1311.2 & 55.43\% & 25.54\% & -0.05\% \\
0.90 & 1308.7 & 1292.9 & 1310.8 & 918.3(4.80\%)  & 1051.1(5.40\%)  & 1311.2 & 42.74\% & 24.71\% & -0.03\% \\
0.95 & 1309.9 & 1291.2 & 1310.5 & 662.8(4.80\%)  & 1002.4(3.00\%)  & 1311.2 & 97.73\% & 30.74\% & -0.06\% \\
\bottomrule
\end{tabular}
}
\par\smallskip
\parbox{\textwidth}{\scriptsize \textit{Note.} Each KMeans-RO and KNN-RO entry reports realized utility, with the feasibility rate in parentheses.}
\end{table}

For the shifted evaluation, we perturb only the realized test utilities, while keeping predictions, prices, budgets, and decisions fixed; for each instance, items are ranked by predicted utility and only the top quartile is discounted. Their utility adjustment factor is
\[
\max\left\{0.30,\,
1 - 0.70\, s_{ij}\left(0.40 + 0.60 u_{ij}\right)
\right\},
\]
where $s_{ij}\in[0,1]$ denotes the normalized top-quartile rank score and $u_{ij}\in[0,1]$ the normalized residual-uncertainty rank, with $s_{ij}=0$ outside the top quartile. All utilities are additionally multiplied by independent noise drawn uniformly from $[0.985,1.015]$.
As shown in Table~\ref{tab:comparison_CRO_PTO_perturbation}, \ConfRO{} achieves higher realized utility than PTO under this perturbation.

\begin{table}[htbp]
\centering
\caption{Performance comparison under shifted evaluation with downside utility perturbations.}
\label{tab:comparison_CRO_PTO_perturbation}
\small
\begin{tabularx}{0.9\textwidth}{ *{6}{>{\centering\arraybackslash}X} }
\toprule
Coverage & \ConfRO{}-Box & \ConfRO{}-Ellipsoid & \ConfRO{}-Budget & PTO & Imp. \\
\midrule
0.60 & 1084.0 & \textbf{1097.0} & 1082.4 & 1082.4 & 1.35\% \\
0.70 & 1083.2 & \textbf{1096.7} & 1084.2 & 1082.4 & 1.32\% \\
0.80 & 1086.4 & \textbf{1096.2} & 1082.5 & 1082.4 & 1.28\% \\
0.85 & 1085.2 & \textbf{1096.2} & 1081.4 & 1082.4 & 1.28\% \\
0.90 & 1085.3 & \textbf{1096.1} & 1082.5 & 1082.4 & 1.27\% \\
0.95 & 1086.7 & \textbf{1095.4} & 1080.7 & 1082.4 & 1.20\% \\
\bottomrule
\end{tabularx}
\end{table}

\subsubsection{Parameter Mapping Computation.}

\paragraph{Reformulation of the \ConfRO{} Model.} Under the score function $s(\bm{c},\hat{\bm{c}})=\|(\bm{c}-\hat{\bm{c}})/\hat{\bm{r}}\|_1$, and explicitly considering the non-negative support set of item utilities $\mathcal{C}=\mathbb{R}_+^n$, the conformal uncertainty set is defined as $\mathcal{C}(\theta)=\{\bm{c}\in\mathbb{R}_+^n:\sum_{i=1}^n\frac{|c_i-\hat{c}_i|}{\hat{r}_i}\le\theta\}$. The reformulation of the \ConfRO{} model for the knapsack problem can be written as follows.
\begin{equation*}
\begin{aligned}
\min_{\bm{x}, k, \bm{\mu}} \quad & -\hat{\bm{c}}^\top \bm{x} + \theta k + \sum_{j=1}^n \frac{\hat{c}_j}{\hat{r}_j} \mu_j \\
\text{s.t.} \quad &\bm{p}^\top \bm{x} \le B, \\
&k + \mu_j \ge \hat{r}_j x_j, \quad \forall j \in [n], \\
&\bm{x}\in [0,1]^n, k \ge 0, \bm{\mu} \ge 0.
\end{aligned}
\tag{\ConfRO{}-KP} \label{eq:cro_kp_reformulation}
\end{equation*}

\paragraph{Computing the Parameter Pairs.}
The fractional knapsack problem can be verified to satisfy the conditions of Theorem~\ref{thm:crs_to_cro}. Since the parameter correspondence need not be unique, we construct it from the \ConfRO{} radius $\theta$ to the corresponding \ConfRS{} target $\tau$. The \ConfRO{} problem admits the equivalent representation
\begin{align*}
    \min_{\bm{x}\in\mathcal{X},k\ge 0,\tau} \big\{\tau+k\theta\ | \sup_{\bm{c}\in\mathcal{C}}\{f(\bm{x},\bm{c})-k\cdot s(\bm{c},\hat{\bm{c}})\}\le\tau\big\}.
\end{align*}

By Proposition~\ref{prop:cro_to_crs}, if $(\bm{x}^*,k^*,\tau^*)$ is optimal for this \ConfRO{} problem, then $\bm{x}^*$ also solves the \ConfRS{} problem with target $\tau^*$. For fixed $\bm{x}$, define
\[
\Phi_{\bm{x}}(k):
=\sup_{\bm{c}\in\mathbb R_+^n}\left\{f(\bm{x},\bm{c})-k\,s(\bm{c},\hat{\bm{c}})\right\}
=\sup_{\bm{c}\in\mathbb R_+^n}\left\{-\bm{c}^\top \bm{x}-k\left\|\frac{\bm{c}-\hat{\bm{c}}}{\hat{\bm{r}}}\right\|_1\right\}.
\]
Exploiting separability across items yields
\[
\Phi_{\bm{x}}(k)=-\bm{x}^\top\hat{\bm{c}}+\sum_{j=1}^n\frac{\hat c_j}{\hat r_j}\left(\hat r_jx_j-k\right)_+.
\]
Accordingly, in the reformulation \eqref{eq:cro_kp_reformulation}, $\mu_j^*=(\hat r_jx_j^*-k^*)_+$. Thus, after solving \eqref{eq:cro_kp_reformulation} for a given $\theta$, the corresponding \ConfRS{} target is

\[
    \tau^*=-(\bm{x}^*)^\top\hat{\bm{c}}+\sum_{j=1}^n\frac{\hat c_j}{\hat r_j}\mu_j^*.
\]
The mapping $\theta\mapsto\tau^*(\theta)$ may not be unique when the subgradient $\partial\kappa^*(\tau)$ is a set rather than a singleton. Consequently, the approach described above recovers only one possible mapping curve from the set of feasible solutions.

\subsubsection{Reformulations of \ConfRS{} and DRS.}

Under the score function
$s(\bm{c},\hat{\bm{c}})=\|\bm{c}-\hat{\bm{c}}\|_1$,
the two models admit the following reformulations:\par\noindent
\begin{minipage}[t]{0.47\textwidth}
\vspace{0pt}
\centering
\setlength{\abovedisplayskip}{0pt}
\setlength{\abovedisplayshortskip}{0pt}
\begin{equation*}\tag{\ConfRS{}-KP}
\begin{aligned}
    \min_{\bm{x},k}\quad & k\\
    \text{s.t.}\quad
    & \bm{p}^\top \bm{x} \le B,\\
    & x_i \le k,\quad \forall i\in[n],\\
    & \bm{x}^\top \hat{\bm{c}}+\tau \ge 0,\\
    & \bm{x}\in[0,1]^n,\quad k\ge0.
\end{aligned}
\end{equation*}
\end{minipage}\hfill\begin{minipage}[t]{0.47\textwidth}
\vspace{0pt}
\centering
\setlength{\abovedisplayskip}{0pt}
\setlength{\abovedisplayshortskip}{0pt}
\begin{equation*}\tag{DRS-E-KP}
\begin{aligned}
    \min_{\bm{x},k}\quad & k\\
    \text{s.t.}\quad
    & \bm{p}^\top \bm{x} \le B,\\
    & x_i \le k,\quad \forall i\in[n],\\
    & \frac{1}{S}\sum_{s=1}^S
      \bm{x}^\top\tilde{\bm{c}}_s+\tau \ge 0,\\
    & \bm{x}\in[0,1]^n,\quad k\ge0.
\end{aligned}
\end{equation*}
\end{minipage}
\par

The only structural difference is that \ConfRS{} uses the context-conditioned prediction
$\hat{\bm{c}}$, whereas DRS-E uses the empirical average over historical samples
$\{\tilde{\bm{c}}_s\}_{s=1}^S$.

\subsection{Real-Data Case Study on Inventory Management: \ConfRO{}}
\label{app:case_study}

We implement \eqref{prob:inv} using a joint seven-day demand trajectory and a static robust replenishment schedule committed at the initial planning epoch. Joint modeling captures dependence across periods, while the static schedule reflects the operational requirement that replenishment decisions be fixed in advance.
We consider Box, Ellipsoid, and Budget score structures, each in an adaptive variant using learned residual scaling and a static variant without residual scaling. The corresponding robust counterparts follow from standard robust optimization reformulations.

\subsubsection{Experimental Settings.}
\label{app:cs_setting}

\paragraph{Rich Contextual Information.}
Contextual features in the dataset \textsf{FreshRetailNet-50K} include hierarchical product/location identifiers
(\texttt{city\_id}, \texttt{store\_id}, \texttt{third\_level}--\texttt{product\_id});
sales and inventory metrics
(\texttt{hours\_sale}, \texttt{sale\_amount}, \texttt{stock\_hour6\_22}, \texttt{hours\_stock\_status});
and business, calendar, and weather covariates, including
\texttt{discount}, \texttt{activity\_flag}, \texttt{holiday\_flag},
\texttt{day-of-week}, \texttt{avg\_temperature}, \texttt{avg\_humidity},
\texttt{avg\_wind\_level}, and \texttt{precip}.

\paragraph{Assessment of Exchangeability.} Our conformal guarantees require that the calibration instances and the test instances be exchangeable (Assumption~\ref{asp:exchangeability}). In this case study, we support the plausibility of this requirement by restricting both calibration and evaluation to the same 7-day holdout window and treating each store-SKU pair's demand trajectory over the planning horizon, together with its associated covariates, as a single cross-sectional instance. Because the calibration/test split is randomized across a large pool of store-SKU pairs observed over the same calendar days and processed through the same forecasting and preprocessing pipeline, the resulting collection of instances has no intrinsic ordering and is plausibly permutation-invariant. Shared exogenous factors, such as city-level weather, holidays, or platform-wide promotions, can introduce dependence across pairs; however, such dependence is largely symmetric within the holdout window and therefore is consistent with exchangeability, which is weaker than i.i.d. sampling.

\paragraph{Prediction Details.}
Table~\ref{tab:forecasting_implementations} summarizes the implementation settings of the Temporal Fusion Transformer (\textsf{TFT}) and \textsf{DLinear} models used in the case study. The Similar Sample Average (\textsf{SSA}) baseline matches historical observations using recency, holiday status, day of week, precipitation, and discount information. Let $s_{ij}$ denote the resulting similarity score between target day $i$ and historical day $j$. The forecast is the softmax-weighted average $\hat d_i=(\sum_j\exp(s_{ij})d_j)/(\sum_\ell\exp(s_{i\ell}))$.

\begin{table}[htbp]
    \centering
    \small
    \caption{Forecasting implementations for the inventory case study.}
    \label{tab:forecasting_implementations}
    \renewcommand{\arraystretch}{1.15}
    \setlength{\tabcolsep}{6pt}
    \begin{tabularx}{\textwidth}{@{}lXX@{}}
        \toprule
        Method & Model specification & Training and forecasting settings \\
        \midrule
        \textsf{TFT}
        & Attention-based multi-horizon model using static and time-varying covariates; hidden size $64$, four attention heads, dropout $0.1$
        & Lookback $63$, horizon $7$, QuantileLoss, batch size $64$, $30$ epochs, Ranger optimizer \\

        \textsf{DLinear}
        & Trend--seasonal decomposition with moving-average window $28$ and seven input channels; dropout $0.05$
        & Lookback $62$, horizon $7$, mean absolute error loss, batch size $1024$, $6$ epochs \\
        \bottomrule
    \end{tabularx}
\end{table}

\subsubsection{Out-of-Sample Cost Reduction Relative to Baselines in \ConfRO{} Experiments.}
\label{app:case_study_relative_performance}
Table~\ref{tab:case_study_relative_performance} reports the out-of-sample costs of the best-performing \ConfRO{} specification and benchmark methods under different coverage levels.
\begin{table}[htbp]
\centering
\caption{Out-of-sample cost reduction of \ConfRO{} in the case study.}
\label{tab:case_study_relative_performance}
\small
\setlength{\tabcolsep}{4pt}
\begin{tabular}{cccccccc}
\toprule
\makecell{Coverage\\level}
& \makecell{Best\\\ConfRO{}}
& KNN-RO
& Ellipsoid-RO
& PTO
& \makecell{Imp. vs\\KNN-RO}
& \makecell{Imp. vs\\Ellipsoid-RO}
& \makecell{Imp. vs\\PTO} \\
\midrule
0.5 & 49.653 & 52.405 & 60.630 & 54.539 & 5.25\%  & 18.10\% & 8.96\% \\
0.6 & 49.672 & 53.328 & 61.697 & 54.539 & 6.86\%  & 19.49\% & 8.92\% \\
0.7 & 49.755 & 55.080 & 63.075 & 54.539 & 9.67\%  & 21.12\% & 8.77\% \\
0.8 & 49.938 & 57.933 & 65.132 & 54.539 & 13.80\% & 23.33\% & 8.44\% \\
0.9 & 50.318 & 63.192 & 68.884 & 54.539 & 20.37\% & 26.95\% & 7.74\% \\
\bottomrule
\end{tabular}
\end{table}

As Table~\ref{tab:case_study_relative_performance} shows, \ConfRO{} consistently achieves lower costs than the KNN-RO, Ellipsoid-RO, and PTO baselines across all coverage levels. The largest gains are relative to Ellipsoid-RO, highlighting the benefit of context-dependent, score-calibrated uncertainty sets. Moreover, the relative cost reduction generally increases with the target coverage level, indicating a larger advantage under more stringent robustness requirements.

\subsection{Real-Data Case Study on Inventory Management: \ConfRS{}}
\label{app:case_study_rs}
This section presents the two satisficing formulations used in Section~\ref{sec:cs_rs_exp}, describes the experimental pipeline, and derives their reformulations. The data, prediction results, and hyperparameter settings for the inventory problem are identical to those used in the \ConfRO{} experiments.

\subsubsection{Satisficing Formulations with Common $\ell_1$ Score.}
The \ConfRS{} model used in the experiment is given by \eqref{eq:ConfRS_Inv} with score $s(\bm w,\bm {\hat{w}^r})=\|\bm w-\bm{\hat{w}^r}\|_1$, where $r$ denotes one of the three demand forecasting methods \textsf{TFT}, \textsf{DLinear}, and \textsf{SSA}.

For DRS, let $\widehat P=S^{-1}\sum_{s=1}^{S}\delta_{\bm w^s}$ be the empirical distribution of the historical demand trajectories, and let $W_1$ denote the 1-Wasserstein distance induced by $d(\bm w,\bm a)=\norm{\bm w-\bm a}_1$. Writing $\mathcal P(\mathfrak{D})=\{P:\operatorname{supp}(P)\subseteq\mathfrak{D}\}$, the matched DRS model is
\begin{equation*}
\begin{aligned}
  \min_{\bm u,\bm t,\bm q}\quad
      &\sum_{k=0}^{T-1}q_k\\
  \text{s.t.}\quad
      &\sum_{k=0}^{T-1}(cu_k+t_k)\le\tau,\\
      &\sup_{P\in\mathcal P(\mathfrak{D})}
        \left\{\mathbb{E}_P[R_k(\bm u,\bm W)]
                    -q_kW_1(P,\widehat P)\right\}\le t_k,
        &&k=0,\ldots,T-1,\\
      &\bm u,\bm t,\bm q\ge0.
\end{aligned}
\tag{DRS-Inv}\label{eq:inventory_drs}
\end{equation*}
Thus, the two models use the same target, stage allocation, and unscaled $\ell_1$ deviation; they differ only in whether fragility is anchored at a context-specific point forecast or at the historical empirical distribution. In both cases, the system-level fragility is $K=\sum_{k=0}^{T-1}q_k$.

\subsubsection{Degeneracy of the Generic Formulation and Support Augmentation.}
\label{app:rs_support}
Consider first the unbounded domain $\mathbb{R}_+^T$. For any reference anchor $\bm a$ and coordinate $j\le k$, let $\bm w(\lambda)=\bm a+\lambda\bm e_j$. As $\lambda\to\infty$,
\begin{equation}
    R_k(\bm u,\bm w(\lambda)) -q_k\|\bm w(\lambda)-\bm a\|_1 =(p-q_k)\lambda+O(1).
    \label{eq:rs_unbounded_tail}
\end{equation}
Hence, a finite stage certificate requires $q_k\ge p$. Since $R_k(\bm u,\cdot)$ is $p$-Lipschitz under the $\ell_1$ metric ($p\ge h$), $q_k=p$ is sufficient whenever the corresponding zero-distance reference cost satisfies the target. Therefore, every certificate-feasible instance on the unbounded domain satisfies $K^*=\sum_{k=0}^{T-1}q_k^*=Tp$, so the generic formulation becomes uninformative. We therefore restrict the certificates to a prespecified operational envelope. For store--SKU $i$, let $w^{\mathrm{hist}}_{id}$ denote the latent demand on historical day $d$, and define
\begin{equation}
    M_i=\max_{d=1,\ldots,90}w^{\mathrm{hist}}_{id},
    \qquad
    \overline w_{ij}=\max\{\hat w_{ij},M_i\},
    \qquad
    \mathfrak{D}_i=\prod_{j=0}^{T-1}[0,\overline w_{ij}].
    \label{eq:item_max_support}
\end{equation}
This envelope uses only pre-test information and contains both the current forecast and the historical DRS anchors by construction. The construction introduces no additional tuned support parameter. It is neither a conformal prediction region nor a claim about the true demand support.

\subsubsection{Reformulations of the \ConfRS{} and DRS Models.}
\label{app:rs_reformulations}

\paragraph{Common Inner Problem.}
Suppress the instance and predictor indices, and define
\begin{equation}
    m_k=x_0+\sum_{j=0}^{k}u_j,\qquad
    A_k^a=\sum_{j=0}^{k}a_j,\qquad
    \overline A_k=\sum_{j=0}^{k}\overline w_j.
    \label{eq:rs_prefix_notation}
\end{equation}
For an anchor $\bm a\in\mathfrak{D}$ and $q\ge0$, separability of the $\ell_1$ distance gives, for every $j\le k$,
\begin{align}
  \sup_{0\le w_j\le\overline w_j}
    \{-hw_j-q|w_j-a_j|\}
    &=\max\{-ha_j,-qa_j\},\label{eq:rs_holding_endpoint}\\
  \sup_{0\le w_j\le\overline w_j}
    \{pw_j-q|w_j-a_j|\}
    &=\max\{pa_j,
       p\overline w_j-q(\overline w_j-a_j)\}.
  \label{eq:rs_backlog_endpoint}
\end{align}
The coordinates $j>k$ can be set to $a_j$ without changing the stage cost or incurring a distance penalty.  The slope comparison is common to every coordinate in the prefix, so summing
\eqref{eq:rs_holding_endpoint}--\eqref{eq:rs_backlog_endpoint} yields
\begin{equation}
\label{eq:rs_inner_identity}
\begin{split}
  &\sup_{\bm w\in\mathfrak{D}}
    \{R_k(\bm u,\bm w)-q\|\bm w-\bm a\|_1\}\\
  &\quad=\max\bigl\{
       hm_k-hA_k^a,
       hm_k-qA_k^a,
       -pm_k+pA_k^a,
       -pm_k+p\overline A_k-q(\overline A_k-A_k^a)
       \bigr\}.
\end{split}
\end{equation}

\paragraph{\ConfRS{} Reformulation.}
For predictor $r$, let $\widehat A_k^{\,r}=\sum_{j=0}^{k}\widehat w_j^{\,r}$.  Substituting $\bm a=\widehat{\bm w}^{\,r}$ into \eqref{eq:rs_inner_identity} shows that \eqref{eq:ConfRS_Inv} is exactly the linear program
\begingroup
\allowdisplaybreaks[4]
\begin{align}
  \min_{\bm u,\bm t,\bm q}\quad
    &\sum_{k=0}^{T-1}q_k
    \label{eq:confrs_exact_lp}\\
  \text{s.t.}\quad
    &\sum_{k=0}^{T-1}(cu_k+t_k)\le\tau,\notag\\
    &t_k\ge hm_k-h\widehat A_k^{\,r},
       &&k=0,\ldots,T-1,\notag\\
    &t_k\ge hm_k-q_k\widehat A_k^{\,r},
       &&k=0,\ldots,T-1,\notag\\
    &t_k\ge-pm_k+p\widehat A_k^{\,r},
       &&k=0,\ldots,T-1,\notag\\
    &t_k\ge-pm_k+p\overline A_k
                  -q_k(\overline A_k-\widehat A_k^{\,r}),
       &&k=0,\ldots,T-1,\notag\\
    &\bm u,\bm t,\bm q\ge0.\notag
\end{align}
\endgroup

\paragraph{DRS Reformulation.}
On the compact support $\mathfrak{D}$, the penalized Wasserstein identity gives
\begin{equation}
\begin{split}
  &\sup_{P\in\mathcal P(\mathfrak{D})}
  \left\{\mathbb{E}_P[R_k(\bm u,\bm W)]
       -q_kW_1(P,\widehat P)\right\}=\frac1S\sum_{s=1}^{S}
      \sup_{\bm w\in\mathfrak{D}}
      \left\{R_k(\bm u,\bm w)
       -q_k\|\bm w-\bm w^s\|_1\right\}.
\end{split}
\label{eq:drs_wasserstein_identity}
\end{equation}
Let $A_{ks}=\sum_{j=0}^{k}w_j^s$. Applying \eqref{eq:rs_inner_identity} to each anchor sample and introducing an epigraph variable $\zeta_{ks}$ for each maximum of samples gives the exact linear program
\begingroup
\allowdisplaybreaks[4]
\begin{align}
  \min_{\bm u,\bm t,\bm q,\bm\zeta}\quad
    &\sum_{k=0}^{T-1}q_k
    \label{eq:drs_exact_lp}\\
  \text{s.t.}\quad
    &\sum_{k=0}^{T-1}(cu_k+t_k)\le\tau,\notag\\
    &\frac1S\sum_{s=1}^{S}\zeta_{ks}\le t_k,
       &&k=0,\ldots,T-1,\notag\\
    &\zeta_{ks}\ge hm_k-hA_{ks},
       &&k=0,\ldots,T-1,\ s=1,\ldots,S,\notag\\
    &\zeta_{ks}\ge hm_k-q_kA_{ks},
       &&k=0,\ldots,T-1,\ s=1,\ldots,S,\notag\\
    &\zeta_{ks}\ge-pm_k+pA_{ks},
       &&k=0,\ldots,T-1,\ s=1,\ldots,S,\notag\\
    &\zeta_{ks}\ge-pm_k+p\overline A_k
                  -q_k(\overline A_k-A_{ks}),
       &&k=0,\ldots,T-1,\ s=1,\ldots,S,\notag\\
    &\bm u,\bm t,\bm q,\bm\zeta\ge0.\notag
\end{align}
\endgroup
Note that for each $(k,s)$, we compute the maximum over the holding and backlog branches before taking the sample average. This specific order is essential, as swapping the average and the maximum would fundamentally alter the objective.

\subsubsection{Numerical Results and Analysis.}
\label{app:rs_results}

\paragraph{Metrics.}
We evaluate performance using realized cost and feasibility rate.
Since replenishment has no capacity upper bound, the inventory problem itself is always physically feasible. The reported feasibility rate instead measures the fraction of the 500 instances for which a method admits an optimal finite-$K$ certificate. For \ConfRS{}, this is equivalent to the optimized inventory cost at the corresponding forecast being no greater than $\tau_i$; for DRS, it is equivalent to the optimized empirical mean cost over the 12 historical trajectories being no greater than $\tau_i$. These equivalences follow by evaluating the certificate at zero distance for necessity and taking $q_k=p$ for sufficiency.
As for realized cost, we compare it using the common feasible set
\begin{equation}
  \mathcal I (\rho)
    =\{i:\;\text{all \ConfRS{} variants and DRS are certificate feasible at }\tau_i(\rho)\},\quad N(\rho)=|\mathcal I(\rho)|.
  \label{eq:rs_common_feasible_set}
\end{equation}
For method $m$, the table reports the mean of $C(\bm u_i^m,\bm w_i^{\mathrm{test}})/V_i^0$ over this common set and the improvement over the DRS model.

\paragraph{Results.}
Table~\ref{tab:case_study_rs_detailed} reports the complete comparison. The feasibility rate is computed separately for each method over all 500 instances; realized cost and improvement use only the common set $\mathcal I(\rho)$.

\begin{table}[htbp]
  \centering
  \caption{The feasibility and realized performance of the three \ConfRS{} variants and DRS.}
  \label{tab:case_study_rs_detailed}
  \small
  \setlength{\tabcolsep}{3.0pt}
  \begin{tabular}{cc cccc cccc ccc}
    \toprule
    & & \multicolumn{4}{c}{Feasible ratio (\%)}
      & \multicolumn{4}{c}{Realized relative cost}
      & \multicolumn{3}{c}{Improvement over DRS (\%)}\\
    \cmidrule(lr){3-6}\cmidrule(lr){7-10}\cmidrule(lr){11-13}
    $\rho$ & $N(\rho)$
      & \textsf{TFT} & \textsf{DLinear} & \textsf{SSA} & DRS
      & \textsf{TFT} & \textsf{DLinear} & \textsf{SSA} & DRS
      & \textsf{TFT} & \textsf{DLinear} & \textsf{SSA}\\
    \midrule
    1.05 &  71 & 67.2 & 63.4 & 72.6 & 14.8
         & \textbf{1.461} & 1.561 & 1.642 & 2.322
         & \textbf{37.1} & 32.8 & 29.3\\
    1.10 &  93 & 73.6 & 70.4 & 78.0 & 19.0
         & \textbf{1.419} & 1.497 & 1.573 & 2.174
         & \textbf{34.7} & 31.1 & 27.6\\
    1.15 & 113 & 78.0 & 75.8 & 84.2 & 22.8
         & \textbf{1.399} & 1.448 & 1.519 & 2.067
         & \textbf{32.3} & 29.9 & 26.5\\
    1.20 & 140 & 81.8 & 79.4 & 87.0 & 28.4
         & \textbf{1.388} & 1.427 & 1.499 & 1.977
         & \textbf{29.8} & 27.8 & 24.2\\
    1.30 & 200 & 90.0 & 86.8 & 92.6 & 40.2
         & \textbf{1.384} & 1.393 & 1.438 & 1.813
         & \textbf{23.7} & 23.2 & 20.7\\
    1.40 & 258 & 92.4 & 90.8 & 95.4 & 52.0
         & 1.429 & \textbf{1.422} & 1.441 & 1.698
         & 15.8 & \textbf{16.2} & 15.1\\
    \bottomrule
  \end{tabular}
  \par\smallskip
  \parbox{\textwidth}{\scriptsize \textit{Note.} \textsf{TFT}, \textsf{DLinear}, and \textsf{SSA} denote \ConfRS{} using the corresponding point forecasts. The feasible rate is computed separately for each method over all 500 instances and records the existence of an optimal finite-$K$ certificate. Realized relative costs are sample means over $\mathcal I(\rho)$ and are normalized by $V_i^0$.}
  \vspace{-1em}
\end{table}

The {feasibility} increases monotonically as the target is relaxed, and all three \ConfRS{} variants substantially outperform DRS across all target ratios. However, prediction accuracy does not directly determine feasibility: the \textsf{SSA}-based variant achieves the highest feasibility rate.
On $\mathcal I(\rho)$, every \ConfRS{} variant also achieves a lower mean {realized relative cost} than DRS. The improvement ranges from $29.3\%$--$37.1\%$ at $\rho=1.05$ and remains $15.1\%$--$16.2\%$ at $\rho=1.40$. The most accurate predictor, \textsf{TFT}, yields the lowest realized cost at most target levels, while the differences between the three variants narrow as the target relaxes. Overall, these results show that prediction-centered satisficing consistently outperforms the historical empirical reference across forecasting methods.

\subsection{Robust Facility Location: An Additional Synthetic Study}\label{app:location}

\subsubsection{Setting and Experimental Design.}
Based on the setup in \cite{baron2011facility}, we further consider a multiperiod facility location problem with candidate facilities $i \in [I]$, demand zones $j \in [J]$, and periods $t\in [T]$.

Contextual features $\bm{z}_t$ inform the uncertain demand $\bm{d}_t$ in each period. Decisions comprise facility openings $\bm{y}\in \{0,1\}^I$, with fixed costs $f_i$ and operational allocations $\bm{x}=(x_{ijt})$, where $x_{ijt}$ is the fraction of demand in zone $j$ served by facility $i$ in period $t$, with unit cost $c_{ij,t}$. Let $\mathcal{C}_t(\alpha)\subseteq \mathbb{R}^J$ denote the \emph{conformal uncertainty set} for the demand forecast $\hat{\bm{d}}_t$ at coverage level $\alpha$ in period $t$. The problem is formulated as
\begin{align*}
    \min_{\tau, \bm{y},\bm{x}} &\; \tau\\
    \text{subject to } &\;
    \tau \ge \max_{\substack{ (\bm{d}_1, \dots, \bm{d}_T) \in \Pi_{t=1}^T \mathcal{C}_t(\alpha)}} \left\{ \sum_{i \in [I]} f_i y_i +  \sum_{t=1}^{T} \sum_{i \in [I]} \sum_{j \in [J]} c_{ij,t} d_{jt} x_{ij,t} \right\}\\
    &\;  \sum_{i \in [I]} x_{ij,t} \ge 1, \quad \forall j \in [J],\; t=1,\dots,T\\
    &\;  \sum_{j \in [J]} x_{ij,t} d_{j,t} \le M_{i,t} \, y_i, \quad \forall \bm{d}_t\in \mathcal{C}_t(\alpha),\  \forall i \in [I],\; t=1,\dots,T\\
    &\;  y_i \in \{0,1\}, \quad \forall i \in [I]\\
    &\;  x_{ij,t} \ge 0, \quad \forall i \in [I],\; j \in [J],\; t=1,\dots,T.
\end{align*}

We simulate $I=10$ candidate facilities and $J=30$ demand zones over $T=5$ periods. Their locations are sampled independently and uniformly from the unit square. Let $\ell_{ij}$ denote the Euclidean distance between facility $i$ and demand zone $j$. For every $t\in[T]$, we set capacity coefficient to $M_{i,t} = 2,000$, the time-invariant service cost to
$c_{ij,t}=10(\ell_{ij}+0.1)\delta_{ij}$, where $\delta_{ij}\overset{\mathrm{i.i.d.}}{\sim}\mathrm{U}(0.8,1.2)$, and the fixed facility cost to $f_i=3000\gamma_i$, where $\gamma_i\overset{\mathrm{i.i.d.}}{\sim}\mathrm{U}(0.6,1.4)$. We generate features $\bm{z}\sim N(\bm{0},\bm{I}_{n_d})\in \mathbb{R}^{n_d}$ with $n_d=20$, and demand according to $d_j(\bm{z})=(\frac{1}{\sqrt{n_d}}(\bm{B}\bm{z})_j+3)^5\xi_j/5$, where $\bm B\in\mathbb{R}^{J\times n_d}$ with $B_{jk}\overset{\mathrm{i.i.d.}}{\sim}\mathrm{U}(-1,1)$ for $k\le n_d-2$ and $B_{jk}=0$ otherwise,  $\xi_j\sim U(1-\epsilon,1+\epsilon)$,  and $\epsilon=0.1$. We generate 20,000 samples and split them into training, uncertainty quantification, and testing sets in a 4:2:2 ratio.

\subsubsection{Numerical Results.}
Table~\ref{tab:coverage_comparison} compares realized out-of-sample coverage across the six methods. The three \ConfRO{} variants closely track the target at every value of $\alpha$, whereas KNN-RO and KMeans-RO generally under-cover, particularly at the higher target levels.

\begin{table}[htb]
    \centering
    \begin{minipage}[t]{0.51\textwidth}
        \vspace{0pt}
        \centering
        \captionof{table}{Out-of-sample coverage by method.}
        \label{tab:coverage_comparison}
        \begingroup
        \scriptsize
        \setlength{\tabcolsep}{1.5pt}
        \renewcommand{\arraystretch}{1.05}
        \begin{tabular}{@{}ccccccc@{}}
            \toprule
            $\alpha$ & \makecell{\ConfRO{}-\\Box} & \makecell{\ConfRO{}-\\Ellipsoid} & \makecell{\ConfRO{}-\\Budget} & \makecell{Ellipsoid-\\RO} & \makecell{KNN-\\RO} & \makecell{KMeans-\\RO} \\
            \midrule
            0.60 & 0.600 & 0.601 & 0.596 & 0.579 & 0.599 & 0.525 \\
            0.70 & 0.719 & 0.704 & 0.702 & 0.690 & 0.692 & 0.631 \\
            0.80 & 0.807 & 0.801 & 0.804 & 0.790 & 0.778 & 0.736 \\
            0.85 & 0.860 & 0.848 & 0.857 & 0.843 & 0.819 & 0.797 \\
            0.90 & 0.909 & 0.906 & 0.909 & 0.900 & 0.863 & 0.850 \\
            0.95 & 0.948 & 0.950 & 0.948 & 0.947 & 0.907 & 0.911 \\
            \bottomrule
        \end{tabular}
        \endgroup
    \end{minipage}
    \hfill
    \begin{minipage}[t]{0.47\textwidth}
        \vspace{0pt}
        \centering
        \captionof{table}{Value of prediction in \ConfRO{}.}
        \label{tab:prediction_effectiveness}
        \begingroup
        \scriptsize
        \setlength{\tabcolsep}{1pt}
        \renewcommand{\arraystretch}{1.05}
        \begin{tabular}{@{}lrrr@{}}
            \toprule
            {Prediction method} & {MSE} & \makecell{Actual\\coverage} & \makecell{Mean\\objective} \\
            \midrule
            KRR-RBF            & 10.17  & 0.907 & 43,319.4 \\
            KRR-Poly           & 21.12  & \textbf{0.901} & \textbf{42,354.6} \\
            MLP               & 40.11  & 0.896 & 42,389.1 \\
            SVR               & 151.90 & 0.895 & 42,511.6 \\
            OLS               & 757.61 & 0.899 & 48,916.0 \\
            LASSO             & 768.82 & 0.915 & 49,514.9 \\
            \bottomrule
        \end{tabular}
        \endgroup
    \end{minipage}
    \vspace{-1em}
\end{table}

At $90\%$ target coverage, Table~\ref{tab:prediction_effectiveness} compares six upstream predictors within \ConfRO{}: ordinary least squares (OLS), the least absolute shrinkage and selection operator (LASSO), support vector regression (SVR), a multilayer perceptron (MLP), and kernel ridge regression (KRR) with radial basis function (RBF) and polynomial kernels (KRR-RBF and KRR-Poly, respectively). Hyperparameters for the tunable predictors are selected by five-fold cross-validation. KRR-RBF has the lowest prediction MSE, whereas KRR-Poly yields the lowest mean robust objective while attaining $0.901$ actual coverage. We therefore use KRR-Poly in the main facility-location comparison. This result illustrates that downstream decision quality is not determined by prediction MSE alone.

For each uncertain objective or capacity constraint of the form $\bm{a}^{\top}\bm{d}\le b$, the Box, Ellipsoid, and Budget score sets are implemented through their standard support functions. The resulting robust counterparts are linear for Box and Budget sets and second-order conic for Ellipsoid sets; the same support-function forms are summarized in the inventory case study.

\begin{figure}[htb]
    \centering
    \caption{Performance gaps of different methods under various coverage levels.}
    \label{fig:performance_gap_cro}
    \includegraphics[width=0.7\textwidth]{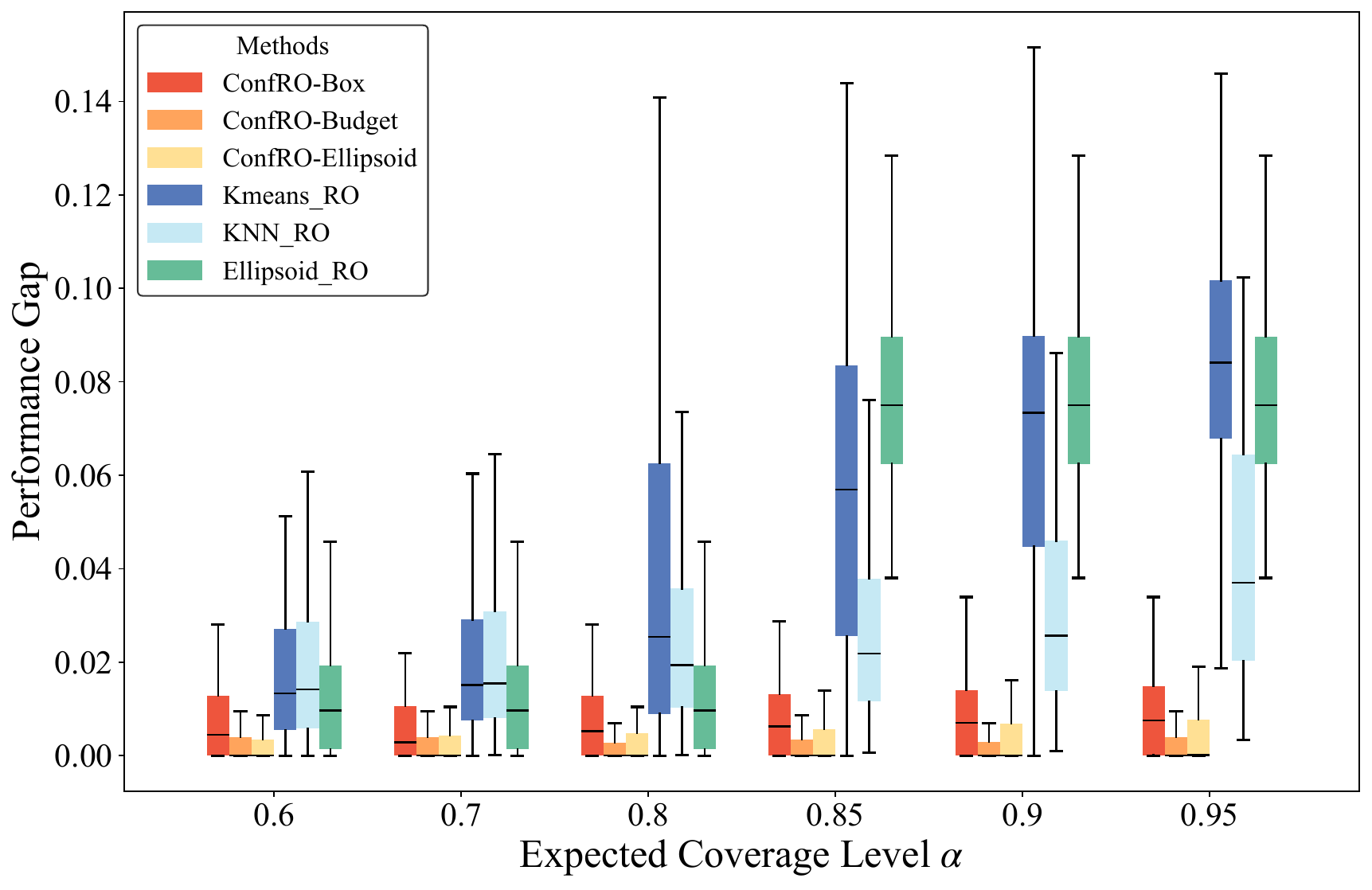}
\end{figure}

To compare decision performance, we define the suboptimality gap as $\Delta=(V_m-V_D)/V_D$, where $V_m$ is the realized cost of method $m$ and $V_D$ is the deterministic cost under realized demand. Figure~\ref{fig:performance_gap_cro} shows that \ConfRO{} achieves lower mean and dispersion of the suboptimality gap across the considered coverage levels.

\end{APPENDICES}

\end{document}